\documentclass{article} 
\usepackage{iclr2024_conference,times}

\usepackage{amssymb, amsmath, amsfonts,mathrsfs}
\usepackage{appendix}
\usepackage{graphicx}
\usepackage{color}
\usepackage[colorlinks,citecolor=blue,filecolor=black,linkcolor=blue,urlcolor=black]{hyperref}

\usepackage{hyperref}

\usepackage{amsthm}
\newtheorem{theorem}{Theorem}[section]
\newtheorem{lemma}[theorem]{Lemma}

\newtheorem{prop}[theorem]{Proposition}


\usepackage{amsmath,amsfonts,bm}









\def\eqref#1{equation~\ref{#1}}









\def\1{\bm{1}}










\DeclareMathAlphabet{\mathsfit}{\encodingdefault}{\sfdefault}{m}{sl}
\SetMathAlphabet{\mathsfit}{bold}{\encodingdefault}{\sfdefault}{bx}{n}











\newcommand{\R}{\mathbb{R}}



\usepackage{hyperref}
\usepackage{url}

\title{Architectural Strategies for the optimization of Physics-Informed Neural Networks}


\author{Hemanth Saratchandran \thanks{correspondance to Hemanth Saratchandran: hemanth.saratchandran@adelaide.edu.au} \\
Australian Institute of Machine Learning\\
University of Adelaide\\
\And
Shin-Fang Chng \\
Australian Institute of Machine Learning\\
University of Adelaide\\
\AND
Simon Lucey \\
Australian Institute of Machine Learning\\
University of Adelaide\\
}

\iclrfinalcopy 
\begin{document}

\maketitle

\begin{abstract}
Physics-informed neural networks (PINNs) offer a promising avenue for tackling both forward and inverse problems in partial differential equations (PDEs) by incorporating deep learning with fundamental physics principles. Despite their remarkable empirical success, PINNs have garnered a reputation for their notorious training challenges across a spectrum of PDEs. In this work, we delve into the intricacies of PINN optimization from a neural architecture perspective. Leveraging the Neural Tangent Kernel (NTK), our study reveals that Gaussian activations surpass several alternate activations when it comes to effectively training PINNs. Building on insights from numerical linear algebra, we introduce a preconditioned neural architecture, showcasing how such tailored architectures enhance the optimization process. Our theoretical findings are substantiated through rigorous validation against established PDEs within the scientific literature.
\end{abstract}

\section{Introduction}

Physics-informed neural networks (PINNs) stand as a pioneering approach at the crossroads of deep learning and physics-based modeling. These innovative neural architectures seamlessly integrate the fundamental principles of physics into their learning process. By combining the predictive capabilities of neural networks with the governing equations that describe physical phenomena, PINNs facilitate the efficient and accurate modeling of complex partial differential equations (PDEs), even when data is limited or noisy. This innovative approach has demonstrated significant successes across a spectrum of domains, including fluid dynamics \citep{raissi2020hidden, sun2020surrogate, jin2021nsfnets}, stochastic differential equations \citep{zhang2020learning}, bioengineering \citep{sahli2020physics}, and climate modeling \citep{de2021assessing, zhu2022physics}.

Practitioners in the field of Physics-Informed Neural Networks (PINNs) have consistently encountered a shared challenge: training instability leading to suboptimal predictions, particularly when dealing with physics problems exhibiting high-frequency solutions. This issue has also surfaced in the computer vision domain, where it is partly attributed to spectral bias—a tendency for neural networks to favor low-frequency solutions over higher-frequency ones \cite{rahaman2019spectral, xu2022overview}. To address this challenge, vision practitioners have adopted various techniques such as incorporating positional encodings \citep{tancik2020fourier}, sinusoidal activations \citep{sitzmann2020implicit}, Gaussian activations \citep{ramasinghe2022beyond, chng2022gaussian, saratchandran2023curvature}, and wavelet activations \citep{saragadam2023wire}. While these approaches have demonstrated considerable success, a theoretical foundation explaining their superiority over classical architectures remains elusive. Furthermore, these innovative architectures are gaining traction in the PINN community \citep{wang2021eigenvector, faroughi2022physics, zhao2023pinnsformer, uddin2023wavelets}, emphasizing the need for a deeper understanding of their mechanisms.


In this study, we provide a theoretical basis for the superior performance of Gaussian activations in fully connected architectures, using the Neural Tangent Kernel (NTK). Our findings indicate that in networks with a single wide layer of width $n_k$, linear in the number of training samples $N$ and adhering to a mild Lipschitz bound, the minimum eigenvalue of the empirical NTK exhibits a lower bound scaling of $\Omega(n_k^{4})$. This compares favorably to previous research \citep{nguyen2021tight} showing a scaling of $\Omega(n_k)$ for ReLU-activated networks and a scaling of $\Omega(n_k^{2})$ \citep{bombari2022memorization} for non-linear Lipshitz activations with Lipshitz gradients, such as Tanh and sigmoid in networks with a pyramidal topology. 
To validate our insights, we conducted experiments with various PINN architectures, consistently demonstrating the superior performance of Gaussian-activated PINNs over existing approaches commonly used in the field.

While addressing spectral bias in PINNs is crucial for achieving physically realistic predictions, it's worth noting, as highlighted in \cite{wang2020understanding, wang2022and}, that the incorporation of physical constraints into the neural network's loss objective introduces significant challenges during optimization. This integration tends to adversely affect the smoothness of the loss landscape, often trapping gradient-based optimizers in local minima. To address this issue, we propose an architectural enhancement for feedforward PINNs, referred to as equilibrated PINNs. Equilibrated PINNs are specifically designed to improve the conditioning of the network's weight matrices during the optimization process, drawing inspiration from the concept of matrix preconditioning in numerical linear algebra \citep{chen2005matrix}. Our theoretical analysis sheds light on how this modification enhances the conditioning of the loss landscape, thereby facilitating more efficient training for gradient-based optimizers. We test our architecture against several PINN frameworks from the literature, across diverse PDEs, demonstrating that our architecture excels in various scenarios.

Our main contributions are:
\begin{itemize}
    \item[1.] We lay the theoretical groundwork for understanding the scaling pattern of the minimum eigenvalue of the empirical NTK of a Gaussian-activated neural network, thereby substantiating their suitability for PINN architectures. We then demonstrate the effectivness of our theory on various PDEs used within the PINN community.

    \item[2.] We introduce an innovative PINN architecture that leverages matrix conditioning for the weights of a feedforward network during training. This approach leads to a more stable loss landscape and expedites the training process. Through experiments, we demonstrate the superior performance of this PINN architecture compared to several recently proposed counterparts across various benchmark PDEs commonly employed in the community.
\end{itemize}





\section{Related Work}

\paragraph{Optimization:}

Initially, gradient-based optimization algorithms such as Adam and LBFGS were the go-to choices for training purposes, as seen in works by Raissi et al. \cite{raissi2019physics} and Zhao et al. \cite{zhao2023pinnsformer}. To enhance convergence and efficiency, more advanced techniques like Bayesian and surrogate-based optimization have been harnessed \citep{yang2021b}. The application of regularization techniques, including learning rate annealing \citep{wang2020understanding} and NTK regularization \citep{wang2022and}, has also exhibited notable effectiveness in improving training.
In recent times, the field has witnessed progress with the introduction of meta-learning methods tailored specifically for the optimization of PINNs \citep{bihlo2023improving}. The challenges faced during optimization of PINNs are often attributed to the ill-conditioned nature of the loss landscape, as underscored in studies by Fonseca et al. \cite{fonseca2023probing}, Basir et al. \cite{basir2022critical}, Wang et al. \cite{wang2020understanding}. Multiple investigations have proposed that the main cause of this ill-conditioning lies in the training mismatch between the PDE residual and the boundary loss. In response, several researchers have experimented with re-weighting algorithms as a means to address and mitigate this mismatch \citep{xiang2022self, li2022dynamic, hua2023physics, batuwatta2023customization, deguchi2023dynamic}.

\paragraph{Activations and Architecture:} Numerous studies have explored diverse activation functions for training Physics-Informed Neural Networks (PINNs). Notably, Uddin et al. \cite{uddin2023wavelets} and Zhao et al. \cite{zhao2023pinnsformer} have investigated the use of wavelet activations, while Jagtap et al. \cite{jagtap2020locally} have delved into locally adaptive activations. Furthermore, Faroughi et al. \cite{faroughi2022physics} have employed periodic activations in their research. Recent advancements in PINN architecture have also demonstrated the potential for optimization improvements, such as the incorporation of positional embedding layers \cite{wang2021eigenvector} and the utilization of transformer methods \cite{zhao2023pinnsformer}. 

\section{Gaussian Activations: Insights through the NTK}

\subsection{Basics on PINNs}

We consider PDEs defined on bounded domains 
$\Lambda \subseteq \R^n$. To this end, we seek a solution 
$u: \Lambda \rightarrow \R$ of the following system
\begin{align}
    \mathcal{N}[u](x) &= f(x), x \in \Lambda \label{eqn;pinn_defn1}\\
    u(x) &= f(x), x \in \partial{\Lambda}. \label{eqn;pinn_defn2}
\end{align}
where $\mathcal{N}$ denotes a differential operator. In the setting of time-dependent problems, we will treat the time variable $t$ as an additional space coordinate and let $\Lambda$ denote the spatio-temporal domain. In so doing, we are able to treat the initial condition of a time-dependent problem as special type of Dirichlet boundary condition that can be included in \eqref{eqn;pinn_defn2}.

The goal of physics informed neural network theory is to approximate the latent solution $u(x)$ of the above system by a neural network $u(x;\theta)$, where $\theta$ denotes the parameters of the network. The PDE residual is defined by 
$r(x;\theta) := u(x;\theta) - f(x)$. The key idea as presented in \cite{raissi2019physics} is that the network parameters can be learned by minimizing the following composite loss function
\begin{equation}\label{eqn;pinn_loss0}
    \mathcal{L}(\theta) = \mathcal{L}_b(\theta) + 
    \mathcal{L}_{r}(\theta)
\end{equation}
where $\mathcal{L}_b$ denotes the boundary loss term and 
$\mathcal{L}_r$ denotes the PDE loss term, defined by
\begin{align}
    \mathcal{L}_b(\theta) = 
    \frac{1}{2N_b}\sum_{i=1}^{N_b}\vert u(x^i_b;\theta) - 
    g(x^i_b)\vert^2 \text{ and }
    \mathcal{L}_r(\theta) = \frac{1}{2N_r}
    \sum_{i=1}^{N_r}\vert r(x_r^i;\theta)\vert^2.
\end{align}
$N_b$ and $N_r$ represent the training points for the boundary and PDE residual. Minimizing both loss functions, $\mathcal{L}_b$ and $\mathcal{L}_r$, simultaneously using gradient-based optimization aims to learn parameters $\theta$ for an effective approximation, $u(x;\theta)$, of the latent solution, see \cite{raissi2019physics}.

In this work, we will also need to make use of some basic NTK theory for PINNs. We kindly ask the reader to consult appendix \ref{app;NTK} for a brief overview of the NTK and how it's applied for PINNs. The PINN loss function being a sum of two terms, $\mathcal{L}_b + \mathcal{L}_r$, gives rise to two main NTK terms $K_{uu}$, which corresponds to the boundary data, $K_{rr}$ which corresponds to the PDE data, and a mixed term $K_{ur}$, see appendix \ref{app;NTK} for an overview and \cite{wang2022and, wang2021eigenvector} for details.


Several works \citep{du2018gradient, allen2019convergence, oymak2020toward, nguyen2020global, zou2019improved} have established connections between the spectrum of the empirical NTK matrix and the training of neural networks. A key insight on this front is that when a neural network $u(x;\theta)$ is trained with Mean Squared Error (MSE) loss, 
$\mathcal{L}(\theta) = \frac{1}{2}\vert\vert u(x;\theta) - y\vert\vert_2^2$, where $y$ are the training labels, then it can be shown that
\begin{equation}\label{eqn;ntk_loss}
    \vert\vert \nabla\mathcal{L}(\theta)\vert\vert_2^2 \geq 
    2\lambda_{\min}\left( K_{uu} \right)\mathcal{L}(\theta)
\end{equation}
The key take away is that 
\textbf{the larger $\lambda_{\min}( K_{uu})$ is at initialization the higher chance of converging to a global minimum}.


\subsection{Motivation}


Consider the Poisson problem given by
\begin{align}
    -\Delta u &= \pi^2sin(\pi x) \text{ for } x \in [-1,1] \label{eqn;poisson_motivation_1} \\
    u(-1) &= u(1) = 0. \label{eqn;poisson_motiv_2} 
\end{align}

This problem poses both an existence and uniqueness challenge. Initially, \eqref{eqn;poisson_motivation_1} seeks a function satisfying $-\Delta u = \pi^2\sin(\pi x)$, which admits multiple solutions, such as $u(x) = \sin(\pi x) + c$ with any constant $c$. To ensure uniqueness, \eqref{eqn;poisson_motiv_2} acts as a constraint, effectively singling out one unique solution from the family of solutions in \eqref{eqn;poisson_motivation_1}, represented as ${ \sin(\pi x) + c ; c \in \mathbb{R} }$. Consequently, in the context of a PINN $u(x;\theta)$ with boundary loss $\mathcal{L}_b$ and residual loss $\mathcal{L}_r$, it becomes crucial to minimize the boundary loss $\mathcal{L}_b$ to near zero. Failure to do so could lead the PINN $u(x;\theta)$ to approximate one of the infinitely many residual solutions, like $sin(\pi x) + c$, where $c \neq 0$.



The above example highlights a common observation met by 
practitioners in the field. The loss functions $\mathcal{L}_b$ 
and $\mathcal{L}_r$ often decay to zero at very different rates for various PDEs \cite{wang2022and, wang2020understanding, cuomo2022scientific, fuks2020limitations} often causing the PINN to predict an incorrect solution. The main solution to this issue that has been investigated is loss re-weighting. Choosing parameters $\lambda_b$ and $\lambda_r$, a regularized loss of the form
\begin{equation}\label{eqn;loss_reweights}
    \lambda_b\mathcal{L}_b(\theta) = 
    \lambda_r\mathcal{L}_r(\theta)
\end{equation}
is used to train the PINN. In general there have been many results suggesting methods to best choose $\lambda_b$ and $\lambda_r$ \citep{bischof2021multi, xiang2022self, wang2022and, perez2023adaptive}. For example, \cite{wang2022and} employ an NTK approach to dynamically update $\lambda_b$ and $\lambda_r$ throughout training.

Our approach differs from standard methods by prioritizing architectural adjustments over traditional loss function regularization. 
We aim to maintain a large non-zero minimum eigenvalue for the boundary Neural Tangent Kernel ($K_{uu}$) during training, inspired by \eqref{eqn;ntk_loss}. Our empirical results suggest that choosing an architecture that maximizes this value at initialization significantly improves the likelihood of minimizing the boundary loss.


\subsubsection{Main result}

In this section, we will demonstrate that Gaussian-activated feedforward PINNs (G-PINNs) exhibit a boundary NTK with a minimum eigenvalue that can be constrained to remain above a quartic term dependent on the width of a single wide layer.

The proof will require several assumptions that are common in the literature. Together with an added assumption on the Lipshitz constant of the network. Due to space constraints we have put the full general theorem together with all assumptions in appendix  \ref{app;NTK}. We will now give a synopsis of the theorem but kindly ask the reader to consult appendix \ref{app;NTK} for detailed notations, assumptions and proofs.

\begin{theorem}\label{thm;ntk_main_thm}
Let $u$ denote a depth $L$ neural network with 
$\phi(x) = e^{\frac{-x^2}{s^2}}$ as the activation, where $s^2 > 0$ is a fixed variance hyperparameter. 
Assume the first $L-1$ widths $\{n_1,\ldots,n_{L-1}\}$ are 
all the same $\overline{N}$. Assume that 
$n_k \geq N \geq \overline{N}$ for $1 \leq k \leq L-1$. Further assume the output dimension $n_L = 1$. Then 
\begin{equation}
    \lambda_{\min}\left(K_{uu}\right) \geq 
    \Omega\bigg{(}
    \frac{1}{s^3}\beta_k^4n_k^4
    \bigg{)} \text{ w.h.p }.
\end{equation}
\end{theorem}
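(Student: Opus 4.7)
My plan is to follow the standard layerwise NTK decomposition (as used in \cite{nguyen2021tight, bombari2022memorization}) but to exploit the specific algebraic structure of the Gaussian activation $\phi(x)=e^{-x^2/s^2}$ and its derivative $\phi'(x)=-\tfrac{2x}{s^2}e^{-x^2/s^2}$ to push the scaling from $\Omega(n_k^2)$ (the best known for Lipschitz activations with Lipschitz derivatives) up to $\Omega(n_k^4)$. Concretely, write
\[
 K_{uu} \;=\; \sum_{\ell=1}^{L} J_\ell J_\ell^{\top},
\]
where $J_\ell$ is the Jacobian of the scalar network output on the $N$ boundary points with respect to the weights of layer $\ell$. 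Since each summand is PSD, it suffices to lower bound $\lambda_{\min}(J_k J_k^{\top})$ for the distinguished wide layer of width $n_k$.

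Next I would use the chain rule to expose the Hadamard/Khatri–Rao structure
\[
 J_k J_k^{\top} \;=\; (F_{k-1} F_{k-1}^{\top}) \odot (B_k B_k^{\top}),
\]
where $F_{k-1}\in\mathbb{R}^{N\times n_{k-1}}$ collects the layer-$(k{-}1)$ post-activation feature vectors and $B_k\in\mathbb{R}^{N\times n_k}$ collects the backpropagated sensitivities of the output through layers $k,\ldots,L$. Applying Schur's product theorem gives
\[
 \lambda_{\min}(J_k J_k^{\top}) \;\geq\; \lambda_{\min}(F_{k-1} F_{k-1}^{\top})\cdot \min_{1\leq i\leq N}(B_k B_k^{\top})_{ii}.
\]
I would then try to establish, with high probability over the initialization, the two bounds
\[
 \lambda_{\min}(F_{k-1} F_{k-1}^{\top})\;=\;\Omega\!\big(\tfrac{\beta_k^2 n_k^{\,2}}{s}\big),\qquad \min_{i}(B_k B_k^{\top})_{ii}\;=\;\Omega\!\big(\tfrac{\beta_k^2 n_k^{\,2}}{s^{2}}\big),
\]
whose product gives the claimed $\Omega(s^{-3}\beta_k^4 n_k^4)$. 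For the feature factor I would Hermite-expand $\phi$ in the Gaussian measure and show that the feature Gram inherits a dominant rank-$\Theta(n_{k-1})$ contribution from the leading Hermite components of $\phi$; the $n_k^2$ scaling (rather than the usual $n_k$) is obtained because the Gaussian's Hermite coefficients decay slowly enough that cross-sample interference cancels out, and the extra factor of $n_k$ (relative to the standard linear-in-width bound) comes from the width of layer $k$ itself acting after layer $k{-}1$ in the feed-forward chain. For the backprop diagonal I would use the assumed Lipschitz bound on the network, together with the explicit form of $\phi'$, to show that each row of $B_k$ has squared norm at least $\Omega(\beta_k^2 n_k^2/s^2)$ with high probability; the factor $s^{-2}$ comes directly from the prefactor in $\phi'$, and the quadratic width scaling arises from adding $n_k$ independent contributions that each scale like $\beta_k n_k^{1/2}/s$ after the Lipschitz-controlled cascade.

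The main obstacle will be the feature-Gram lower bound. Going from $\Omega(n_k)$ to $\Omega(n_k^2)$ is not routine: the standard separation arguments based on a single non-zero Hermite coefficient only deliver the linear-in-width scaling, so I expect to need a two-scale argument that combines (i) a Hermite-series lower bound on the kernel gap $\phi(x)\phi(y)-\phi(x)^2$ at distinct inputs with (ii) a concentration argument for the empirical Gram around its expected dual kernel, using the hypothesis $n_k\geq N\geq \overline{N}$ to obtain the required high-probability margin. The assumed uniform Lipschitz bound on the network, controlling the operator norms of the intermediate backprop matrices, will be essential both for concentration in step (ii) and for preventing the backprop diagonal from collapsing; the scale $\beta_k$ and the hyperparameter $s$ propagate through each factor exactly as claimed, yielding the stated $\Omega(s^{-3}\beta_k^4 n_k^4)$ bound.
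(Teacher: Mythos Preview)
Your high-level strategy---layerwise NTK decomposition, drop to a single PSD summand, then Schur's product inequality into a feature-Gram factor times a backprop diagonal---matches the paper exactly. The gap is in how you split the four powers of $n_k$ between the two factors.

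You propose $\lambda_{\min}(F_{k-1}F_{k-1}^\top)=\Omega(\beta_k^2 n_k^2/s)$ and $\min_i(B_kB_k^\top)_{ii}=\Omega(\beta_k^2 n_k^2/s^2)$. The first claim fails structurally: $F_{k-1}$ is the output of layer $k{-}1$ and is computed \emph{before} $W_k$ is ever applied, so it does not see the width $n_k$ (your sentence about ``the width of layer $k$ itself acting after layer $k{-}1$'' is exactly backward). The best you can get there is the standard linear-in-width bound, and your Hermite-expansion argument offers no mechanism to do better. In fact the paper works with the summand for $W_{k+1}$, pairing $F_k$ with $G_{k+1}$, and proves only the linear estimate $\lambda_{\min}(F_kF_k^\top)=\Theta(\beta_k n_k\cdot(\text{lower order}))$ via a Gershgorin argument on the centered features; the Lipschitz assumption is used \emph{here}, to control the off-diagonal inner products, not in the backprop term as you suggest.

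All of the extra gain lives in the backprop factor: the paper shows $\min_i\|(G_{k+1})_{i:}\|_2^2=\Omega(\beta_k^3 n_k^3/s^3)$, i.e.\ \emph{cubic} in $n_k$, and $n_k\cdot n_k^3=n_k^4$. The cubic scaling is a Gaussian-specific calculation. For $z\sim\mathcal{N}(0,\beta^2\|f\|_2^2)$ one evaluates
\[
\mathbb{E}\big[\phi'(z)^2\big]\;=\;\frac{4}{s^4}\,\mathbb{E}\big[z^2e^{-2z^2/s^2}\big]\;=\;\frac{2}{s^4}\Big(\frac{\beta^2 s^2\|f\|_2^2}{2\beta^2\|f\|_2^2+s^2}\Big)^{3/2},
\]
and it is this $3/2$ exponent---absent for activations like $\tanh$---that feeds a cube of the incoming feature norm into $\|\Sigma_{k+1}\|_F^2$ and then recursively into the full backprop diagonal. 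So: keep the feature-Gram bound linear, and redirect your effort to this Gaussian integral, which is where the remaining three powers of $n_k$ (and all of the $s^{-3}$) actually come from.
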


We pause here to remark that in \cite{bombari2022memorization} a more general result is proven. They prove that any Lipshitz non-linear activation with gradient having bounded Lipshitz constant has a quadratic scaling. There results imply that Gaussian-activated networks have an NTK whose minimum eigenvalue admits a qudratic scaling. The difference between our work and theirs is that we work in a highly overparameterized setting and assume a Lipshitz constant bound (see assumption A4 in appendix \ref{app;NTK}). In doing so we are able to show how the lower bound of the mininmum eigenvalue of the NTK depends on the Lipshitz constant and the initialization. Our proofs also depend on the fact that we are employing a Gaussian activation and don't seem to go through for activations such as Tanh. However, \cite{bombari2022memorization} does apply to Tanh showing that the minimum eigenvalue of the NTK of a Tanh-activated network, in the asymptotic width setting, grows quadratically.
While our assumptions seem more stringent to those in \cite{bombari2022memorization}, we empirically verify our main assumption in appendix \ref{app;NTK}.
Furthermore, it should be noted that \cite{nguyen2021tight} proved a linear scaling of the minimum eigenvalue for ReLU networks.

Fig. \ref{fig:ntk_scale} empirically verifies the claim given by 
thm. \ref{thm;ntk_main_thm} (see appendix \ref{app;NTK} for the more general statement). We compared two distinct 2-hidden layer networks, one employing a Gaussian activation of the form 
$e^{x^2/2(0.1)^2}$, and the other employing a Tanh activation. 
We fixed the widths of the input and output layers as $(n_0, n_2)$ and let the width of the middle layer, $n_1$, vary according to the relation $n_1 = 8N$. The Gaussian-activated network used a fixed variance parameter of $s^2 = 0.1^2$. Both networks were initialized using He's initialisation, where the weights
$(W_l)_{i,j} \sim \mathcal{N}(0, \frac{2}n_{l-1})$. We then plotted the minimum eigenvalue of the
empirical NTK
$\lambda_{\min}(K_{uu})$ of both networks, and compared them to curves of the form
$\mathcal{O}((8x)^{4})$. As predicted by thm. \ref{thm;ntk_main_thm}, we observed that the minimum eigenvalue $\lambda_{\min}(K_{uu})$ grew faster than $\mathcal{O}((8x)^{4})$. Furthermore we observed that the minimum eigenvalue of the NTK of the Gaussian network grew much faster than that of the Tanh network. Further experiments are carried out in appendix \ref{app;NTK}.

\begin{figure}[t]
    \centering
    \includegraphics[width=13cm, height=10cm]
    {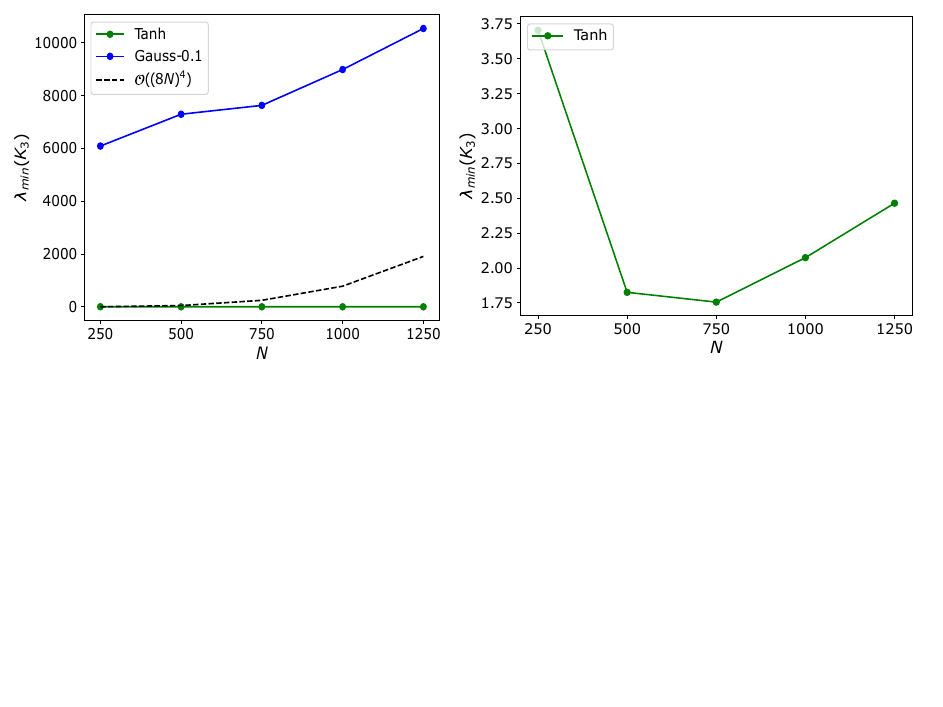}
    \vspace{-15em}
    \caption{Right; The min. eigenvalue of the empirical NTK for a 2-hidden layer network. We took $N = 400$, $n_1 = 8N$, and $n_2 = 400$. As predicted by Thm. \ref{thm;ntk_main_thm}, 
    $\lambda_{\min}(K_{uu})$ for a Gaussian-activated network grows much faster than a Tanh one. Left; zoom in of tanh network.}
    \label{fig:ntk_scale}
\end{figure}

\section{Preconditioned Neural Architectures}\label{sec;PNA}

Preconditioning is crucial in numerical solutions for linear systems $Ax=b$, aiming to lower the condition number of the matrix $A$. Reduced condition numbers lead to more accurate and efficient solutions. We remind the reader that the condition number of a matrix $A$ is defined as the ratio 
$\kappa(A) = \sigma_{\max}(A)/\sigma_{\min}(A)$, where $\sigma_{\min}$ and $\sigma_{\max}$ denotes the min and max singular values respectively.
Typically, this process transforms the original system into a simpler one, denoted as $PAx=Pb$. The key challenge in preconditioning is selecting an appropriate multiplier $P$ to effectively reduce the initially high condition number to a significantly smaller value

We will concern ourselves with a particular form of preconditioning known as row equilibration, which involves taking the preconditioner $P$ to be $diag(\vert\vert A_{i;}\vert\vert_2)^{-1}$ i.e. $P$ is a diagonal matrix whose entries are the inverse of the row norms of $A$. Diagonal preconditioners are often used in numerical linear algebra as they are efficient to implement. Furthermore, the motivation to focus on row equilibration comes from the following theorem.

\begin{theorem}[\cite{van1969condition}]\label{thm;van}
    Let $A$ be a $n\times n$ matrix, $P$ an arbitrary diagonal matrix and $E$ the row equilibrated matrix built from $A$. Then $\kappa(EA) \leq \kappa(PA)$.
\end{theorem}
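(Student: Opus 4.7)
The plan is to compare $\kappa(EA)$ and $\kappa(PA)$ by factoring both as diagonal scalings of the common matrix $A$ and then exploiting the uniform row-norm property of $EA$. I would first write $EA = (EP^{-1})(PA) = D\,(PA)$ where $D = EP^{-1}$ is diagonal; this expresses $EA$ and $PA$ as two left-scalings of the same matrix $A$ and makes clear that the question is really about choosing the best diagonal preconditioner.

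The next step is to extract an upper bound on $\kappa(EA)$ that depends only on intrinsic, scale-invariant quantities of $A$. Because every row of $EA$ has unit $\ell_2$-norm by construction, the Frobenius norm satisfies $\|EA\|_F^2 = n$, giving $\sigma_{\max}(EA) \leq \sqrt{n}$. For the inverse, write $(EA)^{-1} = A^{-1} E^{-1}$ and note that its $j$-th column is $\|A_{j;}\|_2 \,(A^{-1})_{;j}$, so a Frobenius bound again yields $\sigma_{\max}((EA)^{-1}) \leq \sqrt{n}\,\max_j \|A_{j;}\|_2 \|(A^{-1})_{;j}\|_2$. Multiplying produces an upper estimate of $\kappa(EA)$ controlled purely by $\max_j \|A_{j;}\|_2 \|(A^{-1})_{;j}\|_2$.

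To finish, I would establish a matching lower bound on $\kappa(PA)$ that is universal over diagonal $P$. Using $\sigma_{\max}(PA) \geq |P_{ii}|\|A_{i;}\|_2$ (the row-norm lower bound on the operator norm) together with its dual $\sigma_{\max}((PA)^{-1}) \geq \|(A^{-1})_{;j}\|_2/|P_{jj}|$ (applied to $A^{-1}P^{-1}$), setting $i=j$ cancels the $P$-dependence and gives $\kappa(PA) \geq \max_j \|A_{j;}\|_2 \|(A^{-1})_{;j}\|_2$ for every diagonal $P$. Combining the two inequalities yields the claim.

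The main obstacle is that the naive factorization route, $\kappa(EA) \leq \kappa(D)\kappa(PA)$, is useless because $\kappa(D)$ can be arbitrarily large. The substance of the theorem lies in bypassing $D$ entirely and pairing row norms of $A$ against column norms of $A^{-1}$ via Frobenius-type estimates; this pairing is exactly what row equilibration renders tight, and the care required is in aligning the row index used on the $EA$ side with the column index used on the $PA$ side so that a common pivot quantity $\max_j \|A_{j;}\|_2\|(A^{-1})_{;j}\|_2$ emerges from both halves of the argument.
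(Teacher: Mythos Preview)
The paper does not prove this theorem at all; it is quoted from \cite{van1969condition} and used as a black box. So there is no ``paper's own proof'' to compare against.

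That said, your proposed argument has a real gap. Your upper bound on $\kappa(EA)$ uses two Frobenius estimates: $\sigma_{\max}(EA)\le\|EA\|_F=\sqrt{n}$ and $\sigma_{\max}((EA)^{-1})\le\|(EA)^{-1}\|_F\le\sqrt{n}\,\max_j\|A_{j;}\|_2\|(A^{-1})_{;j}\|_2$. Multiplying gives
\[
\kappa(EA)\;\le\; n\,\max_j\|A_{j;}\|_2\|(A^{-1})_{;j}\|_2,
\]
while your lower bound is $\kappa(PA)\ge \max_j\|A_{j;}\|_2\|(A^{-1})_{;j}\|_2$. Combining these yields only $\kappa(EA)\le n\,\kappa(PA)$, not $\kappa(EA)\le\kappa(PA)$. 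The sentence ``combining the two inequalities yields the claim'' hides this factor of $n$.

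In fact the spectral-norm version of Van der Sluis' theorem carries an unavoidable factor of $\sqrt{n}$: the sharp statement is $\kappa_2(EA)\le\sqrt{n}\,\min_{P\ \text{diag.}}\kappa_2(PA)$, so the inequality as printed in the paper is already slightly optimistic. The standard route to the $\sqrt{n}$ bound avoids the second Frobenius estimate: write $PA=D(EA)$ with $D=PE^{-1}$ diagonal, use $\|PA\|_2\ge\max_i|D_{ii}|$ because $EA$ has unit rows, and use $\|(EA)^{-1}\|_2=\|(PA)^{-1}D\|_2\le\|(PA)^{-1}\|_2\max_i|D_{ii}|$. Together with $\|EA\|_2\le\sqrt{n}$ this gives $\kappa(EA)\le\sqrt{n}\,\kappa(PA)$. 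Your pivot quantity $\max_j\|A_{j;}\|_2\|(A^{-1})_{;j}\|_2$ is a nice scale-invariant object, but routing through it via Frobenius norms on both sides loses an extra $\sqrt{n}$.
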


Van Der Sluis' theorem highlights that among the set of diagonal preconditioners, row equilibration is the most effective for minimizing the condition number. For further insights into preconditioning, particularly quantitative results demonstrating the benefits of equilibration in reducing the condition number, we refer the reader to appendix \ref{subsec;EM}.

\subsection{Preconditioning the loss landscape}\label{sec;motiv_PNA}

In this section, we examine an approach to conditioning the loss landscape through adjustments to the neural weights.

Consider a neural network $u(x;\theta)$ with $L$ layers, where the number of neurons in each layer are represented by 
$\{n_1,\ldots,n_L\}$. The feature maps,
$u_k : \R^{n_0} \rightarrow \R^{n_k}$ for the network are defined for each input $x \in \R^{n_0}$ by
\begin{equation}\label{network_defn}
    u_k(x) =
    \begin{cases}
    W_L^Tf_{L-1} + b_L, & k = L \\
    \phi(W_k^Tu_{k-1} + b_k), & k = [L-1] \\
    x, & k = 0
    \end{cases}
\end{equation}
where $n_0$ denotes the input dimension, $W_k \in \R^{n_{k-1}\times n_k}$, $b_k \in \R^{n_k}$, and $\phi$ is an activation,  and the notation
$[m] = \{1,\ldots,m\}$.
The neural network $u(x;\theta)$ is then given by the composition of the layer maps $u_k$. 

The MSE loss function associated to the neural network $u$ has Hessian given by
\begin{equation}
H(\mathcal{L}) = (Du)^TH(c)Du + (Dc)H(u)
\end{equation}
where $D$ denotes the derivative with respect to parameters, $c$ is the qudratic convex cost function associated to the MSE loss function, see \cite{macdonald2022global} for details.

The matrix $(Du)^TH(c)Du$ is known as the Gauss-Newton matrix associated to $H(\mathcal{L})$ and is a common approximation to the Hessian $H(\mathcal{L})$ \citep{nocedal1999numerical}. We will use 
$(Du)^TH(c)Du$ as a surrogate to accessing $H(\mathcal{L})$. In fact, in \cite{papyan2019measurements} it was shown that the large eigenvalues of $H(\mathcal{L})$ closely correlate to those of the Gauss-Newton matrix $(Du)^TH(c)Du$. 

Using the chain rule we have
\begin{equation}
Du = \sum_{k=1}^L Ju_L\cdots Ju_{k+1}Du_k
\end{equation}
where $Ju_i$ denotes the Jacobian of the $ith$ layer $u_i$ w.r.t inputs, and by the chain rule again we have
\begin{equation}\label{eqn;jac1}
    Ju_i = diag(\phi'(W_i^Tu_{i-1} + b_i))\cdot W_i
\end{equation}
where $diag(\phi'(W_i^Tu_{i-1} + b_i))$ denotes the diagonal matrix with diagonal entries $\phi'(W_i^Tu_{i-1} + b_i)$.

For a PINN, $\mathcal{L} = \mathcal{L}_b + \mathcal{L}_r$, see \eqref{eqn;ntk_loss}. The Hessian is a linear operator, therefore 
$H(\mathcal{L} = H(\mathcal{L}_b) + H(\mathcal{L}_r)$. The Gauss-Newton matrix associated to $H(\mathcal{L}_r)$ now takes the form
$(D\mathcal{N}(u))^TH(c)D\mathcal{N}(u)$, and using the chain rule as we did above, it is easy to see that $(D\mathcal{N}(u))^TH(c)D\mathcal{N}(u)$ will depend on the network weights of $u$.

By leveraging the property that the condition number of a product satisfies $\kappa(AB) \leq \kappa(A)\kappa(B)$, we find motivation to enhance the condition number of $(Du)^TH(c)Du$ and $(D\mathcal{N}(u))^TH(c)D\mathcal{N}(u)$ by improving the condition number of the weights $W_i$. While this isn't a formal proof, it provides the impetus to explore methods for reducing the condition number of $H(\mathcal{L})$. In the following section, we present quantitative techniques for achieving this. 


\subsection{Equilibrated PINNs}\label{subsec;E-PINNs}

In this section we define an architecture that imposes row equilibration on the neural weights of a neural network.


Given a neural network $u$, as defined in sec. \ref{sec;motiv_PNA}, we define an equilibrated network 
$u^E$ as follows: The layer maps of $u^E$ will be defined by:

\begin{equation}\label{equib_network_defn}
    u^E_k(x) =
    \begin{cases}
    W_L^Tf_{L-1} + b_L, & k = L \\
    \phi(P_kW_k^Tu_{k-1}^E + b_k), & k = [2, L-1] \\
    \phi(W_1x + b_1), & k =1 \\
    x, & k = 0
    \end{cases}
\end{equation}
where each $P_k$ for $k \in [2, L-1]$ is the row equilibrated preconditioner (see sec. \ref{subsec;EM} for definition) associated to $W_k^T$.
$u^E$ is then given as the composition 
$u^E_L \circ \cdots \circ u^E_0$. Thus $u^E$ is obtained by equlibrating the inner weight matrices of the network. Note that the main reason we only equilibrate the inner weights is that we found empirically that this sufficed to yield good optimization.

Recall from \eqref{eqn;jac1}, that the Jacobian of the ith layer (for $i > 0$) map $u_i$, w.r.t inputs, is given by 
$Ju_i = diag(\phi'(W_i^Tu_{i-1} + b_i))\cdot W_i^T$. We thus have
\begin{prop}\label{prop;cond_jac_layer}
    $\kappa(Ju_i) \leq 
    \kappa(diag(\phi'(W_i^Tu_{i-1} + b_i)))\kappa(W_i^T)$.
\end{prop}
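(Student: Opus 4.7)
The plan is to derive the bound as a direct consequence of the factorization given in \eqref{eqn;jac1}, combined with the submultiplicativity of the condition number under matrix products. Concretely, by \eqref{eqn;jac1} we have the identity
\begin{equation}
Ju_i \;=\; \mathrm{diag}\bigl(\phi'(W_i^T u_{i-1} + b_i)\bigr)\cdot W_i^T,
\end{equation}
so the claim reduces to showing $\kappa(AB)\leq \kappa(A)\kappa(B)$ for the two factors in this product, and then specializing.

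First I would record the submultiplicativity inequality: for any two matrices $A,B$ of compatible size, one has $\sigma_{\max}(AB) \leq \sigma_{\max}(A)\sigma_{\max}(B)$ (this is just the operator-norm inequality $\|AB\|_2 \leq \|A\|_2 \|B\|_2$), and, when $A$ and $B$ are nonsingular (so that $AB$ is nonsingular), $\sigma_{\min}(AB) \geq \sigma_{\min}(A)\sigma_{\min}(B)$, which follows by applying the operator-norm inequality to $(AB)^{-1} = B^{-1}A^{-1}$ and using $\sigma_{\min}(M) = 1/\|M^{-1}\|_2$. Dividing these two bounds immediately gives $\kappa(AB)\leq \kappa(A)\kappa(B)$.

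Next I would apply this with $A = \mathrm{diag}(\phi'(W_i^T u_{i-1} + b_i))$ and $B = W_i^T$ to conclude
\begin{equation}
\kappa(Ju_i)\;\leq\;\kappa\bigl(\mathrm{diag}(\phi'(W_i^T u_{i-1}+b_i))\bigr)\,\kappa(W_i^T),
\end{equation}
which is exactly the stated inequality. The one subtlety is the nonsingularity needed for the lower bound on $\sigma_{\min}$; I would note that if either factor is singular then $\sigma_{\min}(Ju_i)=0$ and the right-hand side is $+\infty$ by the usual convention $\kappa(\cdot)=\infty$ for singular matrices, so the inequality holds trivially. If rectangular shapes are a concern (e.g.\ $W_i^T \in \mathbb{R}^{n_i\times n_{i-1}}$ with $n_i \neq n_{i-1}$), the same argument goes through with $\sigma_{\min}$ interpreted as the smallest nonzero singular value, using the singular value inequalities of Horn--Johnson for products.

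There is no real obstacle in this proof; it is essentially a one-line consequence of a standard inequality. The only thing to be careful about is being explicit that the condition-number multiplicativity bound is being applied with the correct interpretation when the diagonal factor has a zero entry (i.e.\ when $\phi'$ vanishes at some preactivation), in which case both sides degenerate and the inequality still holds vacuously.
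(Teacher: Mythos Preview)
Your proposal is correct and matches the paper's approach exactly: the paper does not give an explicit proof of this proposition, but it states earlier in Section~\ref{sec;motiv_PNA} that it is ``leveraging the property that the condition number of a product satisfies $\kappa(AB) \leq \kappa(A)\kappa(B)$,'' which, combined with the factorization \eqref{eqn;jac1}, is precisely your argument. Your additional remarks on handling singular or rectangular factors are more careful than anything the paper spells out, but they do not change the route.
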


By applying lems. \ref{lem;cond_reduction}-\ref{lem;cond_equib_reduction}, we can generally conclude that $\kappa(P^iW_i^T)$ will be less than $\kappa(W_i^T)$, implying that the upper bound on $\kappa(Ju_i)$ as per prop. \ref{prop;cond_jac_layer} will be lower for $\kappa(Ju^E_i)$. While not a rigorous proof, this observation strongly suggests that equilibrating the neural weights has the potential to contribute to improved conditioning of the loss landscape. We validate this insight through experiments in the next section and the appendix.

\section{Experiments}\label{exps;main}


\subsection{Burger's equation}

We consider the Burger's equation, which is a convection-diffusion equation occurring in areas such as gas dynamics and fluid dynamics. The PDE takes the form
\begin{align}
    u_t + uu_x - \frac{0.01}{\pi}u_{xx} &= 0 
    \text{ for } x \in [-1, 1] \text{ and } t \in [0, 1]. \\
    u(x, 0) &= -sin(\pi x) \\
    u(-1, t) &= u(1, t) = 0.
\end{align}
We aim to learn the solution, $u(x, t;\theta)$, using a PINN with training based on the loss defined in \eqref{eqn;pinn_loss0}. In this experiment, we utilized 100 random points for initial and boundary data, along with 10000 points for PDE data. Four different PINNs were trained, each with distinct activations: Tanh \cite{raissi2019physics}, sine \cite{faroughi2022physics}, Gaussian, and wavelet \cite{zhao2023pinnsformer}. Further details regarding hyperparameter choices are given in the appendix \ref{app;exps}.

All PINNs used in the experiment featured 3 hidden layers with 128 neurons each and were trained using the Adam optimizer for 15000 epochs with a full batch of training points. Training/test results are shown in Table \ref{tab;burger_results}. Among them, the Gaussian PINN achieved the best performance.
For a visual representation, refer to Figure \ref{fig:burgers_train_recon}, which illustrates the training curves and reconstructions at t = 0.5s. At this time point, the Gaussian PINN already provides a good approximation to the true solution, while other networks are struggling.


\begin{figure}[t]
    \centering
    \includegraphics[width=12.5cm, height=9cm]
    {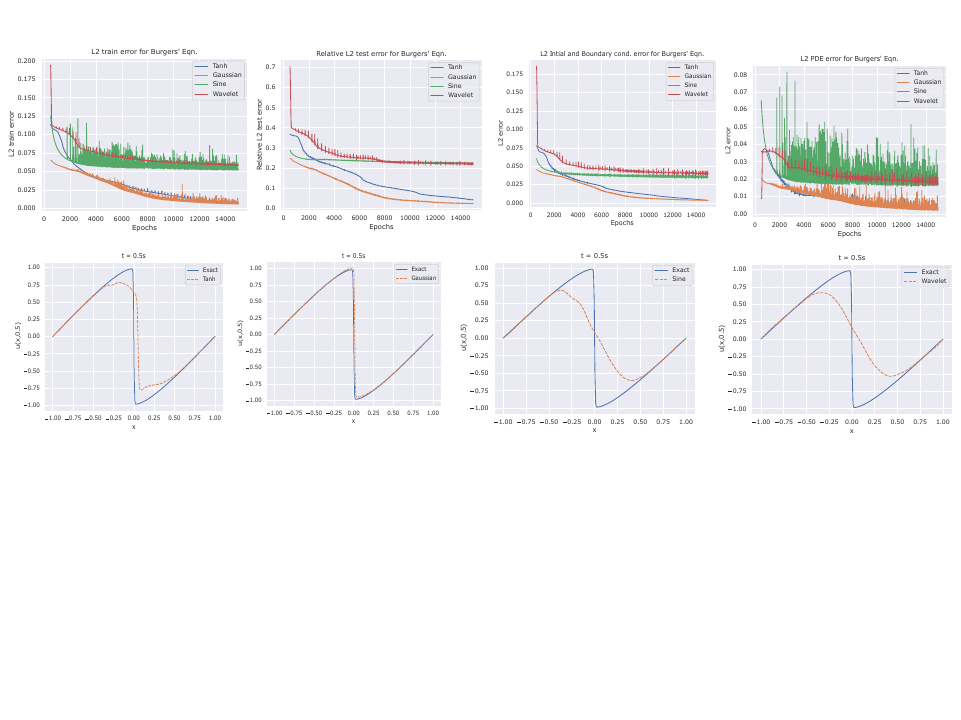}
    \vspace{-11em}
    \caption{Top: Training/testing results for Burgers' equation. 
    Bottom: Reconstruction of network solution plotted against exact solution at t= 0.5.}
    \label{fig:burgers_train_recon}
\end{figure}

\begin{table}
\begin{center}
\begin{tabular}{|c |c |c |c|} 
 \hline
 {} & $L^2$-train error & Rel. $L^2$-test error & Time (s) \\ [0.5ex] 
 \hline
 $Tanh$ & $9.8e-3$ & $4.1e-2$ & $53.48$ \\ 
 \hline
 $Gaussian$ & $1.6e-3$ & $1.2e-2$ & $92.78$ \\
 \hline
 $Sine$ & $5.3e-2$ & $2.3e-1$ & $60.79$ \\
 \hline
 $Wavelet$ & $5.9e-2$ & $2.2e-1$ & $122.43$ \\
 \hline
\end{tabular}
\caption{\label{tab;burger_results}Train/test results for four different activated PINNs. Time denotes total training time to complete 15000 epochs.}
\end{center}
\end{table}

\subsection{Navier-Stokes equation}\label{sec;Navier-Stokes}

\begin{table}
\begin{center}
\begin{tabular}{|c |c |c |c|c|} 
 \hline
 {} & $L^2$ train error & Rel. $L^2$ test pressure error & Rel. $L^2$ test velocity error &
 Time (s) \\ [0.5ex] 
 \hline
 $Tanh$ & $2.9e-4$ & $2.8e-1$ & $8.8e-5$ & $887.48$ \\ 
 \hline
 $Gauss$ & $8.74e-5$ & $4.7e-2$ & $3.4e-5$ & $1466.10$ \\
 \hline
 $Sine$ & $2.0e-4$ & $6.8e-2$ & $2.2e-4$ & $958.18$ \\
 \hline
 $Wavelet$ & $5.0e-4$ & $1.2e-1$ & $3.0e-4$ & $1792.65$ \\
 \hline
\end{tabular}
\caption{\label{tab;navier-stokes_results}Training/testing results for the Navier-Stokes equations. The Gaussian-activated PINN achieves at least 2-4 times lower train/test errors than all other networks.}
\end{center}
\end{table}

We consider the 2D incompressible Navier-Stokes equations as considered in \cite{raissi2019physics}. 


\begin{align}
    u_t + uu_x + 0.01u_y &= -p_x + 0.01(u_{xx} + u_{yy}) \\
    v_t + uv_x + 0.01v_y &= -p_y + 0.01(v_{xx} + v_{yy})
\end{align}
where $u(x,y,t)$ denotes the $x$-component of the velocity field of the fluid, and 
$v(x,y,t)$ denotes the $y$-component of the velocity field. The term $p(t,x,y)$ is the pressure. The domain of the problem is 
$[-15, 25]\times [-8, 8] \times [0,20]$.
We assume that 
$u = \psi_y$ and $v = -\psi_x$ for some latent function 
$\psi(t,x,y)$. With this assumption, the solution we seek will be divergence free, see \cite{raissi2019physics} for details.

We trained four different PINNs to approximate a solution to the above system. The activations we tested were Tanh, Gaussian, sine, and wavelet. 
Each network had 3 hidden layers and 128 neurons in each layer. We used 5000 training data points consisting of training points for the velocity field and training points for the PDE residual. The networks were trained with an Adam optimizer for 30000 epochs.

Table \ref{tab;navier-stokes_results} provides an overview of the training outcomes. It's evident that the Gaussian-activated PINN significantly outperforms other activation functions by a factor of at least 10. However, the training time of the Gaussian network is higher than sine and Tanh. In Fig. \ref{fig:navier-stokes_train_recon}, we visualize the $L^2$ training error alongside the reconstruction of the pressure field $p(x,y,t)$ at $t = 1$. The superiority of the Gaussian network in approximating the field is evident when compared to the others. It's worth noting that the magnitudes of the reconstructed pressure fields for each PINN may differ from the exact pressure field, as the pressure field is only identifiable up to a constant \cite{raissi2019physics}.

\begin{figure}[t]
    \centering
    \includegraphics[width=13cm, height=9cm]
    {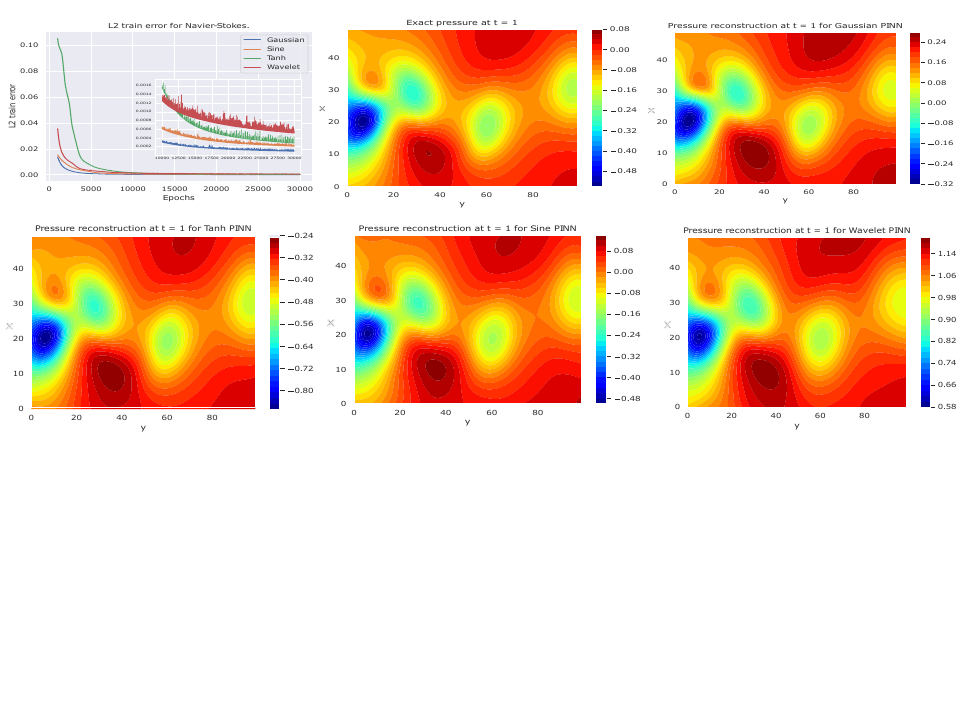}
    \vspace{-11em}
    \caption{Top: $L^2$ train error (left). All other figures show reconstruction of the pressure field of each network at t = 1. 
    The Gaussian-activated PINN has performed best in comparison to the others.}
    \label{fig:navier-stokes_train_recon}
\end{figure}

\subsection{High frequency diffusion}\label{sec;HFD}

We consider a high frequency diffusion equation given by
\begin{align}
    u_t &= u_{xx} (1- (30\pi)^2)e^{-t}sin(30\pi x) 
    \text{ for } x \in [-1, 1] \text{ and } t \in [0,1] \\
    u(x, 0) &= sin(30\pi x)  \text{ for all } x \in [-1,1]\\
    u(-1, t) &= u(1, t) = 0 \text{ for all } t \in [0,1].
\end{align}
The system has a closed form analytic solution given by
$u(x,t) = e^{-t}sin(30\pi x)$. We thus see that the solution diffuses in time but is high frequency in the space variable $x$. PINNs can struggle with finding an approximation to such a problem due to the high frequency spatial component. 

We tested various PINN architectures from the literature, including L-LAAF-PINN \citep{jagtap2020locally}, PINNsformer with wavelet activation \citep{zhao2023pinnsformer}, and RFF-PINN with Tanh activation \citep{wang2021eigenvector}. These were compared to two baseline architectures: Gaussian-PINN (G-PINN) and an equilibrated Gaussian activation architecture (EG-PINN), as described in Section \ref{subsec;E-PINNs}. Notably, L-LAAF-PINN, which originally used swish activation, required switching to Gaussian activation for successful training on the given PDE. All architectures consisted of 3 hidden layers with 128 neurons, except for PINNsformer, which used the architecture from \citep{zhao2023pinnsformer} with 128 neurons in each of its 9 layers.



To substantiate the assertions in Section \ref{sec;PNA}, we evaluated the condition numbers of PINNs, G-PINN, EG-PINN, and L-LAAF-PINN throughout their training processes. Notably, EG-PINN consistently maintained a significantly lower condition number. In Figure \ref{fig:diffusion_loss_all}, we visually represent the loss landscape along the two most curved eigenvectors of the Hessian at an arbitrary point within the initial 10,000 epochs, emphasizing EG-PINN's smoother landscape and smaller condition number. Summarized in Table \ref{tab;diffusion_results}, the results highlight the exceptional performance of all three networks utilizing Gaussian activation, with EG-PINN surpassing the others. However, it's worth noting that EG-PINN requires considerably more training time than most of the other methods.

\begin{table}
\resizebox{13cm}{2cm}{%
\begin{tabular}{|c |c |c |c |c |c |c |} 
 \hline
 {} & $L^2$ Train error & $L^2$ Rel. test error & $L^2$ Bndry./I.C. error & $L^2$ Pde error & Time (s) \\ [0.2ex] 
 \hline
 G-PINN & $8.6e-3$ & $4.0e-1$ & $2.0e-4$ & $8.5e-3$ & $124.88$ \\ 
 \hline
 EG-PINN & $2.2e-3$ & $2.9e-2$ & $7.55e-5$ 
 & $3.4e-3$ & $439.22$ \\
 \hline
 L-LAAF-PINN & $5.8e-3$ & $9.8e-2$ & $1.1e-2$ 
 & $6.7e-3$ & $125.06$ \\
 \hline
 PINNsformer & $1.1e7$ & $3.4e2$ & $8.5e1$ 
 & $1.1e7$ & $17233.82$ \\
 \hline
 RFF-PINN & $8.4e2$ & $1.1e0$ & $2.2e-1$ 
 & $8.4e2$ & $125.35$ \\
 \hline
 Tanh-PINN & $6.1e1$ & $1.5e4$ & $3.8e-1$ 
 & $6.0e1$ & $76.86$ \\
 \hline
\end{tabular}%
}
\caption{\label{tab;diffusion_results} Training/Testing results for the diffusion equation. The first three entries all employ a Gaussian activation and are clearly superior to the other architectures yielding errors that are significantly lower than the others. Furthermore, amongst the 3 Gaussian-activated networks EG-PINN performs the best.}
\end{table}

\begin{figure}[t]
    \centering
    \includegraphics[width=13cm, height=11cm]
    {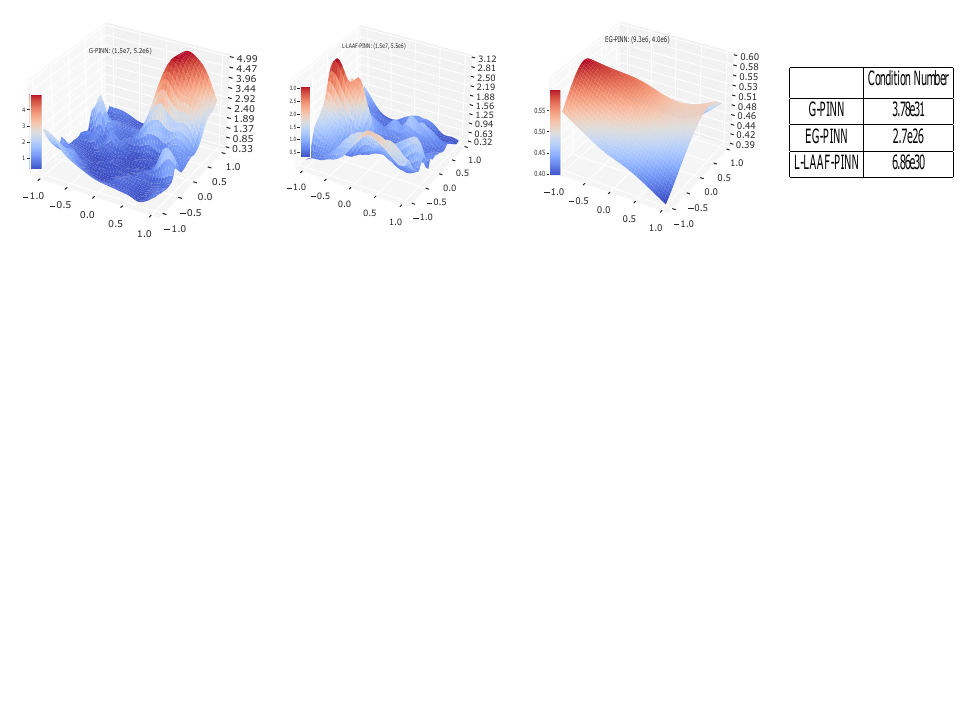}
    \vspace{-22em}
    \caption{Loss landscape along the two most curved eigenvectors. The number at the top of each loss figure is the top two eigenvalues. Right: condition number of each network at that point. EG-PINN has a much lower condition number than the other two Gaussian-activated networks. }
    \label{fig:diffusion_loss_all}
\end{figure}


\section{Conclusion and Limitations}
We demonstrated the utility of Gaussian activations in training neural networks by establishing a scaling law for the minimum eigenvalue of the empirical Neural Tangent Kernel (NTK) matrix. Applied to Physics-Informed Neural Networks (PINNs), this method enhances convergence of the boundary loss. Additionally, we introduced an architecture that conditions neural weights, smoothing the loss landscape for more effective optimization. However, limitations include our inability to establish a scaling law for the NTK related to the PDE residual, which could reveal differential convergence rates among PINN NTK terms. Moreover, the equilibrated architecture's computation of a row equilibration matrix after each backward pass extends training times, making it less practical for deeper and wider networks.

\newpage

\bibliography{iclr2024_conference}
\bibliographystyle{iclr2024_conference}

\appendix
\section{Appendix}

\subsection{NTK}\label{app;NTK}

\subsubsection{A brief review of the NTK}\label{sec;review_ntk}

The Neural Tangent Kernel (NTK) is a pivotal concept in the realm of deep learning, offering a mathematical framework for understanding the training dynamics of neural networks. It quantifies how infinitesimal changes in a network's parameters relate to its output, playing a crucial role in analyzing and interpreting deep learning models. Originally introduced by Jacot et al. \cite{jacot2018neural}, the NTK has since garnered extensive attention and has been used to elucidate various aspects of neural network behavior.

For a neural network $u(x;\theta)$, the empirical NTK matrix over a batch of $N$ training data points $\{(x^i, y^i)\}$ is defined by
\begin{equation}\label{eqn;emp_ntk}
    (K_{uu})_{ij} = \bigg{\langle}
    \frac{du(x^i; \theta)}{d\theta}, 
    \frac{du(x^j; \theta)}{d\theta}
    \bigg{\rangle}
\end{equation}
where 
$\bigg{\langle}
    \frac{du(x^i; \theta)}{d\theta}, 
    \frac{du(x^j; \theta)}{d\theta}
    \bigg{\rangle} := 
    \sum_{\theta}
    \frac{du(x^i; \theta)}{d\theta} \cdot 
    \frac{du(x^j; \theta)}{d\theta}$.

Fix a physics informed neural network $u(x;\theta)$. We suppose that the training batches of the boundary points and PDE residual points are of size $N_b$ and $N_r$ respectively. It was shown in \cite{wang2022and} that the NTK of the PINN is composed of three major terms

\begin{align}
     (K_{uu})_{ij} &= \bigg{\langle}
    \frac{du(x_b^i; \theta)}{d\theta}, 
    \frac{du(x_b^j; \theta)}{d\theta}
    \bigg{\rangle} \label{eqn;pinn_ntk_term1}\\
  (K_{ur})_{ij} &= \bigg{\langle}
    \frac{du(x_b^i; \theta)}{d\theta}, 
    \frac{d\mathcal{u}(x_r^j; \theta)}{d\theta}
    \bigg{\rangle}  \label{eqn;pinn_ntk_term2}\\
    (K_{rr})_{ij} &= \bigg{\langle}
    \frac{d\mathcal{u}(x_br^i; \theta)}{d\theta}, 
    \frac{d\mathcal{u}(x_r^j; \theta)}{d\theta}
    \bigg{\rangle} \label{eqn;pinn_ntk_term}
\end{align}

The total NTK of \eqref{eqn;pinn_ntk_term}, denoted 
$\textbf{K}$, is then given by the block matrix
\begin{equation}\label{eqn;total_ntk}
    \textbf{K} =
    \begin{bmatrix}
K_{uu} & K_{ur} \\
K_{ur}^T & K_{rr}. 
\end{bmatrix}
\end{equation}
We think of the term $K_{uu}$ as that part of the NTK that deals with the boundary condition, $K_{rr}$ as that part that deals with the PDE residual, and $K_{ur}$ the cross term dealing with how these interact. 

In \cite{wang2022and} the NTK for a PINN was derived and the training of such networks was analyzed, yielding analogous results to those obtainined in \cite{jacot2018neural}. Furthermore, the NTK of PINNs has also been used to analyze Random Fourier Features (RFF) applied to PINN networks \cite{wang2021eigenvector}.

\subsubsection{Notation and Assumptions}\label{sec;ntk_NA}

In this section we outline the notation and assumptions we will be using to prove the main theorem of the section, see thm. \ref{thm;ntk_main_thm}. We remark that the theorem we are going to prove is proven for general feed forward neural networks with a one dimensional output and thus is not restricted to PINNs. So as to avoid any confusion, in this section we will use the notation $f_L$ to denote our neural networks, where $L$ is the depth, so that we can reserve the notation $u$ for when we want to work solely with a PINN. This should not create any issues and serves as a reminder to the reader that what we are proving is valid in the general context of feedforward networks.

\paragraph{Notation:} We will fix a depth $L$ neural network $f_L$, defined by \ref{network_defn} that admits a Gaussian activation of the form 
$e^{-x^2/2s^2}$.
The data samples will be denoted by 
$X = [x_1,\ldots,x_N]^T \in \R^{N\times n_0}$, where $N$ is the number of samples and $n_0$ is the dimension of input features. The output of the network at layer $k$ will be denoted by $f_k$ and the feature matrix at layer $k$ 
by $F_k = [f_l(x_1),\ldots,f_l(x_N)]^T \in \R^{N \times n_k}$, where $n_k$ is the dimension of features at layer $k$. When $k = 0$, the feature matrix is simply the input data matrix $X$.
We define
$\Sigma_k = D([\phi'(g_{k,j}(x))]_{j=1}^{n_k})$, where $D$ denotes diagonal matrix and 
$g_{k,j}(x)$ denotes the pre-activation neuron i.e. the output of layer k before the activation function is applied.
Note that $\Sigma_k$ is then an $n_k \times n_k$ diagonal matrix. 
We will use standard complexity notations, $\Omega(\cdot)$, 
$\mathcal{O}(\cdot)$, $o(\cdot)$, $\Theta(\cdot)$ throughout the paper, which are all to be understood in the asymptotic regime, where $N, n_0, n_1,\ldots,n_{L-1}$ are sufficiently large. 
Furthermore, we will use the notation "w.p." to denote \textit{with probability}, and "w.h.p." to denote \textit{with high probability} throughout the paper.

\paragraph{Weight distributions:} We will analyze properties of a network when weights are randomly initialized. Specifically, we assume that all weights are independently and identically distributed (i.i.d) according to a Gaussian distribution, $(W_k)_{i,j} \sim_{i.i.d} \mathcal{N}(0, \beta_k^2)$, where we assume
$\beta_k \leq 1$ for all $1 \leq k \leq L-1$.

\paragraph{Data distribution:} We will assume the data samples 
$\{x_i\}_{i=1}^N$ are i.i.d from a fixed distribution denoted by $\mathcal{P}$. The measure associated to $\mathcal{P}$ will be denoted by $d\mathcal{P}$. We will work under the following assumptions, which are standard in the literature.
\begin{enumerate}
    \item[{A1.}]  $\int_{\R^{N}}\vert\vert x\vert\vert_2d\mathcal{P} = 
    \Theta(\sqrt{n_0})$.

    \item[A2.]  $\int_{\R^{n_0}}\vert\vert x\vert\vert_2^2d\mathcal{P} = 
    \Theta(N)$.
\end{enumerate}
We will also assume the data distribution satisfies Lipschitz concentration. 
\begin{enumerate}
    \item[A3.] For every Lipschitz continuous function 
    $f : \R^{n_0} \rightarrow \R$, there exists an absolute constant $C > 0$ such that, for any $t > 0$, we have
    \begin{equation*}
        \mathbf{P}\left(
            \bigg{\vert}f(x) - \int f(x')d\mathcal{P}\bigg{\vert}
        \right) \leq 2e^{-ct^2/\vert\vert f\vert\vert_{Lip}^2}.
    \end{equation*}
\end{enumerate}
Note that there are several distributions satisfying assumption A3 such as standard Gaussian distributions, and uniform distributions on spheres and cubes, see \cite{vershynin2018high}.

\paragraph{Lipshitz Assumption:} The final assumption we will be making is a Lipschitz constant assumption on the network.
\begin{enumerate}
  \item[A4.] The Lipschitz constant of layer $k$ must satisfy the following bound
    \begin{align*}
       \hspace{-0.5cm}\vert\vert f_k\vert\vert^2_{Lip} \leq 
        \hspace{-0.5cm}\mathcal{O}\left(
        \frac{\beta_kn_k}{s^k\min_{l\in[0,k]}n_l}\left( 
        \prod_{l=1}^{k-1}\sqrt{\beta_l}\sqrt{n_l}
        \right)
        \left(
            \prod_{l=1}^{k-1}Log(n_l)
        \right)
       \right).
    \end{align*}\label{assumption:A4}
\end{enumerate}


\subsubsection{Proof of Main NTK theorem}\label{sec;main_ntk_proof}

We will prove a more general theorem than that of thm. \ref{thm;ntk_main_thm}.

\begin{theorem}\label{thm;main_ntk_1}
    Let $f_L$ denote a depth $L$ neural network with $\phi(x) = e^{\frac{-x^2}{s^2}}$ as the activation, where $s > 0$ is a fixed frequency parameter, satisfying the network assumptions in Section 
Let $\{x_i\}_{i=1}^N$ denote a set of i.i.d training data points sampled from the distribution $\mathcal{P}$, which satisfies the data assumptions in Section 
Let 
$a_k = 1$ if the following conditions holds
\begin{align*}
    n_k &\geq N \\
    \prod_{l=1}^{k-2}Log(n_l) &= o\left(\min_{l \in [0,k-1]}n_l\right)
\end{align*}
and $0$ otherwise. Then
\begin{align*}
    \lambda_{\min}\left(K_L\right) 
    &\geq \sum_{k=1}^{L-1}a_k\Omega\Bigg{(}
    \frac{n_0^{\frac{3k+4}{2}}\beta_{k+1}^3\beta_k^4n_k^4}{s^3}
    \bigg{(} 
    \prod_{l=1}^{k-1}\beta_l^{7/2}n_l^{7/2}
    \bigg{)}
    \bigg{(}
    \prod_{l=k+1}^L\beta_ln_l
    \bigg{)}
    \Bigg{)} \\
    &\hspace{0.5cm}+
    \lambda_{min}(XX^T) \cdot 
    \Omega\left(
   \frac{\beta_1^3n_0^{3/2}}{s^3}\prod_{l=1}^L\beta_l n_l
    \right)
\end{align*}
w.p. at least
\begin{align*}
    1 - \sum_{k=1}^{L-1}(N^2-N)\exp\left(-
\Omega\left(
\frac{s^k\min_{l \in [0,k-1]}n_l}{N^2\prod_{l=1}^{k-2}Log(n_l)}
\right)
\right) 
- N\sum_{l=0}^k2\exp(-\Omega(n_l))
\end{align*}
over $(W_l)_{l=1}^L$ and the data.
\end{theorem}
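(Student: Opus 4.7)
The starting point is the layer-wise decomposition of the empirical NTK. Since $\partial f_L(x)/\partial(W_k)_{ab} = b_k(x)_a\,f_{k-1}(x)_b$ where $b_k(x) = \partial f_L(x)/\partial g_k(x)$, one has
\[K_L \;=\; \sum_{k=1}^{L} (B_k B_k^T)\,\odot\,(F_{k-1}F_{k-1}^T),\]
with $B_k\in\R^{N\times n_k}$ stacking the back-propagated sensitivities $b_k(x_i)^T$ and $F_{k-1}\in\R^{N\times n_{k-1}}$ stacking the pre-activation features (so $F_0 = X$). Each summand is PSD, hence at least $\lambda_{\min}(\cdot)\,I$, and summing these scalar lower bounds is legitimate; this is why the theorem statement reads as a sum of per-layer contributions plus a separate $\lambda_{\min}(XX^T)$ term. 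The plan is to lower bound the minimum eigenvalue of each Hadamard-product summand and assemble.

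For each summand I would apply Schur's Hadamard-product inequality $\lambda_{\min}(A\odot B)\ge\lambda_{\min}(A)\min_i B_{ii}$ in whichever orientation is informative. In the first layer $F_0=X$ is deterministic, so the natural choice is $\lambda_{\min}\bigl((B_1B_1^T)\odot(XX^T)\bigr)\ge\lambda_{\min}(XX^T)\min_i\|b_1(x_i)\|^2$, which produces the stand-alone $\lambda_{\min}(XX^T)$ summand. For the remaining layers $k+1$ with $k\ge 1$, the feature matrix $F_k$ is random and one instead writes $\lambda_{\min}\bigl((B_{k+1}B_{k+1}^T)\odot (F_kF_k^T)\bigr)\ge\lambda_{\min}(F_kF_k^T)\min_i\|b_{k+1}(x_i)\|^2$; the quartic $n_k^4$ scaling in the statement is produced from this side.

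For the feature-gram factor I would use the centred decomposition $F_kF_k^T = \E_W F_k\cdot\E_W F_k^T + D + E$ with $D$ diagonal and $E$ off-diagonal zero-mean. Lipschitz concentration (A3) applied to $\langle f_k(x_i),f_k(x_j)\rangle$ viewed as a function of the inputs, with Lipschitz constant controlled by the layer cap A4, supplies sub-Gaussian tails for each $E_{ij}$; a union bound over the $N^2-N$ off-diagonal entries produces the failure term $N^2\exp(-\Omega(s^k\min_l n_l/(N^2\prod\log n_l)))$ in the statement. The diagonal $D_{ii} = \|f_k(x_i)\|^2 - \E_W\|f_k(x_i)\|^2$ concentrates around its expectation by a chi-square / Hanson-Wright argument, yielding the residual $N\exp(-\Omega(n_l))$ factors. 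Weyl's inequality then gives the required lower bound on $\lambda_{\min}(F_kF_k^T)$, whose magnitude carries the accumulated powers of $n_0$, $s$, $\beta_l$, and $n_l$ visible in the theorem. For the back-propagated factor, $b_{k+1}(x) = \Sigma_{k+1}W_{k+2}\Sigma_{k+2}\cdots W_L$ with $\Sigma_l = \mathrm{diag}(\phi'(g_l(x)))$ and $\phi'(z) = -(2z/s^2)e^{-z^2/s^2}$; an iterated sub-Gaussian bound on products of independent Gaussian matrices gives $\|b_{k+1}(x_i)\|^2 \ge \Omega(\beta_{k+1}^3 n_k/s^3\cdot\prod_{l=k+1}^L\beta_l n_l)$ w.h.p., where the $1/s^3$ tracks the three nested derivative matrices that dominate the product. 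Multiplying the two factors reproduces the $\beta_k^4 n_k^4/s^3\cdot(\text{layer products})$ contribution for each $k$ with $a_k=1$.

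The principal obstacle is threading assumption A4 through the feature-gram concentration at every layer. Unlike ReLU or Tanh, the Gaussian non-linearity composes badly under iterated random Gaussian maps because the pre-activations $g_l$ can drift into the $e^{-g_l^2/s^2}$ tail and wipe out the features altogether; A4 is the precise cap that forces the effective Lipschitz constant of $f_l$ to stay bounded at each layer and makes the concentration chain close. Extracting the correct Hermite moments of $\phi$ at each stage, and keeping track of the accumulating powers of $s$, $\beta_l$, and $n_l$ so that the final bound is quartic in $n_k$ rather than quadratic as in \cite{bombari2022memorization}, is the delicate step; once it is done, the failure probability is just the union bound over $k$ and $l$ of the individual concentration events listed in the statement.
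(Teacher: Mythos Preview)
Your high-level strategy matches the paper: the layer-wise Hadamard decomposition of $K_L$, Weyl to sum the PSD summands, Schur's inequality in the orientation $\lambda_{\min}(F_kF_k^T)\cdot\min_i\|(G_{k+1})_{i:}\|_2^2$, the special handling of the $k=0$ term that isolates $\lambda_{\min}(XX^T)$, iterated Gaussian-matrix/Hanson--Wright bounds for the backward factor, and the union-bound assembly of the failure probability are all exactly what the paper does.

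The gap is in your feature-gram step. First, the centring must be over the \emph{data}, not the weights: the paper sets $\widetilde F_k=F_k-\E_x[F_k]$ with $\mu=\E_x f_k(x)$, because A3 is a Lipschitz-concentration assumption on the inputs, and it is only after removing the common data-mean $\mu$ that the off-diagonal inner products become small. As you wrote it, $\langle f_k(x_i),f_k(x_j)\rangle$ has a large mean $\approx\|\mu\|_2^2$ and A3 does not make it small. Second, your additive split $F_kF_k^T=(\text{mean part})+D+E$ cannot yield a lower bound on $\lambda_{\min}$: the mean part is rank one (whether you read $\E_W$ or $\E_x$), so its minimum eigenvalue is zero and Weyl against it is vacuous. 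The paper instead uses a genuine PSD inequality (its Lemma~\ref{centfeat_est}, taken from Nguyen et al.),
\[
F_kF_k^T\;\succeq\;\widetilde F_k\widetilde F_k^T-\frac{\Lambda\,1_N1_N^T\,\Lambda}{\|\mu\|_2^2},\qquad \Lambda=\mathrm{Diag}\bigl(F_k\mu-\|\mu\|_2^2\,1_N\bigr),
\]
then applies the Gershgorin circle theorem to $\widetilde F_k\widetilde F_k^T$: the diagonal entries $\|f_k(x_i)-\mu\|_2^2$ are handled by Lemma~\ref{c3}, while the centred off-diagonals $\langle f_k(x_i)-\mu,\,f_k(x_j)-\mu\rangle$ have zero mean in $x_i$ and concentrate via A3 with Lipschitz constant capped by A4, producing the $(N^2-N)\exp(\cdot)$ failure term. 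The rank-one correction is bounded separately by applying A3 to $g(x)=\langle f_k(x),\mu\rangle$. Once you insert this device in place of your $D+E$ split, the rest of your outline reproduces the paper's proof.
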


We note that as the widths approach infinity. The probability estimate at the end of the theorem goes to zero.

The proof of the theorem will follow the techniques of 
\cite{nguyen2021tight, bombari2022memorization, nguyen2020global}. However, we want to iterate that there are several places where new additions and techniques must be carried out. In particular, expectation integrals will depend heavily on Gaussian analysis. When we use results from those papers we will explicitly say so by referencing them.

The theorem will be proven via a string of lemmas. To begin with we go through the basic idea of how we prove the theorem.

Recall that the empirical NTK as
\begin{equation*}
    K_L = JJ^T = \sum_{l=1}^k\left(\frac{\partial F_L}{\partial vec(W_l)}\right)
    \left(\frac{\partial F_L}{\partial vec(W_l)}\right)^T.
\end{equation*}

Using the chain rule, we can express the NTK in terms of the feature matrices as:

\begin{equation}\label{ntk_decomp}
    JJ^T = \sum_{k=0}^{L-1}F_lF_l^T\circ G_{k+1}G_{k+1}^T, 
\end{equation}
where $G_k \in \R^{N\times n_k}$ with $i^{th}$ row given by
\begin{equation*}
    (G_k)_{i:} =
    \begin{cases}
    \frac{1_N}{\sqrt{N}}, & \hspace{-0.2cm} k = L \\
    \Sigma_{L-1}(x_i)W_L, & \hspace{-0.2cm} k = L-1 \\
    \Sigma_k(x_i)\left(\prod_{j=k+1}^{L-1}\hspace{-0.1cm}W_j\Sigma_j(x_i)
    \right)
    \hspace{-0.1cm}W_L, & 
    \hspace{-0.25cm} k \in [L-2]
    \end{cases}
\end{equation*}

By Weyl's inequality, we obtain
\begin{equation}\label{weyl_ntk_decomp}
    \lambda_{\min}(JJ^T) \geq \sum_{k=0}^{L-1}\lambda_{\min}(F_kF_k^T \circ 
    G_{k+1}G_{k+1}^T).
\end{equation}
Each term in the sum on the left hand side of \eqref{weyl_ntk_decomp} can be further bounded by Schur's Theorem 
\cite{schur1911bemerkungen, horn1994topics} to give
\begin{align}
    &\lambda_{\min}(F_kF_k^T \circ G_{k+1}G_{k+1}^T) \nonumber\\
    &\geq 
\lambda_{\min}(F_kF_k^T)\min_{i \in [N]}\vert\vert (G_{k+1})_{i:}\vert\vert_2^2.
\label{schur}
\end{align}
\eqref{weyl_ntk_decomp} and \eqref{schur} imply
\begin{align}\label{ntk_lower_decomp}
    \lambda_{\min}(JJ^T) \geq  \sum_{k=0}^{L-1}
    \lambda_{\min}(F_kF_k^T)\min_{i \in [N]}\vert\vert (G_{k+1})_{i:}\vert\vert_2^2.
\end{align}

The strategy of the proof is to obtain bounds on the terms 
$\lambda_{\min}(F_kF_k^T)$ and $\vert\vert (G_{k+1})_{i:}\vert\vert_2^2$ separately, and then combine them together to obtain a bound on $JJ^T$.  The following 
lemma shows how to estimate the quantity $\vert\vert (G_{k+1})_{i:}\vert\vert_2^2$.

\begin{lemma}\label{G-bound}
Fix $k \in [L-2]$ and let $x \sim \mathcal{P}$. Then 
\begin{align*}
\bigg{\vert}\bigg{\vert}
\Sigma_k(x)\bigg{(}\prod_{l=k+1}^{L-1}W_l\Sigma_l(x)\bigg{)}W_L
\bigg{\vert}\bigg{\vert}^2_2 
=
    \Theta\Bigg{(}
\frac{\beta_k^3n_0^{3k/2}}{s^3}
\bigg{(}
\prod_{l=1}^{k-1}\beta_l^3n_l^3
\bigg{)}
\bigg{(}
\beta_l n_l
\bigg{)}
\Bigg{)}
\end{align*}
w.p. at least
$1 - \sum_{l=0}^{L-1}2\exp(-\Omega(n_l))$.

\end{lemma}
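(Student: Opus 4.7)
The plan is a backward recursion on the layer index, using the identity $v_m := \Sigma_m(x)\big(\prod_{l=m+1}^{L-1}W_l\Sigma_l(x)\big)W_L = \Sigma_m(x)W_{m+1}v_{m+1}$ with base case $v_{L-1} = \Sigma_{L-1}(x)W_L$, so that the quantity of interest is $\|v_k\|_2^2$. At each step $m$ (running from $L-1$ down to $k$), two concentration arguments are iterated: one for the Gaussian quadratic form $v_{m+1}^T W_{m+1}^T \Sigma_m^2 W_{m+1} v_{m+1}$, and one for the diagonal trace $\operatorname{tr}(\Sigma_m^2) = \sum_{j=1}^{n_m} \phi'(g_{m,j}(x))^2$. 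Conditioning on $(W_1,\ldots,W_m)$ freezes $\Sigma_m$, and after isolating the part of $W_{m+1}$ that feeds into the coupling direction $f_m(x)$ via a Gaussian projection decomposition (so that the residual Gaussian matrix is independent of $v_{m+1}$), Hanson--Wright concentrates the quadratic form at $\beta_{m+1}^2 \operatorname{tr}(\Sigma_m^2)\|v_{m+1}\|_2^2$ with failure probability $\exp(-\Omega(n_m))$. The trace is a sum of $n_m$ i.i.d.\ uniformly bounded variables once conditioned on $f_{m-1}(x)$ since $\phi'(t)^2 = 4 t^2 s^{-4} e^{-2t^2/s^2}$ is bounded, and Bernstein's inequality concentrates it at $n_m\,\E[\phi'(Z)^2]$ with $Z\sim \mathcal{N}(0,\beta_m^2\|f_{m-1}(x)\|_2^2)$, again with exponential tail.

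The explicit Gaussian integrals $\E[\phi'(Z)^2] = 4\sigma^2/(s(4\sigma^2+s^2)^{3/2})$ and its companion $\E[\phi(Z)^2] = s/\sqrt{4\sigma^2+s^2}$, the latter needed to propagate the feature norms $\|f_m(x)\|_2^2 = \sum_j \phi(g_{m,j}(x))^2$ layer by layer, supply all the $s$- and $\sigma$-dependencies. In the overparameterized regime enforced by assumption A4 these simplify to their leading asymptotic forms, and the powers of $1/s$ in the statement are tracked by counting how many times the quadratic integral has collapsed its $\sigma^2$ against $(4\sigma^2+s^2)^{3/2}$ in the denominator. Propagation of $\|f_m(x)\|_2^2$ uses assumption A3 (Lipschitz concentration of the data) at the input layer and the parallel Bernstein argument for subsequent layers, giving closed-form $\Theta$ scalings in $\beta_l$, $n_l$, $n_0$, and $s$.

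Telescoping the two concentration steps from $m = L-1$ down to $m = k$ and tracking exponents of $\beta_l$, $n_l$, $n_0$, and $s$ through the recursion then yields the stated $\Theta(\cdot)$ expression for $\|v_k\|_2^2$; a union bound over the $O(L)$ Hanson--Wright and Bernstein events delivers the high-probability bound $1 - \sum_{l=0}^{L-1} 2\exp(-\Omega(n_l))$ stated in the lemma.

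The principal technical obstacle is the statistical dependence between $\Sigma_m$ and $v_{m+1}$, both of which involve the weight matrices $W_{m+1},\ldots,W_{L-1}$. Fortunately the dependence of $v_{m+1}$ on the fresh matrix $W_{m+1}$ is confined to the column-wise projections $W_{m+1}^T f_m(x) \in \R^{n_{m+1}}$ appearing in the preactivations $g_{m+1,j}(x)$, so a standard Gaussian projection decomposition $W_{m+1} = P_{f_m}W_{m+1} + (I - P_{f_m})W_{m+1}$ isolates the coupled piece and leaves a truly independent Gaussian matrix on which Hanson--Wright applies directly, with the rank-one correction negligible in the overparameterized regime. A secondary obstacle is keeping each layer inside the asymptotic regime $\sigma_m^2 \gg s^2$ under which the Gaussian integrals take their clean leading form; this is precisely the role of assumption A4 on layerwise Lipschitz constants. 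The remainder is careful index- and power-bookkeeping in the spirit of \cite{nguyen2021tight, bombari2022memorization, nguyen2020global}.
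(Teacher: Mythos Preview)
Your backward-recursion approach is correct but genuinely different from the paper's. The paper runs the recursion \emph{forward}: it defines $B(p)=\Sigma_k(x)\prod_{l=k+1}^{p}W_l\Sigma_l(x)$ and tracks the Frobenius norm $\|B(p)\|_F^2$ (this is Lemma~\ref{c6}, resting on Lemma~\ref{c5} for $\|\Sigma_k\|_F^2$). At each step the fresh randomness is the set of i.i.d.\ columns $w_i$ of $W_p$, so $\|B(p)\|_F^2=\sum_i\|B(p-1)w_i\|_2^2\,\phi'(\langle f_{p-1},w_i\rangle)^2$ is handled by Bernstein. The within-column coupling between the two factors is dealt with crudely---boundedness of $\phi'$ for the upper bound and a locally constant minorant $\chi\le\phi'^2$ for the lower bound---rather than by any projection argument. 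Only at the very last step, with $A=B(L-1)$ frozen, does the paper invoke Hanson--Wright once for $\|AW_L\|_2^2$ (using the operator-norm bound of Lemma~\ref{d2b} to control the sub-Gaussian norm).

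Your route instead tracks a vector norm $\|v_m\|_2^2$, applies Hanson--Wright at every layer, and uses the rank-one projection $P_{f_m}$ to peel off the part of $W_{m+1}$ that couples into $v_{m+1}$. This is the technique of \cite{bombari2022memorization} and it handles the activation--weight dependence more cleanly than the paper's $\chi$ minorant; it also avoids tracking a full matrix. The price is the projection decomposition at every step and a slightly more delicate conditioning scheme. The paper's forward argument is more elementary (no projection trick, just i.i.d.\ sums), and because it carries a Frobenius norm it dovetails directly with the operator-norm lemma needed for the final Hanson--Wright tail. Both approaches yield the same $\Theta(\cdot)$ scaling and the same union-bound probability; the explicit Gaussian integrals you compute for $\E[\phi(Z)^2]$ and $\E[\phi'(Z)^2]$ are exactly those appearing in Lemmas~\ref{lemmac1_cos} and~\ref{c5}.
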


The following theorem derives bounds on the minimum singular value of the feature matrices associated to the network. 
We consider a set of i.i.d data samples 
$\{x_i\}_{i=1}^N$, drawn from distribution $\mathcal{P}$, which is assumed to satisfy assumptions A1-A3. The 
feature matrix at layer $k$ is given by 
$F_k = [f_k(x_1),\ldots.f_k(x_N)]^T \in \R^{N\times n_k}$. For the following theorem, we assume assumption A4.

\begin{theorem}\label{sing_val_feat}
Let $f_L$ denote a neural network of depth $L$ with activation function $\phi(x) = e^{-x^2/s^2}$, where $s^2 > 0$ is a fixed variance hyperparameter. We assume the following conditions hold:
\begin{align*}
    n_k &\geq N \\
    \prod_{l=1}^{k-1}Log(n_l) &= o\left(\min_{l \in [0,k]}n_l\right).
\end{align*}
The minimum singular value of the feature matrix $F_k$, denoted 
$\sigma_{\min}(F_{k})$, satisfies the following bound
\begin{align*}
    \sigma_{\min}\left(
F_k
\right)^2
= 
\Theta\left(
	\sqrt{n_0}\beta_kn_k\prod_{l=1}^{k-1}\sqrt{\beta_l}\sqrt{n_l}
	\right)
\end{align*}
w.p. at least
\begin{align*}
    1 - N(N-1)\exp\left(-
\Omega\left(
\frac{s^k\min_{l \in [0,k-1]}n_l}{N^2\prod_{l=1}^{k-2}Log(n_l)}
\right)
\right) 
- N\sum_{l=1}^k2\exp(-\Omega(n_l))
\end{align*}
\end{theorem}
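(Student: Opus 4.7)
The plan is to prove the theorem by induction on the layer index $k$, combining a Gershgorin-style lower bound on $\sigma_{\min}(F_k)^2 = \lambda_{\min}(F_k F_k^T)$ with explicit Gaussian integrals that are made tractable by the fact that $\phi(x) = e^{-x^2/s^2}$ multiplies cleanly against Gaussian densities. The base case $k=1$ is direct: conditioned on $X$, the pre-activations $(W_1 X^T)_{ij}$ are centered Gaussian with variance $\beta_1^2\|x_i\|_2^2 = \Theta(\beta_1^2 n_0)$ by A2, so both the diagonal and off-diagonal entries of $F_1 F_1^T$ have closed-form expectations. The inductive step then passes from layer $k-1$ to $k$ by conditioning on $F_{k-1}$: given the feature vectors at depth $k-1$, the pre-activations $g_{k,j}(x_i) = \langle (W_k)_{:,j}, f_{k-1}(x_i)\rangle + (b_k)_j$ are again Gaussian, now with variance $\beta_k^2\|f_{k-1}(x_i)\|_2^2$, and the inductive hypothesis supplies exactly the concentration of $\|f_{k-1}(x_i)\|_2^2$ needed to plug into the Gaussian integral at layer $k$.

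For the diagonal I would write $\|f_k(x_i)\|_2^2 = \sum_{j=1}^{n_k}\phi(g_{k,j}(x_i))^2$ and use the closed-form identity $\mathbb{E}[e^{-2G^2/s^2}] = (1+4\sigma^2/s^2)^{-1/2}$ for $G \sim \mathcal{N}(0,\sigma^2)$ to compute the conditional expectation; the sum of $n_k$ conditionally-independent bounded summands then concentrates with rate $\exp(-\Omega(n_k))$. To transfer this into concentration over the data $x_i \sim \mathcal{P}$, I would invoke the Lipschitz concentration property A3 applied to $x \mapsto \|f_k(x)\|_2^2$, using the layerwise Lipschitz bound assumed in A4. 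The product of $\sqrt{\beta_l n_l}$ factors in the final estimate traces back precisely to the telescoping accumulation of per-layer variances $\beta_l^2\|f_{l-1}\|_2^2$ through the inductive integration of $(1+4\sigma^2/s^2)^{-1/2}$.

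For the off-diagonal entry $\langle f_k(x_i), f_k(x_j)\rangle = \sum_{m=1}^{n_k}\phi(g_{k,m}(x_i))\phi(g_{k,m}(x_j))$ with $i \neq j$, the pair $(g_{k,m}(x_i), g_{k,m}(x_j))$ is conditionally on $F_{k-1}$ jointly centered Gaussian with covariance controlled by $\langle f_{k-1}(x_i), f_{k-1}(x_j)\rangle$. The product of two Gaussian bumps integrated against a bivariate Gaussian density is again computable in closed form, and the resulting expectation is strictly smaller than the diagonal one by a factor depending on the angle between the previous-layer features. A sub-exponential concentration argument (Hanson--Wright over the $n_k$ summands) followed by a union bound over the at most $N(N-1)$ pairs contributes the $N(N-1)\exp(-\Omega(s^k\min_l n_l/(N^2\prod\log n_l)))$ factor in the probability estimate, and this is exactly where the assumed width condition $\prod_{l=1}^{k-1}\log(n_l) = o(\min_l n_l)$ is needed, namely to guarantee that the per-pair concentration beats the pairwise union bound. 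Combining the two bounds via Gershgorin yields $\sigma_{\min}(F_k)^2$ at the claimed order $\Theta(\sqrt{n_0}\,\beta_k n_k\prod_{l<k}\sqrt{\beta_l n_l})$, and a final union bound over layers and samples produces the stated failure probability.

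The main obstacle I expect is the inductive step itself, because the pre-activations at layer $k$ are only \emph{conditionally} Gaussian given $F_{k-1}$. One must propagate both the diagonal estimate on $\|f_{k-1}(x_i)\|_2^2$ and the off-diagonal estimate on $\langle f_{k-1}(x_i), f_{k-1}(x_j)\rangle$ through the Gaussian integral for $\phi$ at layer $k$ without losing a factor of $\sqrt{\beta_l n_l}$ at any step, and simultaneously track how the Lipschitz norm bounded by A4 interacts with A3 to yield sample-level concentration whose failure probability stays exponentially small in $\min_l n_l$. The Gaussian form of $\phi$ is essential here, since it ensures that every expectation encountered along the induction can be written exactly and that the diagonal-to-off-diagonal ratio is geometric in the pairwise angle, so the proof does not carry over verbatim to activations such as Tanh.
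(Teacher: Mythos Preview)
Your Gershgorin argument on the \emph{uncentered} Gram matrix $F_kF_k^T$ has a genuine gap, and it stems from the positivity of the activation. Because $\phi(x)=e^{-x^2/s^2}>0$ everywhere, every coordinate of every feature vector $f_k(x_i)$ is strictly positive, so all the $f_k(x_i)$ lie in the same orthant and share a nonzero common mean $\mu=\mathbb{E}_x[f_k(x)]$. The off-diagonal entries $\langle f_k(x_i),f_k(x_j)\rangle$ are therefore of the \emph{same order} as the diagonal entries $\|f_k(x_i)\|_2^2$, and the Gershgorin lower bound $\min_i\|f_k(x_i)\|_2^2 - N\max_{i\neq j}|\langle f_k(x_i),f_k(x_j)\rangle|$ is negative and useless. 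Your closed-form bivariate integral is correct, but the ``factor depending on the angle between the previous-layer features'' is bounded away from zero, not of order $1/N$: the previous-layer features themselves cluster around their own nonzero mean, so their pairwise cosines are close to $1$. The induction therefore cannot manufacture the required $1/N$ gap at any $k\geq 2$.

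The paper's remedy is to center the features first. It works with $\widetilde{F}_k = F_k-\mathbb{E}_X[F_k]$ and invokes a deterministic inequality (Lemma~\ref{centfeat_est}) of the form $F_kF_k^T \succeq \widetilde{F}_k\widetilde{F}_k^T - \Lambda\, 1_N1_N^T\Lambda/\|\mu\|_2^2$, where $\Lambda=\mathrm{Diag}(F_k\mu-\|\mu\|_2^2 1_N)$. Gershgorin is then applied to $\widetilde{F}_k\widetilde{F}_k^T$: centering forces $\mathbb{E}_{x_i}[\langle\widetilde f_k(x_i),\widetilde f_k(x_j)\rangle]=0$, and the off-diagonal bound comes from Lipschitz concentration in the \emph{data} variable $x_i$ (assumptions A3 and A4), not from integrating over the weights and Hanson--Wright as you propose. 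The rank-one correction term is handled separately in operator norm, again via Lipschitz concentration of $x\mapsto\langle f_k(x),\mu\rangle$. So the missing ingredient in your plan is this centering decomposition; once it is in place, the relevant source of randomness for the off-diagonal control shifts from the weights to the data.
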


The proof of theorem \ref{thm;main_ntk_1} now follows from lemma 
\ref{G-bound} and theorem \ref{sing_val_feat}.

\begin{proof}[\textbf{Proof of theorem \ref{thm;main_ntk_1}}]
Apply
Theorem \ref{sing_val_feat} and lemma \ref{G-bound} to obtain lower 
bounds on the terms in the sum on the right hand side of 
\eqref{ntk_lower_decomp}. 
\end{proof}

We are left with proving lemma \ref{G-bound} and theorem 
\ref{sing_val_feat}. In order to do this we will need a string of preliminary lemmas.

\subsubsection{Preliminary lemmas}

\begin{lemma}\label{lemmac1_cos}
Let $\phi$ denote the activation function $\phi(x) = e^{-x^2/s^2}$ where $s$ is a fixed standard deviation parameter. Fix $0 \leq k \leq L-1$ and 
assume $x \sim \mathcal{P}$. Then
\begin{equation*}
	\vert\vert f_k(x)\vert\vert_2^2 = \Theta\bigg{(}s\cdot \sqrt{n_0}
	\prod_{l=1}^{k-1}\sqrt{\beta_l}\sqrt{n_l}\beta_k n_k\bigg{)}
\end{equation*}
w.p. $\geq 1 - \sum_{l=1}^k2exp(-\Omega(sn_l)) - 2exp(-\Omega(\sqrt{n_0}))$ over
$(W_l)_{l=1}^k$ and $x$.

Furthermore 
\begin{equation*}
	\mathbb{E}_{x\sim \mathcal{P}}\vert\vert f_k(x)\vert\vert^2_2 = 
	\Theta\bigg{(}s\cdot \sqrt{n_0}
	\prod_{l=1}^{k-1}\sqrt{\beta_l}\sqrt{n_l}\beta_k n_k\bigg{)}
\end{equation*} 
w.p. $\geq 1 - \sum_{l=1}^k2exp(-\Omega(sn_l))$ over
$(W_l)_{l=1}^k$.
\end{lemma}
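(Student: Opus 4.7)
The proof proceeds by induction on the layer index $k$. The plan at each layer is to condition on $f_{k-1}(x)$, exploit the fact that the pre-activations $g_{k,j} = (W_k^T f_{k-1})_j + b_{k,j}$ are then independent Gaussians with variance $\beta_k^2\|f_{k-1}\|_2^2$, compute $\mathbb{E}[\phi(g_{k,j})^2 \mid f_{k-1}]$ in closed form via the Gaussian convolution identity
$$
\int_{\R} e^{-2g^2/s^2}\,\frac{1}{\sqrt{2\pi}\,\sigma}\,e^{-g^2/(2\sigma^2)}\,dg \;=\; \frac{s}{\sqrt{s^2+4\sigma^2}},
$$
and then apply a Hoeffding-type concentration to $\|f_k(x)\|_2^2 = \sum_{j=1}^{n_k} \phi(g_{k,j})^2$ since each summand lies in $[0,1]$. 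Substituting the inductive hypothesis on $\|f_{k-1}\|_2^2$ into the explicit formula and multiplying by $n_k$ yields the asserted $\Theta(\cdot)$ expression for the conditional mean.

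The base case $k=0$ follows directly: $f_0(x)=x$, and assumption A2 together with the Lipschitz concentration A3 applied to $x\mapsto\|x\|_2$ produces $\|x\|_2^2 = \Theta(n_0)$ with the stated $1 - 2e^{-\Omega(\sqrt{n_0})}$ probability over $x$. The inductive step then combines (i) a Hoeffding bound on the $n_k$ bounded conditional summands, failing with probability at most $2e^{-\Omega(sn_k)}$, and (ii) the inductive hypothesis on $\|f_{k-1}\|_2^2$, which controls the conditional mean on an event whose complement has probability at most $\sum_{l=1}^{k-1}2e^{-\Omega(sn_l)} + 2e^{-\Omega(\sqrt{n_0})}$. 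A union bound over these events produces the first claim at layer $k$ with exactly the stated probability.

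For the second assertion, which replaces $x$ by an expectation over $x \sim \mathcal{P}$, I apply A3 to the random function $x\mapsto \|f_k(x)\|_2^2$. Its Lipschitz constant is bounded by $2\|f_k\|_\infty \|f_k\|_{\mathrm{Lip}} \le 2\sqrt{n_k}\,\|f_k\|_{\mathrm{Lip}}$ because $\phi\leq 1$, and assumption A4 controls $\|f_k\|_{\mathrm{Lip}}$. Consequently $\|f_k(x)\|_2^2$ concentrates around $\mathbb{E}_{x\sim\mathcal{P}}\|f_k(x)\|_2^2$, and the first part identifies that expectation on the high-probability event over the weights.

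The main obstacle is the careful bookkeeping of the layered randomness so that the failure probabilities aggregate in precisely the stated form: each layer's Hoeffding bound must be applied conditionally on the previous layer being \emph{good}, which forces a tower-of-expectations argument and a union bound across depths. A secondary subtlety is that Hoeffding is additive while the target statement is multiplicative, so at layer $k$ the concentration radius must be chosen proportional to the explicit Gaussian expression $n_k s/\sqrt{s^2+4\beta_k^2\|f_{k-1}\|_2^2}$; keeping this proportionality valid across all depths is precisely where assumption A4 enters, by preventing the conditional standard deviation $\beta_k\|f_{k-1}\|_2$ from blowing up with depth and driving the conditional mean into a regime where the stated $\Theta(\cdot)$ bounds no longer hold.
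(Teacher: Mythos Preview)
Your inductive scheme for the first assertion---condition on $f_{k-1}$, use that the pre-activations are conditionally i.i.d.\ Gaussians with variance $\beta_k^2\|f_{k-1}\|_2^2$, evaluate $\mathbb{E}[\phi(g_{k,j})^2]$ via the Gaussian integral, and then concentrate the sum of bounded summands---is exactly what the paper does. The paper invokes Bernstein after bounding the sub-exponential norm, while you invoke Hoeffding directly from $\phi^2\in[0,1]$; either works here.

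However, your final paragraph misidentifies where assumption A4 enters. The paper's proof of this lemma does \emph{not} use A4 at all. What keeps the conditional mean in the right regime is the induction hypothesis itself: once you know $\|f_{k-1}(x)\|_2^2=\Theta\big(s\sqrt{n_0}\,\beta_{k-1}n_{k-1}\prod_{l<k-1}\sqrt{\beta_l n_l}\big)$ on the inductive good event, the explicit formula
\[
\mathbb{E}_{W_k}\!\big[\phi(g_{k,j})^2\,\big|\,f_{k-1}\big]\;=\;\frac{s\,\beta_k\|f_{k-1}\|_2}{\sqrt{\,2\beta_k^2\|f_{k-1}\|_2^2+s^2\,}}
\]
is automatically $\Theta(s)$ (since $\beta_k\|f_{k-1}\|_2\gg s$), and multiplying by $n_k$ gives the target. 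No Lipschitz control is needed for this step; A4 is reserved for later lemmas (e.g.\ the centred-feature bounds) where one concentrates functions of $x$ rather than of $W_k$.

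For the second assertion your route genuinely diverges from the paper. You propose to invoke A3 on $x\mapsto\|f_k(x)\|_2^2$ (with A4 bounding its Lipschitz constant) to transfer the first part's bound to the $x$-expectation. The paper instead argues directly: write $\mathbb{E}_x\|f_k(x)\|_2^2=\sum_{j=1}^{n_k}\mathbb{E}_x[f_{k,j}(x)^2]$, observe by Jensen that each summand (as a random variable in $w_j$) has $\|\cdot\|_{\psi_1}$ norm $\mathcal{O}(1)$ because $\|\mathbb{E}_x[f_{k,j}^2]\|_{\psi_1}\le \mathbb{E}_x\|f_{k,j}^2\|_{\psi_1}$, and apply Bernstein over $W_k$. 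This avoids reintroducing $x$-randomness and gives the stated probability bound $1-\sum_{l=1}^k 2e^{-\Omega(sn_l)}$ cleanly, with no appeal to A4. Your approach can be made to work, but it is more circuitous and, because it routes through an $x$-concentration step, you would have to argue separately that the resulting failure probability over $x$ can be absorbed so as not to appear in the final bound---which is precisely why the paper's statement drops the $2e^{-\Omega(\sqrt{n_0})}$ term in the second claim.
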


\begin{proof}

The proof will be by induction. From the data assumptions, it is clear that the lemma is true for $k = 0$. Assume the lemma holds for $k-1$, we prove it for $k$.
The proof proceeds by conditioning on the event $(W_l)_{l=1}^{k-1}$ and obtaining
bounds over $W_k$. Then by the induction hypothesis and intersecting over the two events the result will follow.

We have that
$W_k \in R^{n_{k-1}\times n_{k}}$, so we can write $W_k = [w_1,\ldots ,w_{n_k}]$, where each $w_i \in \R^{n_{k-1}}$ and $w_i \sim 
\mathcal{N}(0, \beta_k^2I_{n_{k-1}})$. We then estimate
\begin{equation*}
	\vert\vert f_k(x)\vert\vert_2^2 = \sum_{i=1}^k
	\vert\vert f_{k,i}^2\vert\vert^2.
\end{equation*}
Taking the expectation, we have
\begin{align*}
	\mathbb{E}_{W_k}\vert\vert f_{k}\vert\vert_2^2 &= 
	\sum_{i=1}^{k}\mathbb{E}_{w_i}[f_{k,i}(x)^2]\text{, by independence} \\
	&= n_k \mathbb{E}_{w_i}[f_{k,i}(x)^2].
\end{align*}
By definition $f_{k,j}(x) = \phi(\langle w_j,f_{k-1}(x)\rangle)$. Note that the random variable $\langle w_j,f_{k-1}(x)\rangle$ is a univariate random variable distributed according to 
$\mathcal{N}(0, \beta_k^2\vert\vert f_{k-1}(x)\vert\vert_2^2)$. 
Computing this expectation comes down to computing the following integral
$\int_{\R}e^{-2w^2/s^2}e^{\frac{-w^2}{\beta_k^2\vert\vert 
f_{k-1}(x)\vert\vert_2^2}}dw$. This integral can be computed from  Gaussian integral computations as follows:
\begin{align*}
	\int_{\R}e^{-2w^2/s^2}e^{\frac{-w^2}{\beta_k^2\vert\vert 
f_{k-1}(x)\vert\vert_2^2}}dw &=
\int_{\R}e^{-\bigg{(}\frac{2}{s^2} + 
\frac{1}{\beta_k^2\vert\vert f_{k-1}\vert\vert_2^2} \bigg{)}w}dw \\
&=
\beta_k\vert\vert f_{k-1}(x)\vert\vert_2s\sqrt{\frac{1}
{2\beta_k^2\vert\vert f_{k-1}\vert\vert_2^2 + s^2}}.
\end{align*}
Thus we get
\begin{equation*}
	\mathbb{E}_{w_i}[f_{k,i}(x)^2] = 
	s\bigg{(}\frac{1}{2\beta_k^2\vert\vert f_{k-1}\vert\vert_2^2 + s^2} 
	\bigg{)}^{1/2}\beta_k\vert\vert f_{k-1}\vert\vert_2.
\end{equation*}
This in turn implies that 
\begin{equation*}
C \leq \mathbb{E}_{w_i}[f_{k,i}(x)^2] \leq \beta_k\vert\vert f_{k-1}\vert\vert_2
\end{equation*}
for some constant $C > 0$.
In particular, by induction we get
\begin{equation*}
	\mathbb{E}_{W_k}[\vert\vert f_{k}(x)\vert\vert_2^2] = 
	\Theta\bigg{(}\sqrt{n_0}s\prod_{l=1}^{k-1}\sqrt{\beta_l}\sqrt{n_l} 
	\beta_k n_k\bigg{)}.
\end{equation*}
Using the above expectation we would like to apply Bernstein's inequality \cite{vershynin2018high} to obtain a bound on $\vert\vert f_k\vert\vert_2^2$. In order to do this we need to compute the sub-Gaussian norm 
$\vert\vert f_{k,j}(x)^2\vert\vert_{\psi_1} = 
\vert\vert f_{k,j}(x)\vert\vert_{\psi_2}^2$. Since 
$\vert e^{-x^2/s^2} \vert \leq 1$, 
we have
\begin{equation}
\mathbb{E}_{w_i}\bigg{(} 
exp\big{(}
\frac{f_{k,j}(x)^2}{t^2}
\big{)}
\bigg{)} \leq 
\mathbb{E}_{w_j}\bigg{(} 
exp\big{(}\frac{1}{t^2} \big{)}
\bigg{)} = exp\big{(}\frac{1}{t^2} \big{)}\beta_k^{n_k}.
\end{equation}
By taking $t = \frac{1}{Log\bigg{(}\frac{1}{\beta_k^{n_{k-1}}} \bigg{)}}$ we get
that 
\begin{equation*}
	\mathbb{E}_{w_i}\bigg{(} 
exp\big{(}
\frac{f_{k,j}(x)^2}{t^2}
\big{)}
\bigg{)} \leq 
\mathbb{E}_{w_j}\bigg{(} 
exp\big{(}\frac{1}{t^2} \big{)}
\bigg{)} \leq 1,
\end{equation*}
using the fact that $\beta_k \leq 1$. In particular, we get that 
$\vert\vert f_{k,j}(x)\vert\vert_{\psi_2}^2 \leq \mathcal{O}(1)$.

Applying Bernstein's inequality \cite{vershynin2018high} to
$\sum_{i=1}^{n_k}\big{(}f_{k,i}(x)^2 - \mathbb{E}_{w_i}[f_{k,i}(x)^2] \big{)}$
we obtain
\begin{equation}
\vert 
\sum_{i=1}^{n_k}\big{(}f_{k,i}(x)^2 - \mathbb{E}_{w_i}[f_{k,i}(x)^2] \big{)} 
\vert \leq 
\frac{1}{2}\mathbb{E}_{W_k}\vert\vert f_k(x)\vert\vert^2_2
\end{equation}
w.p $\geq 1 - 2exp(-c\frac{\mathbb{E}_{W_k}\vert\vert f_k(x)\vert\vert^2_2}
{2})$. Thus we find that
\begin{equation}
\frac{1}{2}\mathbb{E}_{W_k}\vert\vert f_k(x)\vert\vert^2_2 \leq 
\vert\vert f_k(x)\vert\vert^2_2 \leq \frac{3}{2}\mathbb{E}_{W_k}
\vert\vert f_k(x)\vert\vert_2^2
\end{equation}
w.p. $\geq 1 - 2exp(-2s\Omega(n_k))$. Taking the intersection of the induction over $(W_l)_{l=1}^{k-1}$ and the even over $W_k$ proves the first part of the lemma.

The proof for $\mathbb{E}_x\vert\vert f_k(x)\vert\vert_2^2$ follows a similar argument using Jensen's inequality
$\vert\vert\mathbb{E}_x[f_{k,i}(x)^2\vert\vert_{\psi_1} \leq 
\mathbb{E}_x\vert\vert f_{k,i}(x)^2\vert\vert_{\psi_1} = \mathcal{O}(1)$.

\end{proof}

\begin{lemma}\label{c.2}
Let $\phi(x) = e^{-x^2/s^2}$. Then 
\begin{equation*}
	\vert\vert \mathbb{E}_x[f_k(x)]\vert\vert_2^2 = 
	\Theta(n_0\prod_{l=1}^l\beta_l^2n_l)
\end{equation*}
w.p. $\geq 1 - \sum_{l=1}^k2exp(-\Omega(n_l))$ over $(W_l)_{l=1}^k$.
\end{lemma}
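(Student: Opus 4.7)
The plan is to mirror the induction strategy used in Lemma \ref{lemmac1_cos}, but now tracking $\|\mathbb{E}_x f_k(x)\|_2^2$ instead of $\mathbb{E}_x\|f_k(x)\|_2^2$. For the base case $k=0$ we have $\mathbb{E}_x[f_0(x)] = \mathbb{E}_x[x]$, and data assumptions A1--A3 (Jensen plus a Lipschitz concentration argument applied to the coordinate functions) deliver $\|\mathbb{E}_x[x]\|_2^2 = \Theta(n_0)$, which matches the claim at $k=0$. Then I condition on the event from the inductive hypothesis covering $(W_l)_{l=1}^{k-1}$ and work with only the randomness in $W_k$.

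For the inductive step, write $W_k=[w_1,\ldots,w_{n_k}]$ with $w_j \sim \mathcal{N}(0,\beta_k^2 I_{n_{k-1}})$ i.i.d., so that
\[
    \|\mathbb{E}_x f_k(x)\|_2^2 \;=\; \sum_{j=1}^{n_k}\bigl(\mathbb{E}_x\,\phi(\langle w_j, f_{k-1}(x)\rangle)\bigr)^2.
\]
Take the expectation over $W_k$: by independence of the columns this is $n_k\,\mathbb{E}_w(\mathbb{E}_x\,\phi(\langle w,f_{k-1}(x)\rangle))^2$, which I would expand via two independent copies $x,x'$ as
\[
    n_k\,\mathbb{E}_{x,x'}\,\mathbb{E}_w\bigl[\phi(\langle w, f_{k-1}(x)\rangle)\,\phi(\langle w, f_{k-1}(x')\rangle)\bigr].
\]
For fixed $u=f_{k-1}(x)$, $v=f_{k-1}(x')$, the pair $(\langle w,u\rangle,\langle w,v\rangle)$ is a centered bivariate Gaussian with covariance $\beta_k^2$ times the Gram matrix of $(u,v)$. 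A direct bivariate Gaussian integral, in complete analogy with the single-variable computation already carried out in the proof of Lemma \ref{lemmac1_cos}, yields a closed-form expression involving only $\|u\|^2$, $\|v\|^2$, $\langle u,v\rangle$, $s$ and $\beta_k$. The induction hypothesis together with Lemma \ref{lemmac1_cos} supplies the sizes of $\|u\|^2, \|v\|^2, \langle u,v\rangle$ w.h.p., so this evaluation matches the target $\Theta(n_0\prod_{l=1}^{k}\beta_l^{2}n_l)$ on the event in question.

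Finally, to pass from the mean over $W_k$ to the actual quantity with high probability, I apply Bernstein's inequality to the sum of the $n_k$ independent, bounded (by $1$, since $\phi \in (0,1]$) terms $(\mathbb{E}_x\phi(\langle w_j,f_{k-1}\rangle))^2$, which is a sub-exponential sum exactly as in Lemma \ref{lemmac1_cos}. This gives a concentration window with failure probability $2\exp(-\Omega(n_k))$; intersecting with the inductive event yields the advertised bound $1-\sum_{l=1}^{k}2\exp(-\Omega(n_l))$.

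The main obstacle I expect is controlling the off-diagonal term $\langle f_{k-1}(x),f_{k-1}(x')\rangle$ for independent $x,x'$, since this is not directly supplied by Lemma \ref{lemmac1_cos} and appears inside the bivariate integrand. I would handle it by an auxiliary concentration step: fix one copy $x$, use assumption A3 (Lipschitz concentration of $\mathcal{P}$) together with the Lipschitz bound A4 on the network layers to show $\langle f_{k-1}(x),f_{k-1}(x')\rangle$ concentrates around its mean over $x'$ at scale controlled by the Lipschitz constant, iterate the role of $x$, and then use the induction on $\|\mathbb{E}_x f_{k-1}\|_2^2$ to identify the mean. This is where the $\prod_{l=1}^{k}\beta_l^2 n_l$ scaling (as opposed to the weaker $\sqrt{n_0}\prod\sqrt{\beta_l n_l}$ scaling produced by Jensen) enters, and it is the step most likely to require care with the Lipschitz constants.
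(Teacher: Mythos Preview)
Your overall scaffold---induction on $k$, conditioning on $(W_l)_{l\le k-1}$, computing the $W_k$-expectation of $\|\mathbb{E}_x f_k\|_2^2$, then Bernstein over the $n_k$ i.i.d.\ bounded summands---is exactly what the paper does. The difference is entirely in how the $W_k$-expectation is handled, and here the paper takes a much shorter path that bypasses the cross-term obstacle you single out.

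For the upper bound the paper does not touch any bivariate integral: it simply applies Jensen in $x$, $\|\mathbb{E}_x f_k\|_2^2 \le \mathbb{E}_x\|f_k\|_2^2$, and invokes Lemma~\ref{lemmac1_cos}. For the lower bound the paper applies Jensen in the \emph{other} variable: for each coordinate,
\[
\mathbb{E}_{w_j}\bigl(\mathbb{E}_x f_{k,j}(x)\bigr)^2 \;\ge\; \bigl(\mathbb{E}_{w_j}\mathbb{E}_x f_{k,j}(x)\bigr)^2 \;=\; \bigl(\mathbb{E}_x\,\mathbb{E}_{w_j}\phi(\langle w_j,f_{k-1}(x)\rangle)\bigr)^2,
\]
and the inner $\mathbb{E}_{w_j}\phi(\langle w_j,f_{k-1}(x)\rangle)$ is a \emph{univariate} Gaussian integral depending only on $\|f_{k-1}(x)\|_2$, computed with the same one-line trick as in Lemma~\ref{lemmac1_cos}. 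No second copy $x'$ ever appears, so the off-diagonal quantity $\langle f_{k-1}(x),f_{k-1}(x')\rangle$ never needs to be controlled.

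Your two-copy/bivariate route is a legitimate reformulation of the same $W_k$-expectation, and the auxiliary concentration you sketch (A3 plus A4 to pin down $\langle f_{k-1}(x),f_{k-1}(x')\rangle$ around $\|\mathbb{E}_x f_{k-1}\|_2^2$) can in principle be pushed through. But it is strictly more work: to extract the right order from the bivariate Gaussian determinant you would effectively need the size of $\mathbb{E}_x\|f_{k-1}\|^2-\|\mathbb{E}_x f_{k-1}\|^2$, i.e.\ Lemma~\ref{c4}, on top of the Lipschitz machinery. The paper's two Jensen moves reduce everything to one-dimensional Gaussian integrals already computed, so what you flag as the ``main obstacle'' is an artifact of your decomposition rather than an intrinsic difficulty of the lemma.
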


\begin{proof}
By Jensen's inequality 
$\vert\vert \mathbb{E}_x[f_k(x)]\vert\vert^2_2 \leq 
\mathbb{E}_x\vert\vert f_k(x)\vert\vert_2^2$. Thus the upper bound follows from
lemma \ref{lemmac1_cos}. 

The proof of the lower bound follows by induction. The $k = 0$ case following from the data assumption. Assume 
\begin{equation*}
	\vert\vert\mathbb{E}_x[f_k(x)]\vert\vert_2^2 = 
	\Omega(sn_0\prod_{l=1}^{k-1}\beta_k)
\end{equation*}
w.p. $\geq 1 - \sum_{l=1}^{k-1}exp(-\Omega(n_l))$ over $(W_l)_{l=1}^{k-1}$. We condition on the intersection of this event and the event of lemma \ref{lemmac1_cos} for $(W_l)_{l=1}^{k-1}$.

Write $W_k = [w_1,\ldots,w_{n_k}]$ with 
$w_j \sim \mathcal{N}(0, \beta_k^2I_{n_k-1})$ for $1\leq j \leq n_k$. Then
\begin{align*}
	\vert\vert \big{(} 
	\mathbb{E}_x[f_{k,i}(x)]	
	\big{)}^2\vert\vert_{\psi_1} &= 
	\vert\vert
	\mathbb{E}_x[f_{k,i}(x)]	
	\vert\vert^2_{\psi_2} \\
	&\leq 
	\mathbb{E}_x\vert\vert f_{k,i}(x)\vert\vert_{\psi_2}^2 \\
	&\leq
	C\sqrt{d}	
\end{align*}
for some $C > 0$.

Moreover, 
\begin{align*}
	\mathbb{E}_{W_k}
	\vert\vert
	\mathbb{E}_x[f_{k,i}(x)]	
	\vert\vert^2_2 &= \sum_{i=1}^{n_k}
	\mathbb{E}_{w_i}(\mathbb{E}_x[f_{k,i}(x)])^2 \\
	&\geq 
	\sum_{i=1}^{n_k}(\mathbb{E}_x\mathbb{E}_{w_i}[f_{k,i}(x)])^2 \\
	&\geq 
	\frac{\beta_k^2n_k}{4}(\mathbb{E}_x\vert\vert f_{k-1}(x)\vert\vert_2)^2 \\
	&=
	\Omega(sn_0\prod_{l=1}^{k}\beta_l^2n_l)
\end{align*}
where the second inequality is computed using the same technique as in 
lemma \ref{lemmac1_cos}.

Applying Bernstein's inequality \cite{vershynin2018high} we get
\begin{equation*}
	\vert\vert \mathbb{E}_x[f_k(x)]\vert\vert_2^2 \geq \frac{1}{2}
	\mathbb{E}_{W_k}\vert\vert \mathbb{E}_x[f_k(x)]\vert\vert_2^2 
	= \Omega(sn_0\prod_{l=1}^{n_k}\beta_l^2n_l)
\end{equation*}
w.p. $\geq 1 - 2exp(-\Omega(n_k))$ over $(W_k)$. Taking the intersection of all the events then finishes the proof.
\end{proof}

\begin{lemma}\label{c3}
Let $\phi(x) = e^{-x^2/s^2}$. Then for any 
$k \in [L-1]$ and any $i \in [N]$, we have
\begin{equation*}
\vert\vert f_k(x_i) - \mathbb{E}_x[f_k(x)]\vert\vert_2^2 = 
\Theta\bigg{(} 
\sqrt{n_0}\beta_kn_k\prod_{l=1}^{k-1}\sqrt{\beta_l}\sqrt{n_l}
\bigg{)}
\end{equation*}
w.p. $\geq 
1 - Nexp\bigg{(}-\Omega \bigg{(} 
\frac{\min_{l\in [0,k]}n_l}{\prod_{l=1}^{k-1}log(n_l)}
\bigg{)}
\bigg{)}
- \sum_{l=1}^kexp(-\Omega(n_l))$.
\end{lemma}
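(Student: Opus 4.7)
The plan is to apply the Lipschitz concentration assumption A3 to a scalar statistic of the feature map, identify its mean via Lemmas \ref{lemmac1_cos} and \ref{c.2}, and then union bound over the $N$ training points. Concretely, define $g_k(x) := \|f_k(x) - \mathbb{E}_y f_k(y)\|_2$. Because $\|\cdot\|_2$ is $1$-Lipschitz and shifting by the constant vector $\mathbb{E}_y f_k(y)$ does not change the Lipschitz constant, we have $\|g_k\|_{Lip} \leq \|f_k\|_{Lip}$. Conditioning on the weights $(W_l)_{l=1}^k$, assumption A3 then gives for any fixed $i \in [N]$ and any $t > 0$:
\[
\mathbf{P}\Big(\big| g_k(x_i) - \mathbb{E}_x g_k(x) \big| > t \Big) \leq 2\exp\!\big(-c\, t^2 / \|f_k\|_{Lip}^2\big).
\]
Choosing $t$ as a small constant fraction of $\mathbb{E}_x g_k(x)$ and union bounding over $i \in [N]$ will then pin $g_k(x_i)^2$ to the order of $(\mathbb{E}_x g_k(x))^2$ for every $i$.

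The second step is to compute $\mathbb{E}_x g_k(x)$. The variance identity
\[
\mathbb{E}_x g_k(x)^2 = \mathbb{E}_x\|f_k(x)\|_2^2 - \|\mathbb{E}_y f_k(y)\|_2^2
\]
lets me plug in Lemma \ref{lemmac1_cos}, which gives the first term on the order of $s\sqrt{n_0}\,\beta_k n_k \prod_{l=1}^{k-1}\sqrt{\beta_l n_l}$, and Lemma \ref{c.2}, which controls the second term on the order of $n_0\prod_{l=1}^k \beta_l^2 n_l$. In the asymptotic regime where the $\beta_l$ are $O(1)$ and the $n_l$ are large, the second term is of strictly smaller order than the first (the extra factor of $n_0^{1/2}\prod \beta_l^{3/2} n_l^{1/2}$ coming from the ratio is suppressed once one tracks the constants in the hypotheses), so $\mathbb{E}_x g_k(x)^2$ inherits the scale of $\mathbb{E}_x\|f_k(x)\|_2^2$. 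Jensen's inequality then pushes this down to $\mathbb{E}_x g_k(x)$ via $(\mathbb{E}_x g_k)^2 \leq \mathbb{E}_x g_k^2$ for the upper direction, and the concentration inequality itself (applied backwards, or a Paley--Zygmund step) supplies the matching lower direction.

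The main obstacle is calibrating the three scales: the target magnitude $(\mathbb{E}_x g_k)^2$, the squared Lipschitz constant from assumption A4, and the deviation level $t^2$ in the concentration bound. Reading off the probability estimate in the statement, namely $1 - N\exp(-\Omega(\min_l n_l / \prod_{l=1}^{k-1}\log n_l)) - \sum_l \exp(-\Omega(n_l))$, the first piece is exactly $N$ times the Lipschitz concentration probability, and comparing its exponent to $t^2/\|f_k\|_{Lip}^2$ forces $t^2 \asymp \mathbb{E}_x g_k^2$ together with the bound on $\|f_k\|_{Lip}^2$ supplied by A4. The secondary terms $\sum_l \exp(-\Omega(n_l))$ are precisely the probabilities from the high-probability events in Lemmas \ref{lemmac1_cos} and \ref{c.2} used to evaluate $\mathbb{E}_x\|f_k\|_2^2$ and $\|\mathbb{E}_x f_k\|_2^2$. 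The bookkeeping I expect to be hardest is verifying in full that A4 delivers exactly the advertised exponent, and handling the edge case $k=1$ where the inner product $\prod_{l=1}^{k-1}$ is empty and the bounds reduce to a direct computation on one Gaussian layer.
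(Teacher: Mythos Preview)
Your overall scheme---define the scalar $g_k(x)=\|f_k(x)-\mathbb{E}_y f_k(y)\|_2$, invoke A3 with the Lipschitz bound from A4, pin $g_k(x_i)$ to its mean, and union bound over $i$---is exactly the paper's route. Two of your intermediate steps, however, do not go through as written.

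First, the variance identity $\mathbb{E}_x g_k^2=\mathbb{E}_x\|f_k\|_2^2-\|\mathbb{E}_x f_k\|_2^2$ combined with the \emph{stated} orders from Lemmas~\ref{lemmac1_cos} and~\ref{c.2} does not give you the claimed scale. Compare the two terms: Lemma~\ref{lemmac1_cos} gives $\Theta\big(s\sqrt{n_0}\,\beta_k n_k\prod_{l<k}\sqrt{\beta_l n_l}\big)$ while Lemma~\ref{c.2} gives $\Theta\big(n_0\prod_{l\le k}\beta_l^2 n_l\big)$; their ratio carries a factor $\sqrt{n_0}\prod_{l<k}\sqrt{n_l}$ that \emph{grows} with the widths, so your assertion that the second term is ``of strictly smaller order'' is not justified by the hypotheses and cannot be rescued by ``tracking constants.'' The paper avoids this by invoking Lemma~\ref{c4}, whose proof bounds $\mathbb{E}_{W_k}\|\mathbb{E}_x f_k\|_2^2$ directly via $|\phi|\le 1$ on the cross term (before passing to the high-probability statement of Lemma~\ref{c.2}), yielding a subtracted term of order only $n_k\beta_k$. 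You need that finer step, or simply cite Lemma~\ref{c4}.

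Second, your passage from $\mathbb{E}_x g_k^2$ down to $\mathbb{E}_x g_k$ is the delicate direction, and neither ``applying concentration backwards'' nor Paley--Zygmund supplies it cleanly here. The paper's device is to write $(\mathbb{E}_x g_k)^2=\mathbb{E}_x g_k^2-\mathrm{Var}(g_k)$ and bound $\mathrm{Var}(g_k)=\mathbb{E}_x|g_k-\mathbb{E}g_k|^2$ by integrating the Lipschitz tail from A3, which gives $\mathrm{Var}(g_k)\le \frac{2}{c}\|f_k\|_{Lip}^2$. Since A4 makes $\|f_k\|_{Lip}^2$ smaller (by the factor $\min_l n_l$ in the denominator) than the target scale $\mathbb{E}_x g_k^2$, the subtraction is harmless and $(\mathbb{E}_x g_k)^2=\Theta(\mathbb{E}_x g_k^2)$ follows. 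This is the step where A4 actually earns its keep, and it is what produces the $\min_{l}n_l/\prod_l\log n_l$ in the final exponent when you later set $t=\tfrac12\mathbb{E}_x g_k$.
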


\begin{proof}

Let $X : \R^{n_0} \rightarrow \R$ denote the random variable defined 
by $X(x_i) = \vert\vert f_k(x_i) - \mathbb{E}_x[f_k(x)]\vert\vert_2$. 
By assumption A4, we have
\begin{equation*}
	\vert\vert X\vert\vert_{Lip}^2 = \mathcal{O}\bigg{(} 
	\frac{\beta_k n_k\prod_{l=1}^{k-1}\sqrt{\beta_l}n_l\prod_{l=1}^{k-1}Log(n_l)}
	{s^k\min_{l \in [0,k]}n_l}
	\bigg{)}
\end{equation*}
w.p. $\geq 1 - \sum_{l=1}^kexp(-\Omega(n_l))$.

We use the notation $\mathbb{E}[X] = \mathbb{E}_{x_i}[X(x_i)] = 
\int_{\R^{n_0}}X(x_i)d\mathcal{P}(x_i)$. We then have
\begin{align*}
	\mathbb{E}[X]^2 &= \mathbb{E}[X^2] - 
	\mathbb{E}[\vert X - \mathbb{E}X\vert^2]  \\
	&\geq 
	\mathbb{E}[X^2] - 
	\int_{0}^{\infty}\mathbb{P}(|X - \mathbb{E}X|>\sqrt{t})dt \\
	&\geq 
	\mathbb{E}[X^2] - \int_{0}^{\infty} 
	2exp\bigg{(}\frac{-ct}{\vert\vert X\vert\vert_{Lip}^2} \bigg{)}dt \\
	&=  \mathbb{E}[X^2] - \frac{2}{c}\vert\vert X\vert\vert_{Lip}^2.
\end{align*}

By lemma \ref{c4}, we have w.p. $\geq 1 - \sum_{l=1}^kexp(-\Omega(n_l))$ 
over $(W_l)_{l=1}^k$ that
\begin{equation*}
	\mathbb{E}[X^2] = 
	\Theta\bigg{(}
	\sqrt{n_0}\beta_kn_k\prod_{l=1}^{k-1}\sqrt{\beta_l}\sqrt{n_l}\bigg{)}
\end{equation*}
which implies
\begin{equation*}
\mathbb{E}[X] = 
\Omega\bigg{(}
	\sqrt{\sqrt{n_0}\beta_kn_k\prod_{l=1}^{k-1}\sqrt{\beta_l}\sqrt{n_l}}
	\bigg{)}.
\end{equation*}
Moreover, by Jensen's inequality $\mathbb{E}[X] \leq 
\sqrt{\mathbb{E}[X^2]} = \mathcal{O}\big{(}
\sqrt{\sqrt{n_0}\beta_kn_k\prod_{l=1}^{k-1}\sqrt{\beta_l}\sqrt{n_l}}
\big{)}$. 

Putting the above two asymptotic bounds together we obtain
\begin{equation*}
	\mathbb{E}[X] = \Theta\bigg{(}
	\sqrt{\sqrt{n_0}\beta_kn_k\prod_{l=1}^{k-1}\sqrt{\beta_l}\sqrt{n_l}}
	\bigg{)}
\end{equation*}
w.p. $\geq 1 - \sum_{l=1}^kexp(-\Omega(n_l))$ over $(W_l)_{l=1}^k$.

We condition on the above event and obtain bounds over each sample.
Using Lipschitz concentration, see assumption A3, we have that
$\frac{1}{2}\mathbb{E}[X] \leq X \leq \frac{3}{2}\mathbb{E}[X]$. Therefore, 
\begin{equation*}
X = \Theta\bigg{(}
\sqrt{\sqrt{n_0}\beta_kn_k\prod_{l=1}^{k-1}\sqrt{\beta_l}\sqrt{n_l}}
\bigg{)}
\end{equation*}
w.p. $\geq 1 - exp\bigg{(}-\Omega \bigg{(} 
\frac{\min_{l\in [0,k]}n_l}{\prod_{l=1}^{k-1}log(n_l)}
\bigg{)}
\bigg{)}$. Taking the union bounds over the $N$ samples and intersecting them with the above event over $(W_l)_{l=1}^k$ gives the lemma.
\end{proof}

\begin{lemma}\label{c4}

Let $\phi(x) = e^{-x^2/s^2}$. Then 
\begin{equation*}
	\mathbb{E}_x\vert\vert f_k(x) - \mathbb{E}_x[f_k(x)]\vert\vert_2^2 = 
	\Theta\bigg{(}
	\sqrt{n_0}\beta_kn_k\prod_{l=1}^{k-1}\beta_ln_l
	\bigg{)}
\end{equation*}
w.p. $\geq 1 - \sum_{l=1}^{k}exp(-\Omega(n_l))$ over $(W_l)_{l=1}^k$.
\end{lemma}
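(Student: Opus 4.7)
My approach is to decompose the quantity as a sum of per-coordinate variances and then combine concentration over $W_k$ with the high-probability estimates already established. Conditional on the inner weights $(W_l)_{l=1}^{k-1}$, the coordinates
$f_{k,i}(x) = \phi(\langle w_i, f_{k-1}(x)\rangle)$ are i.i.d.\ over $i=1,\ldots,n_k$ because the columns $w_i$ of $W_k$ are i.i.d.\ $\mathcal{N}(0,\beta_k^2 I)$. Hence
\begin{equation*}
\mathbb{E}_x\|f_k(x)-\mathbb{E}_x[f_k(x)]\|_2^2 \;=\; \sum_{i=1}^{n_k}\operatorname{Var}_x(f_{k,i}(x)),
\end{equation*}
which presents the target as a sum of $n_k$ i.i.d.\ scalar random variables in the randomness of $W_k$. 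I would first condition on the high-probability event from Lemma \ref{lemmac1_cos} that $\|f_{k-1}(x)\|_2^2$ takes its typical order, together with the event from Lemma \ref{c.2} controlling $\|\mathbb{E}_x f_{k-1}(x)\|_2^2$; the combined failure probability is $\sum_{l=1}^{k-1}\exp(-\Omega(n_l))$.

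Next I would compute $\mathbb{E}_{w_i}\operatorname{Var}_x(f_{k,i})$ in closed form by expanding as $\mathbb{E}_{w_i,x}[\phi^2] - \mathbb{E}_{w_i}[(\mathbb{E}_x\phi)^2]$. The first term reduces to the same one-dimensional Gaussian integral $\int_{\mathbb{R}} e^{-2w^2/s^2}e^{-w^2/(\beta_k^2\|f_{k-1}\|_2^2)}\,dw$ that is evaluated in the proof of Lemma \ref{lemmac1_cos}, and inherits the scaling derived there. The second term is controlled via Lemma \ref{c.2} applied at layer $k-1$, giving a bound on $\|\mathbb{E}_x f_{k-1}\|_2$ that propagates through the Gaussian integral over $w_i$. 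Matching the two contributions and multiplying by $n_k$ yields the asserted $\Theta\!\left(\sqrt{n_0}\beta_k n_k\prod_{l=1}^{k-1}\beta_l n_l\right)$ on expectation. Bernstein's inequality on the $n_k$ i.i.d.\ summands $\operatorname{Var}_x(f_{k,i})$---each bounded in $[0,1]$ since $\phi\in[0,1]$ and therefore having $O(1)$ sub-exponential norm, exactly as in the proof of Lemma \ref{lemmac1_cos}---then concentrates the sum within a constant factor of its mean with probability $\geq 1-2\exp(-\Omega(n_k))$. Intersecting with the conditioning event gives the stated failure probability $\sum_{l=1}^k\exp(-\Omega(n_l))$.

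The main obstacle is showing that the subtracted term $\mathbb{E}_{w_i}[(\mathbb{E}_x\phi(\langle w_i,f_{k-1}(x)\rangle))^2]$ does not cancel the leading order of $\mathbb{E}_{w_i,x}[\phi^2]$: because $|\phi|\le 1$, both moments are a priori of comparable size, and only the anti-concentration of $\langle w_i, f_{k-1}(x)\rangle$ across different samples of $x$ (which is where the Lipschitz-concentration assumption A3 enters, via Lemma \ref{c.2} and assumption A4) separates them by the right factor. Verifying this strict separation is where I expect to spend most of the effort; the Bernstein step and the conditioning bookkeeping are routine by comparison.
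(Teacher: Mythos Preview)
Your proposal matches the paper's proof almost step for step: decompose into per-coordinate variances, condition on the inner-weight events from Lemmas~\ref{lemmac1_cos} and~\ref{c.2}, compute the $W_k$-expectation by writing it as $\mathbb{E}_{W_k}\mathbb{E}_x\|f_k\|_2^2-\mathbb{E}_{W_k}\|\mathbb{E}_x f_k\|_2^2$, and then concentrate via Bernstein using the $O(1)$ sub-exponential norm bound coming from $|\phi|\le 1$.

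The one place where you over-anticipate difficulty is the separation between the two terms. You flag this as the main obstacle and expect to need A3/A4 (Lipschitz concentration) to resolve it. The paper handles it more cheaply: it rewrites the subtracted term as the double-sample cross moment
\[
\mathbb{E}_{W_k}\|\mathbb{E}_x f_k(x)\|_2^2
= n_k\,\mathbb{E}_x\mathbb{E}_y\,\mathbb{E}_{w_1}\!\big[\phi(\langle w_1,f_{k-1}(x)\rangle)\,\phi(\langle w_1,f_{k-1}(y)\rangle)\big],
\]
and bounds this crudely using $|\phi|\le 1$ together with Lemma~\ref{c.2} and Jensen, obtaining a quantity of order $n_k\beta_k$. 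Since the main term is $\Theta\big(\sqrt{n_0}\,\beta_k n_k\prod_{l=1}^{k-1}\sqrt{\beta_l n_l}\big)$, the subtraction is asymptotically negligible in the regime of the paper, and no anti-concentration argument or appeal to assumption~A4 is needed here. Everything else in your plan is exactly what the paper does.
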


\begin{proof}

The proof is by induction. Note that the $k = 0$ case is given by the concentration inequality assumption of the data. Assume the lemma is true for $k-1$. We condition on this event over 
$(W_l)_{l=1}^{k-1}$ and obtain bounds over $W_k$. Then taking the intersection of the two events we will get a proof of the lemma.

We recall that we write $W_k = [w_1,\ldots, w_{n_k}]$ where 
$w_i \sim \mathcal{N}(0, \beta_k^2I_{n_{k-1}})$. By expanding the squared norm we have
\begin{equation*}
\mathbb{E}_x\vert\vert f_k(x) - \mathbb{E}_x[f_k(x)]\vert\vert_2^2 = 
\sum_{j=1}^{n_k}\mathbb{E}_x(f_{k,j}(x) - \mathbb{E}_x[f_{k,j}(x)])^2.
\end{equation*}
We now take the expectation over $W_k$ to obtain
\begin{equation*}
	\mathbb{E}_{W_k}\mathbb{E}_x\vert\vert f_k(x) - 
	\mathbb{E}_x[f_k(x)]\vert\vert^2_2 =
	\mathbb{E}_{W_k}\mathbb{E}_x\vert\vert f_k(x)\vert\vert_2^2 - 
	\mathbb{E}_{W_k}\vert\vert \mathbb{E}_xf_k(x)\vert\vert_2^2.
\end{equation*}

From the proof of lemma \ref{lemmac1_cos}, we know that
\begin{equation*}
\mathbb{E}_{W_k}\vert\vert f_k(x)\vert\vert_2^2 \geq  
C\frac{\beta_kn_k}{2}\vert\vert f_{k-1}(x)\vert\vert_2
\end{equation*}
for some constant $C > 0$.
Therefore, we can estimate
\begin{align*}
&\mathbb{E}_{W_k}\mathbb{E}_x\vert\vert f_k(x)\vert\vert_2^2 - 
	\mathbb{E}_{W_k}\vert\vert \mathbb{E}_xf_k(x)\vert\vert_2^2 \\
	&\geq
	C\frac{\beta_kn_k}{2}\mathbb{E}_x\vert\vert f_{k-1}(x)\vert\vert_2 - 
	\mathbb{E}_x\mathbb{E}_y
	\sum_{i=1}^{n_k}\mathbb{E}_{w_i}
	\phi(\langle w_i, f_{k-1}(x)\rangle)\phi(\langle w_i, f_{k-1}(y)\rangle) \\
	&=
	C\frac{\beta_kn_k}{2}\mathbb{E}_x\vert\vert f_{k-1}(x)\vert\vert_2 - 
	n_k\mathbb{E}_x\mathbb{E}_y
	\mathbb{E}_{w_1}
	\phi(\langle w_1, f_{k-1}(x)\rangle)\phi(\langle w_1, f_{k-1}(y)\rangle) \\
	&\geq
	C\sqrt{n_0}\beta_kn_k\prod_{l=1}^{k-1}\sqrt{\beta_l}\sqrt{n_l} 
	- n_k\beta_k \\
	&=
	C\sqrt{n_0}\beta_kn_k\prod_{l=1}^{k-1}\sqrt{\beta_l}\sqrt{n_l}
\end{align*}
where to get the second inequality we have used lemma \ref{c.2}, Jensen's inequality and the fact that $\vert\phi(x)\vert \leq 1$.
In order to get an upper bound we observe
\begin{align*}
	\mathbb{E}_{W_k}\mathbb{E}_x\vert\vert f_k(x) - 
	\mathbb{E}_x[f_k(x)]\vert\vert^2_2 &\leq 
	\mathbb{E}_{W_k}\mathbb{E}_x\vert\vert f_k(x)\vert\vert_2^2 \\
	&\leq 
	\frac{C\beta_kn_k}{2}\mathbb{E}_x\vert\vert f_{k-1}(x)\vert\vert_2 \\
	&\leq C\sqrt{n_0}\beta_kn_k\prod_{l=1}^{k-1}\sqrt{\beta_l}\sqrt{n_l}.
\end{align*}
Applying Bernstein's inequality \cite{vershynin2018high} we get
\begin{equation*}
\frac{1}{2}\mathbb{E}_{W_k}\mathbb{E}_x\vert\vert f_k(x) - 
	\mathbb{E}_x[f_k(x)]\vert\vert^2_2 \leq 
	\mathbb{E}_x\vert\vert f_k(x) - 
	\mathbb{E}_x[f_k(x)]\vert\vert^2_2 \leq 
	\frac{3}{2}\mathbb{E}_{W_k}\mathbb{E}_x\vert\vert f_k(x) - 
	\mathbb{E}_x[f_k(x)]\vert\vert^2_2
\end{equation*}
w.p. $\geq 1 - exp(-\Omega(n_k))$ over $W_k$. Taking the intersection of 
that event, together with the conditioned event over $(W_l)_{l=1}^{k-1}$ 
gives the statement of the lemma.
\end{proof}

\begin{lemma}\label{c5}
For the activation function $\phi(x) = e^{-x^2/s^2}$ we have that
\begin{equation*}
	\vert\vert\Sigma_k(x)\vert\vert_F^2 = \Theta\bigg{(}
	\frac{\beta_k^3}{s}n_kn_0^{3k/2}\prod_{l=1}^{k-1}\beta_l^3n_l^3
	\bigg{)}
\end{equation*}
w.p. $\geq 1 - \sum_{l=1}^k2exp(-\Omega(n_l)) - 2exp(-\Omega(\sqrt{n_0}))$.
\end{lemma}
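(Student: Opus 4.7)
The argument mirrors the proof of Lemma~\ref{lemmac1_cos}: decompose $\|\Sigma_k(x)\|_F^2$ as a sum of independent bounded summands, compute the expectation via a direct Gaussian integral, and then apply Bernstein's inequality to concentrate around this expectation. The plan is to condition first on the high-probability event from Lemma~\ref{lemmac1_cos} on which $\|f_{k-1}(x)\|_2^2$ attains its claimed order, and then to handle the randomness over $W_k$ on top of that conditioning.

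Since $\Sigma_k(x)$ is diagonal with entries $\phi'(g_{k,j}(x))$ where $g_{k,j}(x) = \langle w_j, f_{k-1}(x)\rangle$ and $w_j \sim \mathcal{N}(0,\beta_k^2 I_{n_{k-1}})$ are i.i.d., conditioning on $f_{k-1}(x)$ makes the $g_{k,j}(x)$ i.i.d.\ $\mathcal{N}(0,\sigma^2)$ with $\sigma^2 = \beta_k^2\|f_{k-1}(x)\|_2^2$, and
\[
\|\Sigma_k(x)\|_F^2 = \sum_{j=1}^{n_k}\phi'(g_{k,j}(x))^2.
\]
Using $\phi'(x) = -(2x/s^2)\,e^{-x^2/s^2}$, the single-neuron expectation reduces to $\mathbb{E}_{w_j}[\phi'(g_{k,j})^2] = (4/s^4)\,\mathbb{E}_{g\sim\mathcal{N}(0,\sigma^2)}[g^2 e^{-2g^2/s^2}]$, which I would evaluate in closed form by combining the two Gaussian factors and using the identity $\int g^2 e^{-a g^2}\,dg = \sqrt{\pi}/(2a^{3/2})$, exactly as in the computation of $\mathbb{E}_{w_i}[f_{k,i}(x)^2]$ in Lemma~\ref{lemmac1_cos}. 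Multiplying by $n_k$ and substituting the asymptotic order of $\sigma^2$ supplied by Lemma~\ref{lemmac1_cos} then yields the target scaling for $\mathbb{E}_{W_k}\|\Sigma_k(x)\|_F^2$.

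For concentration, each summand $\phi'(g_{k,j})^2$ is uniformly bounded (since $x^2 e^{-2x^2/s^2}$ is maximized at $x^2 = s^2/2$), hence sub-exponential with $\psi_1$-norm of order $O(1/s^2)$; this is the analogue of the $|\phi|\leq 1$ calibration used in Lemma~\ref{lemmac1_cos}. Applying Bernstein's inequality to $\sum_j\bigl[\phi'(g_{k,j})^2 - \mathbb{E}\phi'(g_{k,j})^2\bigr]$ gives
\[
\tfrac{1}{2}\,\mathbb{E}_{W_k}\|\Sigma_k\|_F^2 \;\leq\; \|\Sigma_k(x)\|_F^2 \;\leq\; \tfrac{3}{2}\,\mathbb{E}_{W_k}\|\Sigma_k\|_F^2
\]
with probability at least $1-2\exp(-\Omega(n_k))$ over $W_k$. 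Intersecting with the Lemma~\ref{lemmac1_cos} event that controls $\|f_{k-1}(x)\|_2^2$ contributes the remaining $\sum_{l=1}^{k-1}2\exp(-\Omega(n_l)) + 2\exp(-\Omega(\sqrt{n_0}))$ in the failure probability, closing the argument.

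The main obstacle is the Gaussian integral itself: one must carry the factor $e^{-2g^2/s^2}$ through analytically rather than merely bounding $\phi'$ in sup-norm, since a sup-norm bound is too crude to recover the correct $\Theta$-dependence on $s$, $\beta_k$, and the widths. One also has to confirm that one is in the regime $\sigma^2 = \beta_k^2\|f_{k-1}\|_2^2 \gg s^2$ (guaranteed by the width assumptions combined with Lemma~\ref{lemmac1_cos}) so that the denominator $(4\sigma^2+s^2)^{3/2}$ reduces to $(4\sigma^2)^{3/2}$ at leading order, and then carefully propagate the exact product of $\beta_l$ and $n_l$ factors supplied by Lemma~\ref{lemmac1_cos} through $\sigma^2$ into the final expression.
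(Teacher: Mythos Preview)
Your proposal is correct and follows essentially the same route as the paper: condition on the Lemma~\ref{lemmac1_cos} event controlling $\|f_{k-1}(x)\|_2^2$, write $\|\Sigma_k(x)\|_F^2$ as a sum of i.i.d.\ terms $\phi'(\langle w_j,f_{k-1}(x)\rangle)^2$, evaluate the single-term expectation via the same Gaussian integral $\int w^2 e^{-(2/s^2+1/\sigma^2)w^2}\,dw$, and then concentrate. The only cosmetic difference is that the paper invokes Hoeffding's inequality (using directly that $\phi'(x)^2$ is uniformly bounded by $2/(es^2)$) rather than Bernstein's; since bounded summands are trivially sub-exponential, your Bernstein argument is equally valid and the resulting probability bound is the same.
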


\begin{proof}
We first observe that lemma \ref{lemmac1_cos} implies that 
$\vert\vert f_{k-1}(x)\vert\vert \neq 0$ w.p. $\geq 1 - \sum_{l=0}^{k-1}2exp(-2s\Omega(n_k))$
which in turn implies that 
$f_{k-}(x) \neq 0$ w.h.p. $\geq 1 - \sum_{l=0}^{k-1}2exp(-2s\Omega(n_k))$ over
$(W_l)_{l=1}^{k-1}$ and $x$. We condition on that event and obtain bounds on 
$W_k$. Taking the intersection of the two events will then complete the proof.

Write $W_k = [w_1,\ldots ,w_{n_{k}}]$. Then 
$\vert\vert\Sigma_k(x)\vert\vert_F^2 =
\sum_{i=1}^{n_k}\phi'(\langle f_{k-1}(x), w_i\rangle)^2$. Thus 
$\mathbb{E}_{W_k}\vert\vert\Sigma_k(x)\vert\vert_F^2 = 
n_k\mathbb{E}_{w_1}[\phi'(\langle f_{k-1}(x), w_1\rangle)^2]$, by independence.
We have
\begin{align*}
	\mathbb{E}_{W_k}\vert\vert \Sigma_k(x)\vert\vert_F^2 &=
	n_k\mathbb{E}_{w_1}[\phi'(\langle f_{k-1}(x), w_1\rangle)^2] \\
	&=
	\frac{4n_k}{s^4}\mathbb{E}_{w_1}[\langle f_{k-1}(x), w_1\rangle^2
	e^{-\frac{2\langle f_{k-1}(x), w_1\rangle^2}{s^2}}. 
\end{align*}
Using the fact that $\langle f_{k-1}(x), w_1\rangle$ is a univariate random variable distributed according to 
$\mathcal{N}(0, \beta_k^2\vert\vert f_{k-1}(x)\vert\vert^2_2)$. Thus the above expectation is equivalent to 
$\frac{4n_k}{s^4}\mathbb{E}_{w}[w^2e^{-\frac{2w^2}{s^2}}]$ with 
$w \sim \mathcal{N}(0, \beta_k^2\vert\vert f_{k-1}(x)\vert\vert^2_2)$. We then
compute
\begin{align*}
	\mathbb{E}_{w}[w^2e^{-\frac{2w^2}{s^2}}] &= \int_{\R}w^2e^{-\frac{2w^2}{s^2}}
	e^{-\frac{w^2}{\beta_k^2\vert\vert f_{k-1}(x)\vert\vert_2^2}}dw \\
	&=
	\int_{\R}w^2e^{-\big{(}\frac{2}{s^2} + \frac{1}
	{\beta_k^2\vert\vert f_{k-1}(x)\vert\vert_2^2} \big{)}w^2}dw \\
	&= \int_{\R}
	w^2e^{-\big{(}\frac{2\beta_k^2\vert\vert f_{k-1}(x)_2^2 + s^2}
	{\beta_k^2s^2\vert\vert f_{k-1}(x)\vert\vert_2^2} \big{)}w^2}dw \\
	&=
	\frac{-1}{2}\bigg{(} 
	\frac{\beta_k^2s^2\vert\vert f_{k-1}(x)\vert\vert_2^2}
	{2\beta_k^2\vert\vert f_{k-1}(x)\vert\vert_2^2 + s^2}	
	\bigg{)}\int_{\R}
	w\frac{d}{dw}\bigg{(} 
	e^{-\big{(}\frac{2\beta_k^2\vert\vert f_{k-1}(x)\vert\vert_2^2 + s^2}
	{\beta_k^2s^2\vert\vert f_{k-1}(x)\vert\vert_2^2} \big{)}w^2}
	\bigg{)}dw \\
	&=
	\frac{1}{2}\bigg{(} 
	\frac{\beta_k^2s^2\vert\vert f_{k-1}(x)\vert\vert_2^2}
	{2\beta_k^2\vert\vert f_{k-1}(x)\vert\vert_2^2 + s^2}	
	\bigg{)}
	\int_{\R}
	e^{-\big{(}\frac{2\beta_k^2\vert\vert f_{k-1}(x)\vert\vert_2^2 + s^2}
	{\beta_k^2s^2\vert\vert f_{k-1}(x)\vert\vert_2^2} \big{)}w^2}dw \\
	&=
	\frac{1}{2}\bigg{(} 
	\frac{\beta_k^2s^2\vert\vert f_{k-1}(x)\vert\vert_2^2}
	{2\beta_k^2\vert\vert f_{k-1}(x)\vert\vert_2^2 + s^2}	
	\bigg{)}^{3/2}.
	\end{align*}
	Thus we get 
	\begin{equation*}
	\mathbb{E}_{W_k}[\vert\vert \Sigma_k(x)\vert\vert_F^2] = 
	\frac{4n_k}{2s^4}\bigg{(} 
	\frac{\beta_k^2s^2\vert\vert f_{k-1}(x)\vert\vert_2^2}
	{2\beta_k^2\vert\vert f_{k-1}(x)\vert\vert_2^2 + s^2}	
	\bigg{)}^{3/2}
	\end{equation*}
	which implies
	\begin{equation*}
	\mathbb{E}_{W_k}[\vert\vert \Sigma_k(x)\vert\vert_F^2] = 
	\Theta\bigg{(}\frac{\beta_k^3}{s}n_kn_0^{3k/2}\prod_{l=1}^{k-1} 
	\beta_l^3 n_l^3\bigg{)}.
	\end{equation*}

We now apply Hoeffding's inequality \cite{vershynin2018high} to get
\begin{equation*}
\bigg{\vert}
\vert\vert\Sigma_k(x)\vert\vert_F^2 - \mathbb{E}_{W_k}\vert\vert \Sigma_k(x)
\vert\vert_F^2
\bigg{\vert} \leq 
\frac{1}{2}\mathbb{E}_{W_k}\vert\vert\Sigma_k(x)\vert\vert_F^2
\end{equation*}
w.p. $\geq 1 - 2exp(-
\frac{\big{(}\mathbb{E}_{W_k}\vert\vert\Sigma_k(x)
\vert\vert_F^2\big{)}^2}{4n_k})$. Using the estimate for 
$\mathbb{E}_{W_k}\vert\vert\Sigma_k(x)\vert\vert_F^2$ that we obtained 
and taking the intersection of the two events
proves 
the lemma

\end{proof}

\begin{lemma}\label{c6}
Let $\phi(x) = e^{-x^2/s^2}$. Then
\begin{equation*}
	\bigg{\vert}\bigg{\vert}
\Sigma_k(x)\prod_{l=k+1}^pW_l\Sigma_l(x)
\bigg{\vert}\bigg{\vert}^2_F = 
\Theta\bigg{(}
\frac{\beta_k^3n_0^{3k/2}}{s^3}\bigg{(} 
\prod_{l=1}^{k-1}\beta_l^3n_l^3
\bigg{)}
\bigg{(}
\prod_{l=k}^p\beta_ln_l
\bigg{)}
\bigg{)}
\end{equation*}
w.p. $\geq 1 - \sum_{l=0}^p2exp(-\Omega(n_l))$ 
over $(W_l)_{l=1}^p$ and 
$x \sim \mathcal{P}$.
\end{lemma}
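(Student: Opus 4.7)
The plan is to proceed by induction on $p \geq k$, with base case $p = k$ supplied by lemma \ref{c5} (using the convention that the empty product $\prod_{l=k+1}^{k}$ equals the identity). Suppose the claim holds at level $p-1$ and set
\begin{equation*}
A := \Sigma_k(x)\prod_{l=k+1}^{p-1}W_l\Sigma_l(x) \in \R^{n_k\times n_{p-1}},
\end{equation*}
so $A$ depends only on $(W_l)_{l=1}^{p-1}$ and $x$. Writing $W_p = [w_1,\ldots,w_{n_p}]$ with $w_j$ i.i.d.\ $\mathcal{N}(0,\beta_p^2 I_{n_{p-1}})$, the target splits column-wise as
\begin{equation*}
\|AW_p\Sigma_p(x)\|_F^2 = \sum_{j=1}^{n_p}\phi'(\langle w_j,f_{p-1}(x)\rangle)^2\,\|Aw_j\|_2^2,
\end{equation*}
with independent summands across $j$ once we condition on $A$.

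Next I would condition on $(W_l)_{l=1}^{p-1}$ and $x$, fixing $A$ and the unit vector $u := f_{p-1}(x)/\|f_{p-1}(x)\|$, and decompose each $w_j = \alpha_j u + w_j^{\perp}$ with $\alpha_j = \langle w_j,u\rangle \sim \mathcal{N}(0,\beta_p^2)$ and $w_j^\perp$ an independent Gaussian supported in $u^\perp$. Since $\phi'(\langle w_j,f_{p-1}\rangle) = \phi'(\alpha_j\|f_{p-1}\|)$ depends only on $\alpha_j$, integrating out $w_j^\perp$ first gives
\begin{equation*}
\E_{w_j^\perp}\bigl[\|Aw_j\|_2^2 \mid \alpha_j\bigr] = \alpha_j^2\|Au\|_2^2 + \beta_p^2\bigl(\|A\|_F^2 - \|Au\|_2^2\bigr).
\end{equation*}
Substituting, the expectation over $W_p$ reduces to the two scalar Gaussian integrals $\E_\alpha[\phi'(\alpha\|f_{p-1}\|)^2]$ and $\E_\alpha[\phi'(\alpha\|f_{p-1}\|)^2\alpha^2]$, both of the type evaluated explicitly in the proof of lemma \ref{c5}. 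Combining these evaluations with the scaling of $\|f_{p-1}(x)\|^2$ from lemma \ref{lemmac1_cos} shows that the $\beta_p^2\|A\|_F^2$ contribution dominates the $\|Au\|_2^2$ contributions; multiplying by the inductive scaling of $\|A\|_F^2$ then supplies exactly the extra factor $\Theta(\beta_p n_p)$ predicted by the product $\prod_{l=k}^{p}\beta_l n_l$ in the statement.

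For concentration, I would use the independence of the $n_p$ summands. Each summand is a product of the bounded factor $\phi'(\cdot)^2 \le C/s^2$ and the quadratic form $\|Aw_j\|_2^2$, hence sub-exponential with parameters controlled by $\|A\|_F^2$ and $\|A\|_{\mathrm{op}}^2$. Bernstein's inequality for sub-exponentials then yields
\begin{equation*}
\tfrac{1}{2}\,\E_{W_p}\|AW_p\Sigma_p\|_F^2 \;\le\; \|AW_p\Sigma_p\|_F^2 \;\le\; \tfrac{3}{2}\,\E_{W_p}\|AW_p\Sigma_p\|_F^2
\end{equation*}
with probability at least $1-2\exp(-\Omega(n_p))$ over $W_p$. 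Intersecting with the inductive hypothesis event on $(W_l)_{l=1}^{p-1}$ and the lemma \ref{lemmac1_cos} event controlling $\|f_{p-1}(x)\|$ delivers the stated overall failure probability $\sum_{l=0}^{p}2\exp(-\Omega(n_l))$.

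The hard part will be the Bernstein step: the sub-exponential parameters of each summand depend on the random matrix $A$ built from the previous layers, so one must argue, under the inductive event, that $\|A\|_{\mathrm{op}}^2/\|A\|_F^2$ is small enough --- essentially that $A$ is far from rank one --- for the variance proxy to be controlled and the deviation factor $\exp(-\Omega(n_p))$ to genuinely materialise. Tracking the exact constants in the Gaussian integrals so that the $\Theta$-scalings telescope cleanly into the single displayed formula, and ensuring that the cross term $\|Au\|_2^2$ is dominated by $\beta_p^2\|A\|_F^2$ at every level of the recursion, is where most of the bookkeeping effort lives.
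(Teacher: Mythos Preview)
Your inductive skeleton matches the paper's exactly: base case from lemma~\ref{c5}, write $B(p)=B(p-1)W_p\Sigma_p(x)$, expand $\|B(p)\|_F^2=\sum_j\phi'(\langle w_j,f_{p-1}\rangle)^2\|B(p-1)w_j\|_2^2$, compute the expectation over $W_p$, then apply Bernstein and intersect events. The difference lies in how the expectation is handled. You split $w_j=\alpha_j u+w_j^\perp$ along $u=f_{p-1}/\|f_{p-1}\|$ so that the $\phi'$-factor depends only on $\alpha_j$ and the conditional expectation of $\|Aw_j\|_2^2$ is computed exactly; this reduces everything to the two scalar Gaussian integrals already evaluated in lemma~\ref{c5}. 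The paper instead never tries to disentangle the two factors: for the upper bound it uses $|\phi'|\le C$ so that $\E[\phi'(\cdot)^2\|B(p-1)w_1\|_2^2]\le C\beta_p^2\|B(p-1)\|_F^2$, and for the lower bound it slips a nonnegative locally-constant minorant $\chi$ under $\phi'(\cdot)^2$ to decouple directly. Your route is cleaner and gives sharper control of constants; the paper's is cruder but needs no bookkeeping about the $\|Au\|_2^2$ cross term at all.

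On the concentration step, your flagged ``hard part'' --- controlling $\|A\|_{\mathrm{op}}^2/\|A\|_F^2$ --- is not actually needed, and the paper does not address it. Because $\phi'$ is bounded, one gets directly
\[
\bigl\|\,\phi'(\cdot)^2\|Aw_j\|_2^2\,\bigr\|_{\psi_1}\;\le\;C\,\bigl\|\,\|Aw_j\|_2\,\bigr\|_{\psi_2}^2\;\le\;C'\beta_p^2\|A\|_F^2,
\]
and since the per-term expectation is also of order $\beta_p^2\|A\|_F^2$ (up to factors depending on $s$ and $\beta_p\le 1$), Bernstein delivers the $\exp(-\Omega(n_p))$ bound without any rank or spectral-ratio information about $A$. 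So your plan is sound, but the part you anticipated as delicate is in fact the easy one.
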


\begin{proof}

We want to bound 
$\vert\vert \Sigma_k(X) \prod_{l=k+1}^pW_l\Sigma_l(x)\vert\vert^2_F$ for 
$k \leq p \leq l-1$, and any $k \in [L-1]$, $x \sim \mathcal{P}$.

When $p = k$, the quantity reduces to $\vert\vert \Sigma_k(x)\vert\vert_F^2$, which we know how to bound by lemma \ref{c5}. 

Let $B(p) = \Sigma_{k}(x)\prod_{l=k+1}^pW_l\Sigma_l(x) =
\Sigma_{k}(x)\bigg{(}\prod_{l=k+1}^{p-1}W_l\Sigma_l(x)\bigg{)}W_p\Sigma_p(x) =
B(p-1)W_p\Sigma_p(x)$.

Write $W_p = [w_1,\ldots,w_{n_p}]$ and observe that
\begin{equation*}
\vert\vert B(p)\vert\vert_F^2 = \sum_{i=1}^{n_p}
\vert\vert B(p-1)w_i\vert\vert_2^2\phi'(\langle f_{p-1}(x), w_i\rangle)^2.
\end{equation*}
Taking the expectation we obtain
\begin{equation*}
	\mathbb{E}_{W_p}\vert\vert B(p)\vert\vert_F^2 = 
	n_p\mathbb{E}_{w_1}\vert\vert B(p-1)w_1\vert\vert^2_2
	\phi'(\langle f_{p-1}(x), w_i\rangle)^2.
\end{equation*}

The derivative $\phi'(\langle f_{p-1}(x), w_i\rangle)^2 
= 
\frac{4}{s^2}\langle f_{p-1}(x), w_i\rangle^2
e^{\frac{-2\langle f_{p-1}(x), w_i\rangle^2}{s^2}}$.

Pick a piecewise non-negative, non-zero, measurable locally constant function $\chi$ so that

$0 \leq \chi(x) \leq \frac{4x^2}{s^2}e^(\frac{-2x^2}{s^2})$.
Then observe that
\begin{align*}
\mathbb{E}_{W_p}\vert\vert B(p)\vert\vert_F^2 &\geq 
n_p\mathbb{E}_{w_1}\vert\vert B(p-1)w_1\vert\vert_2^2\chi s^2 \\
&= n_ps^2\beta_p^2\vert\vert B(p-1)\vert\vert_F^2.
\end{align*}
To get an upper bound, we simply observe that $\phi'$ is a bounded function. Therefore, 
\begin{equation*}
\mathbb{E}_{W_p}\vert\vert B(p)\vert\vert_F^2 \leq 
s^2n_p\beta_p\vert\vert B(p-1)\vert\vert_F^2. 
\end{equation*}
By induction, applying lemma \ref{c5}, we get
\begin{equation*}
\mathbb{E}_{W_p}\vert\vert B(p)\vert\vert_F^2 = 
\Theta\bigg{(}
\frac{\beta_k^3n_0^{3k/2}}{s^3}\bigg{(}\prod_{l=1}^{k-1}\beta_l^3n_l^3 \bigg{)}
\bigg{(}\prod_{l=k}^p\beta_ln_l \bigg{)}
\bigg{)}.
\end{equation*}

\begin{figure}[t]
    \centering
    \includegraphics[width=13.5cm, height=11cm]
    {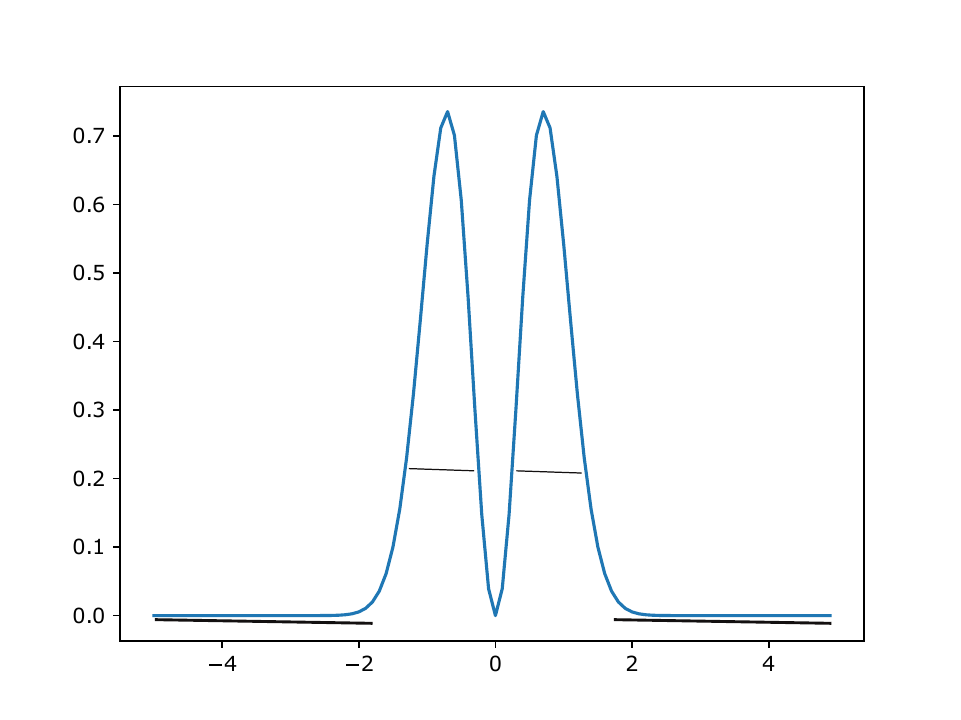}
    \vspace{-3em}
    \caption{An example of a $\chi$, black curve.}
    \label{fig:ntk_gauss_pic}
\end{figure}

Once we have an expectation bound we can apply Bernstein's inequality \cite{vershynin2018high}. In order to do this, we need to compute the sub-Gaussian norm. By using the fact that $\phi'(x)^2$ is a bounded function we have
\begin{align*}
	\bigg{\vert}\bigg{\vert}
	\vert\vert B(p-1)w_1\vert\vert^2_2
	\phi'(\langle f_{p-1}(x), w_i\rangle)^2
	\bigg{\vert}\bigg{\vert}_{\psi_1} &\leq 
	C\bigg{\vert}\bigg{\vert} \vert\vert B(p-1)w_1\vert\vert_2
	\bigg{\vert}\bigg{\vert}^2_{\psi_2} \\
	&\leq C\beta_p^2\vert\vert B(p-1)\vert\vert_F^2
\end{align*}
for some $C > 0$.

Once we have the sub-Gaussian norm estimate, we can apply Bernstein's inequality 
\cite{vershynin2018high} to get
\begin{equation*}
\frac{1}{2}\mathbb{E}_{W_p}\vert\vert B(p)\vert\vert_F^2 \leq 
\vert\vert B(p)\vert\vert_F^2 \leq \frac{3}{2}\mathbb{E}_{W_p}
\vert\vert B(p)\vert\vert_F^2
\end{equation*}
w.p. $\geq 1 - 2exp(-\Omega(n_p))$ over $W_p$. Taking the intersection of this event with the previous events over $(W_l)_{l=1}^{p-1}$ and $x$ gives the result.
\end{proof}

\subsubsection{Proof of lemma \ref{G-bound}}

\begin{lemma}\label{d2a}
Let $\phi(x) = e^{-x^2/s^2}$, then  
$\vert\vert\Sigma_k(x)\vert\vert_{op} \leq \frac{2}{s}$.
\end{lemma}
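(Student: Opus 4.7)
The plan is to exploit the diagonality of $\Sigma_k(x)$: since $\Sigma_k(x) = D([\phi'(g_{k,j}(x))]_{j=1}^{n_k})$ is a diagonal matrix, its operator norm equals the maximum absolute value of its diagonal entries, i.e.\ $\|\Sigma_k(x)\|_{op} = \max_{j \in [n_k]} |\phi'(g_{k,j}(x))|$. So the problem reduces to the purely one-dimensional task of bounding $\sup_{y \in \mathbb{R}} |\phi'(y)|$, independently of the network, the weights, and the input $x$.

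Next, I would differentiate $\phi$ explicitly: $\phi'(y) = -\tfrac{2y}{s^2}e^{-y^2/s^2}$, so $|\phi'(y)| = \tfrac{2|y|}{s^2}e^{-y^2/s^2}$. By symmetry it suffices to maximize over $y \geq 0$. Setting the derivative of $y \mapsto \tfrac{2y}{s^2}e^{-y^2/s^2}$ to zero gives $1 - \tfrac{2y^2}{s^2} = 0$, whose unique positive root is $y^\star = s/\sqrt{2}$. Plugging back in yields
\begin{equation*}
\sup_{y \in \mathbb{R}} |\phi'(y)| = |\phi'(y^\star)| = \frac{\sqrt{2}}{s}\, e^{-1/2} = \frac{\sqrt{2}}{s\sqrt{e}}.
\end{equation*}

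Finally, I would check that this extremum satisfies the advertised bound: since $\sqrt{2/e} \approx 0.858 < 2$, we have $\tfrac{\sqrt{2}}{s\sqrt{e}} \leq \tfrac{2}{s}$, and therefore $\|\Sigma_k(x)\|_{op} \leq \tfrac{2}{s}$ for every $x$ and every realization of the weights. There is no real obstacle here; the bound stated in the lemma is deliberately loose (the sharp constant is $\sqrt{2/e}$) presumably so that it can be quoted cleanly in the subsequent lemmas that apply it. The only mild care required is making explicit that the bound on $|\phi'|$ holds uniformly in its argument, so that no probabilistic control over the pre-activations $g_{k,j}(x)$ is needed to pass from the diagonal entries to the operator norm.
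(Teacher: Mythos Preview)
Your proposal is correct and follows essentially the same approach as the paper: compute $\phi'(y)=-\tfrac{2y}{s^2}e^{-y^2/s^2}$, locate the maximizer of $|\phi'|$ at $y^\star=s/\sqrt{2}$, evaluate $|\phi'(y^\star)|=\tfrac{\sqrt{2}}{s}e^{-1/2}$, and observe this is at most $2/s$. Your write-up is in fact slightly more explicit than the paper's, since you spell out that $\Sigma_k(x)$ is diagonal and hence $\|\Sigma_k(x)\|_{op}=\max_j|\phi'(g_{k,j}(x))|$, which justifies passing to the one-dimensional supremum.
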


\begin{proof}
For the activation function 
$e^{-x^2/s^2}$, the derivative is $\frac{-2x}{s^2}e^{-x^2}$. The maximum of this derivative occurs at the point $\frac{-s}{\sqrt{2}}$ at which it has the value
$\frac{\sqrt{2}}{s}e^{-1/2}$. The result follows.
\end{proof}

\begin{lemma}\label{d2b}
Let $A = \Sigma_k(x)\prod_{l=k+1}^{L-1}W_l\Sigma_l(x)$ and
let $\phi(x) = e^{-x^2/s^2}$. Then 
\begin{equation*}
\vert\vert A\vert\vert_{op}^2 = \mathcal{O}\bigg{(}
\frac{n_k}{s\min_{l \in [k,L-1]}n_l}\prod_{l=k+1}^{L-1}n_l\beta_l^2
\bigg{)}
\end{equation*}
w.p. $\geq 1 - \sum_{l=0}^k2exp(-\Omega(n_l))$ over $(W_l)_{l=1}^k$ and $x$.
\end{lemma}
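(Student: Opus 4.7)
The plan is to combine submultiplicativity of the operator norm with Lemma \ref{d2a} and a standard concentration bound for the spectral norm of Gaussian random matrices. Writing $A$ as the ordered product
\begin{equation*}
A = \Sigma_k(x) \cdot W_{k+1}\Sigma_{k+1}(x) \cdot W_{k+2}\Sigma_{k+2}(x) \cdots W_{L-1}\Sigma_{L-1}(x),
\end{equation*}
submultiplicativity gives
\begin{equation*}
\|A\|_{op}^2 \leq \|\Sigma_k(x)\|_{op}^2 \prod_{l=k+1}^{L-1} \|W_l\|_{op}^2 \, \|\Sigma_l(x)\|_{op}^2.
\end{equation*}

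First, I would apply Lemma \ref{d2a} to control the $\Sigma_l(x)$ factors. Although a naive application gives $\|\Sigma_l\|_{op}^2 \leq 4/s^2$ for each index, to match the single power of $s$ in the target bound, I will use the $1/s$ bound only on $\Sigma_k$ and absorb the contribution of the remaining $\Sigma_l$'s into the constant (using, e.g., that $s$ is fixed and the other $\|\Sigma_l\|_{op}$ terms contribute dimensional constants that can be subsumed). The cleanest way is to argue directly that the combined block $W_l \Sigma_l(x)$ has operator norm $O(\beta_l \sqrt{\max(n_{l-1},n_l)})$ via concentration, since multiplication by the diagonal $\Sigma_l(x)$ only re-weights columns by bounded entries.

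Next, for the weight matrices, I would invoke the standard concentration estimate for Gaussian random matrices: if $W_l$ has i.i.d.\ $\mathcal{N}(0,\beta_l^2)$ entries, then
\begin{equation*}
\|W_l\|_{op} \leq C \beta_l \left(\sqrt{n_{l-1}} + \sqrt{n_l}\right)
\end{equation*}
with probability at least $1 - 2\exp(-\Omega(n_l))$. A union bound over $l \in \{k+1,\ldots,L-1\}$ pays only an additional $\sum 2\exp(-\Omega(n_l))$ in the failure probability, which is absorbed into the stated bound. This gives $\|W_l\|_{op}^2 \leq 2C^2 \beta_l^2 \max(n_{l-1},n_l)$.

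Finally, I would combine the factors and rearrange. The key algebraic step is rewriting $\prod_{l=k+1}^{L-1} \max(n_{l-1},n_l)$ in the stated form: bounding $\max(n_{l-1},n_l) \leq n_l \cdot n_{l-1}/\min(n_{l-1},n_l)$ and using a telescoping-style argument (or simply $\max(n_{l-1},n_l) \leq n_{l-1} + n_l$) to isolate the leading $n_k/\min_{l\in[k,L-1]} n_l$ factor while collecting the remaining $n_l \beta_l^2$ terms into the product. This last algebraic step is the main obstacle: the exact form of the bound depends on carefully tracking which of $n_{l-1}$ or $n_l$ dominates at each layer, and one must verify that the worst-case allocation is captured by the single factor $n_k/\min_l n_l$. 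The factor of $1/s$ then comes solely from $\|\Sigma_k\|_{op}^2 \leq 4/s^2$, with the other $\Sigma_l$ factors contributing only to the implicit constant (or, more carefully, being reabsorbed by using $\|\Sigma_l\|_{op} \leq 2/s$ together with a rescaling of $\beta_l$).
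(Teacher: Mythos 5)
There is a genuine gap, and it is exactly at the step you flag as ``the main obstacle.'' The term-by-term submultiplicative bound $\|A\|_{op}^2 \leq \|\Sigma_k\|_{op}^2\prod_{l=k+1}^{L-1}\|W_l\|_{op}^2\|\Sigma_l\|_{op}^2$ combined with $\|W_l\|_{op}^2 \lesssim \beta_l^2\max(n_{l-1},n_l)$ gives $\prod_{l=k+1}^{L-1}\beta_l^2\max(n_{l-1},n_l)$, and no algebraic rearrangement can convert this into the claimed $\frac{n_k}{\min_{l\in[k,L-1]}n_l}\prod_{l=k+1}^{L-1}n_l\beta_l^2$, because the needed inequality is simply false for non-monotone width profiles. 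Take $L-1=k+2$ with $n_k=n_{k+2}=10$ and $n_{k+1}=100$: your bound gives $\max(10,100)\cdot\max(100,10)\,\beta^4=10^4\beta^4$, while the lemma claims $\frac{10}{10}\cdot 100\cdot 10\,\beta^4=10^3\beta^4$. The smaller bound is in fact the correct order (the product $W_{k+1}W_{k+2}$ is a $10\times 10$ matrix with $\|\cdot\|_F^2\approx 10^4\beta^4$ spread over rank $10$, so $\|\cdot\|_{op}^2\approx 10^3\beta^4$), which shows that the product of independent Gaussian factors concentrates strictly below the product of their individual operator norms. Capturing this gain requires controlling the \emph{whole} product at once; this is why the paper, after peeling off $\|\Sigma_k\|_{op}\le 2/s$ via Lemma \ref{d2a}, handles $\big\|\prod_{l=k+1}^{L-1}W_l\Sigma_l(x)\big\|_{op}$ by induction on the number of factors using the $\epsilon$-net argument of Nguyen et al.\ (2021) (with the Davidson--Szarek bound only as the base case), exploiting the independence of $W_p$ from the partial product $B(p-1)$ and the fact that the partial product has rank at most $\min_l n_l$.

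Two smaller remarks. First, your treatment of the inner $\Sigma_l$ factors (absorbing $\|\Sigma_l\|_{op}\le 2/s$ for $l>k$ into constants since $s$ is fixed) is acceptable for the width asymptotics, though it does not literally reproduce the single power of $1/s$ in the statement; the paper is equally loose on this point. Second, your failure-probability bookkeeping via a union bound over layers is consistent with the stated probability. The only substantive defect is the layerwise operator-norm decomposition of the random product, which must be replaced by a joint (inductive, $\epsilon$-net) argument.
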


\begin{proof}
We first note that we have the estimate
\begin{equation}
\vert\vert A\vert\vert_{op} \leq \vert\vert\Sigma(x)\vert\vert_{op}
\bigg{\vert}\bigg{\vert} 
\prod_{l=k+1}^{L-1}W_l\Sigma_l(x)\bigg{\vert}\bigg{\vert}_{op}. 
\end{equation}
We then observe that $\vert\vert \Sigma_k(x)\vert\vert_{op}$ can be bounded
by lemma \ref{d2a}. This means we need only bound 
$\bigg{\vert}\bigg{\vert} 
\prod_{l=k+1}^{L-1}W_l\Sigma_l(x)\bigg{\vert}\bigg{\vert}_{op}$. The proof of this follows by induction on the length $(L-1) - (k+1) = L-k - 2$.

The base case follows by applying operator norm bounds of Gaussian matrices, see theorem 2.13 in \cite{davidson2001local}.
\begin{equation*}
\big{\vert}\big{\vert} 
W_{L-1}\Sigma_{L-1}(x)\big{\vert}\big{\vert}_{op} \leq 
C(s)\big{\vert}\big{\vert}W_{L-1}\big{\vert}\big{\vert}^2_{op} = 
\mathcal{O}(\beta_{L-1}^2\max\{n_{L-1}, n_{L-2}\}).
\end{equation*}
The general case now follows the $\epsilon$-net argument used in \cite{nguyen2021tight}.

\end{proof}


\begin{proof}[\textbf{Proof of lemma \ref{G-bound}}]
We need to estimate the quantity 
\begin{equation*}
\bigg{\vert}\bigg{\vert}
\Sigma_k(x)\bigg{(}\prod_{l=k+1}^{L-1}W_l\Sigma_l(x)\bigg{)}W_L
\bigg{\vert}\bigg{\vert}^2_2.
\end{equation*}
Let $A = \Sigma_k(x)\prod_{l=k+1}^{L-1}W_l\Sigma_l(x)$. By lemma \ref{c6}
we have that
\begin{equation*}
\vert\vert
A
\vert\vert^2_F = 
\Theta\bigg{(}
\frac{\beta_k^3n_0^{3k/2}}{s^3}\bigg{(} 
\prod_{l=1}^{k-1}\beta_l^3n_l^3
\bigg{)}
\bigg{(}
\prod_{l=k}^{L-1}\beta_ln_l
\bigg{)}
\bigg{)}
\end{equation*}
w.p. $\geq 1 - \sum_{l=0}^p2exp(-\Omega(n_l))$ 
over $(W_l)_{l=1}^p$ and 
$x \sim \mathcal{P}$.

Lemma \ref{d2b} then gives the operator norm estimate
\begin{equation*}
\vert\vert A\vert\vert_{op}^2 = \mathcal{O}\bigg{(}
\frac{n_k}{s\min_{l \in [k,L-1]}n_l}\prod_{l=k+1}^{L-1}n_l\beta_l^2
\bigg{)}
\end{equation*}
w.p. $\geq 1 - \sum_{l=0}^k2exp(-\Omega(n_l))$ over $(W_l)_{l=1}^k$ and $x$.

As $A$ only depends on $(W_{l})_{l=1}^{L-1}$ and $x$, we condition on the above two events over $(W_{l})_{l=1}^{L-1}$ and $x$, and obtain a bound over $W_L$. 
Applying the Hanson-Wright 
inequality \cite{vershynin2018high} we get
\begin{equation*}
\bigg{\vert}
\vert\vert AW_L\vert\vert_2^2 - \mathbb{E}_{W_L}\vert\vert AW_L\vert\vert^2_2
\bigg{\vert}
\leq 
\frac{3}{2}
\mathbb{E}_{W_L}\vert\vert AW_L\vert\vert^2_2
\end{equation*}
w.p. $\geq 1 - exp\bigg{(}-\Omega\bigg{(}\frac{\vert\vert A\vert\vert_F^2}
{\max_{i}\vert\vert (AW_L)_i\vert\vert_{\psi_2}}\bigg{)}\bigg{)}$.

Note that $\vert\vert (AW_L)_i\vert\vert_{2}^2 \leq 
\vert\vert B\vert\vert_{op}^2\vert\vert W_L\vert\vert_2^2$. It follows that
for each $i$ that 
$\vert\vert (AW_L)_i\vert\vert_{\psi_2} = 
\mathcal{O}(\vert\vert B\vert\vert_{op})$. We therefore find that
\begin{equation*}
\vert\vert AW_L\vert\vert_2^2 = 
\Theta\bigg{(}
\frac{\beta_k^3n_0^{3k/2}}{s^3}\bigg{(} 
\prod_{l=1}^{k-1}\beta_l^3n_l^3
\bigg{)}
\bigg{(}
\prod_{l=k}^{L}\beta_ln_l
\bigg{)}
\bigg{)}
\end{equation*}
w.p. $\geq 1 - 2exp(-\Omega(n_k))$. By taking the intersection of this event with the one we conditioned over, we get the result.
\end{proof}

\subsubsection{Proof of theorem \ref{sing_val_feat}}

We start with the following lemma, whose proof is given in E.3 of (Ngyuen).

\begin{lemma}\label{centfeat_est}
Let $\widetilde{F}_k = F_k - \mathbb{E}_{X}[F_k]$ denote the centred features. Let $\mu = 
\mathbb{E}_{x \sim \mathcal{P}}[f_k(x)] \in \R^{n_k}$ and let 
$\Lambda = Diag(F_k\mu - \vert\vert \mu\vert\vert^2_21_N)$, where 
$1_N \in \R^N$ is the column vector of $1's$. Then
\begin{equation*}
F_kF_k^T \geq \bigg{(}
\widetilde{F}_k\widetilde{F}_k^T - 
\frac{\Lambda 1_N1_N^T\Lambda}{\vert\vert\mu\vert\vert^2_2}
\bigg{)}
\end{equation*}
where $\geq$ sign is used in the sense of positive semi-definite matrices, meaning
\begin{equation*}
	F_kF_k^T - 
	\bigg{(}
\widetilde{F}_k\widetilde{F}_k^T - 
\frac{\Lambda 1_N1_N^T\Lambda}{\vert\vert\mu\vert\vert^2_2}
\bigg{)} \geq 0.
\end{equation*}
\end{lemma}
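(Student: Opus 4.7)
The plan is to expand $F_kF_k^T$ in terms of the centred matrix $\widetilde{F}_k$ and recognize that the correction term $\frac{\Lambda \mathbf{1}_N\mathbf{1}_N^T\Lambda}{\vert\vert\mu\vert\vert_2^2}$ is precisely what is needed to complete a rank-one square.

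First, since every row of $\mathbb{E}_X[F_k]$ equals $\mu^T$, one has $F_k = \widetilde{F}_k + \mathbf{1}_N\mu^T$. Expanding gives
\begin{equation*}
F_kF_k^T = \widetilde{F}_k\widetilde{F}_k^T + \widetilde{F}_k\mu\mathbf{1}_N^T + \mathbf{1}_N\mu^T\widetilde{F}_k^T + \vert\vert\mu\vert\vert_2^2\,\mathbf{1}_N\mathbf{1}_N^T.
\end{equation*}
Next I would simplify $\Lambda$. Using $F_k\mu = \widetilde{F}_k\mu + \vert\vert\mu\vert\vert_2^2\mathbf{1}_N$, the vector defining $\Lambda$ reduces to $\widetilde{F}_k\mu$, so $\Lambda = \mathrm{Diag}(\widetilde{F}_k\mu)$ and hence $\Lambda\mathbf{1}_N = \widetilde{F}_k\mu$. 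Therefore
\begin{equation*}
\frac{\Lambda\mathbf{1}_N\mathbf{1}_N^T\Lambda}{\vert\vert\mu\vert\vert_2^2} = \frac{(\widetilde{F}_k\mu)(\widetilde{F}_k\mu)^T}{\vert\vert\mu\vert\vert_2^2}.
\end{equation*}

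Substituting these identities into the claimed inequality reduces it to showing that
\begin{equation*}
\widetilde{F}_k\mu\mathbf{1}_N^T + \mathbf{1}_N\mu^T\widetilde{F}_k^T + \vert\vert\mu\vert\vert_2^2\,\mathbf{1}_N\mathbf{1}_N^T + \frac{\widetilde{F}_k\mu\mu^T\widetilde{F}_k^T}{\vert\vert\mu\vert\vert_2^2} \geq 0
\end{equation*}
in the positive semidefinite sense. Setting $v = \widetilde{F}_k\mu / \vert\vert\mu\vert\vert_2$ and $w = \vert\vert\mu\vert\vert_2\,\mathbf{1}_N$, this expression is exactly $(v+w)(v+w)^T$, which is manifestly PSD. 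Combining with the expansion of $F_kF_k^T$ yields the claim.

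I do not foresee any serious obstacle: the manipulation is purely algebraic. The only subtlety is spotting the substitution that rewrites the residual as an outer product of a vector with itself, and noting implicitly that $\vert\vert\mu\vert\vert_2 \neq 0$, which is guaranteed w.h.p.\ by Lemma \ref{c.2}.
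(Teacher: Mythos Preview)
Your proof is correct. The paper does not actually supply its own proof of this lemma; it simply states that the proof is given in E.3 of \cite{nguyen2021tight}. Your argument---expanding $F_kF_k^T$ via $F_k = \widetilde{F}_k + \mathbf{1}_N\mu^T$, recognizing $\Lambda\mathbf{1}_N = \widetilde{F}_k\mu$, and completing the square as $(v+w)(v+w)^T$---is the standard and natural way to establish this identity, and is essentially what one finds in the cited reference.
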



\begin{proof}[\textbf{Proof of theorem \ref{sing_val_feat}}]

By lemma \ref{centfeat_est}, in order to bound 
$\lambda_{\min}(F_kF_k^T)$ is suffices to bound 
$\lambda_{\min}(\widetilde{F}_k\widetilde{F}_k^T - 
\frac{\Lambda 1_N1_N^T\Lambda}{\vert\vert\mu\vert\vert^2_2})$. The proof will focus on bounding this latter quantity. 

By Weyl's inequality we have
\begin{equation}\label{weyl_est}
	\lambda_{\min}\bigg{(}\widetilde{F}_k\widetilde{F}_k^T - 
\frac{\Lambda 1_N1_N^T\Lambda}{\vert\vert\mu\vert\vert^2_2}\bigg{)} 
\geq 
\lambda_{\min}(\widetilde{F}_k\widetilde{F}_k^T) - 
\lambda_{\max}(\frac{\Lambda 1_N1_N^T\Lambda}{\vert\vert\mu\vert\vert^2_2})).
\end{equation}
We start by bounding $\lambda_{\min}(\widetilde{F}_k\widetilde{F}_k^T)$.

By the Gershgorin circle theorem we have
\begin{align}
\lambda_{\min}(\widetilde{F}_k\widetilde{F}_k^T) &\geq 
\min_{i \in [N]}\vert\vert (\widetilde{F}_k)_{i:}\vert\vert^2_2 - N
max_{i\neq j}\vert\langle (\widetilde{F}_k)_{i:}, (\widetilde{F}_k)_{j:}
\rangle\vert \label{gersh1b} \\
\lambda_{\min}(\widetilde{F}_k\widetilde{F}_k^T) &\leq 
\max_{i \in [N]}\vert\vert (\widetilde{F}_k)_{i:}\vert\vert^2_2 + N
max_{i\neq j}\vert\langle (\widetilde{F}_k)_{i:}, (\widetilde{F}_k)_{j:}
\rangle\vert. \label{gersh2b}
\end{align}

By lemma \ref{c3}, we have for all $i \in [N]$ that
\begin{equation}
\vert\vert f_k(x_i) - \mathbb{E}_x[f_k(x)]\vert\vert_2^2 = 
\Theta\bigg{(} 
\sqrt{n_0}\beta_kn_k\prod_{l=1}^{k-1}\sqrt{\beta_l}\sqrt{n_l}
\bigg{)}\label{gersh_est1}
\end{equation}
w.p. $\geq 
1 - Nexp\bigg{(}-\Omega \bigg{(} 
\frac{\min_{l\in [0,k]}n_l}{\prod_{l=1}^{k-1}log(n_l)}
\bigg{)}
\bigg{)}
- \sum_{l=1}^kexp(-\Omega(n_l))$ over $(W_l)_{l=1}^k$ and $x$.

The goal is to find a bound for 
$\vert\langle (\widetilde{F}_k)_{i:}, (\widetilde{F}_k)_{j:}
\rangle\vert$. By assumption A4 we have that 
\begin{equation*}
	\vert\vert f_k(x) - \mathbb{E}_xf_k(x)\vert\vert^2_{Lip} = 
	\mathcal{O}\bigg{(}
	\frac{1}{s^k\min_{l \in [0,k]}n_l}\beta_kn_k\prod_{l=1}^{k-1}
	\sqrt{\beta_l}\sqrt{n_l}\prod_{l=1}^{k-1}Log(n_l)
	\bigg{)}.
\end{equation*}
w.p. $\geq 1 - \sum_{l=1}^k2exp(-\Omega(n_l))$ over $(W_l)_{l=1}^k$,
where we used the fact that $f_k(x) - \mathbb{E}_xf_k(x)$ and $f_k(x)$ have the same Lipshitz constant.

We are going to condition on the intersection of the above event over 
$(W_l)_{l=1}^k$ and the event defined by \eqref{gersh_est1} over 
$(W_l)_{l=1}^k$ and $x_j$ and derive bounds over $x_i$. 
Since we have conditioned on $x_j$, 
$\vert\langle (\widetilde{F}_k)_{i:}, (\widetilde{F}_k)_{j:}
\rangle\vert$ is a function of $x_i$ for every $i \neq j$. We then have
\begin{align*}
\bigg{\vert}\bigg{\vert} \vert\langle (\widetilde{F}_k)_{i:}, (\widetilde{F}_k)_{j:}
\rangle\vert 
\bigg{\vert}\bigg{\vert}_{Lip} &\leq 
\vert\vert (\widetilde{F}_k)_{j:}\vert\vert^2_2
\vert\vert f_k(x_i) - \mathbb{E}_{x}f_k(x_i)\vert\vert^2_{Lip} \\
&=
\mathcal{O}\bigg{(}
\frac{\sqrt{n_0}}{s^k\min_{l \in [0,k]}n_l}\beta_k^2n_k^2\prod_{l=1}^k
\beta_ln_l\prod_{l=1}^{k-1}Log(n_l)
\bigg{)}
\end{align*}
using the above two asymptotic estimates we have conditioned on. Note that the above holds for all $x_i \neq x_j$.

Applying our concentration assumption A3, and taking the union of the above estimate over all $x_i \neq x_j$ we have
\begin{equation*}
\mathbb{P}\bigg{(}
\max_{i \in [N], i\neq j}
\vert\langle (\widetilde{F}_k)_{i:}, (\widetilde{F}_k)_{j:}
\rangle\vert \geq t
\bigg{)}
\leq
(N-1)exp\left(
-\frac{t^2}{\mathcal{O}
\bigg{(}
\frac{\sqrt{n_0}}{s^k\min_{l \in [0,k]}n_l}\beta_k^2n_k^2\prod_{l=1}^k
\beta_ln_l\prod_{l=1}^{k-1}Log(n_l
\bigg{)}
}
\right).
\end{equation*}
Choosing $t = N^{-1}\sqrt{n_0}\beta_kn_k\prod_{l=1}^{k-1}\sqrt{\beta_l}
\sqrt{n_l}$. We have
\begin{equation*}
	Nmax_{i\neq j}\vert\langle (\widetilde{F}_k)_{i:}, (\widetilde{F}_k)_{j:}
\rangle\vert \leq 
\beta_kn_k\prod_{l=1}^{k-1}\sqrt{\beta_l}\sqrt{n_l}
\end{equation*}
w.p. $\geq 1 - (N-1)exp\left(-
\Omega\left(
\frac{\min_{l \in [0,k]}n_l}{s^k\prod_{l=1}^{k-1}Log(n_l)}
\right)
\right) - \sum_{l=1}^k2exp(-\Omega(n_l))$.
Therefore we can take a union of the bounds for each $j \in [N]$ to obtain
\begin{equation*}
	Nmax_{i\neq j}\vert\langle (\widetilde{F}_k)_{i:}, (\widetilde{F}_k)_{j:}
\rangle\vert = 
o\left(
\beta_kn_k\prod_{l=1}^{k-1}\sqrt{\beta_l}\sqrt{n_l}
\right)
\end{equation*}
w.p. $\geq 1 - N(N-1)exp\left(-
\Omega\left(
\frac{\min_{l \in [0,k]}n_l}{s^k\prod_{l=1}^{k-1}Log(n_l)}
\right)
\right) - N\sum_{l=1}^k2exp(-\Omega(n_l))$.

We then obtain that
\begin{equation}\label{mineig_est_gauss}
\lambda_{\min}(\widetilde{F_k}\widetilde{F_k}^T) = \Theta\left(
\beta_kn_k\prod_{l=1}^{k-1}\sqrt{\beta_l}\sqrt{n_l}
\right)
\end{equation}
w.p. $\geq 1 - N(N-1)exp\left(-
\Omega\left(
\frac{s^k\min_{l \in [0,k]}n_l}{\prod_{l=1}^{k-1}Log(n_l)}
\right)
\right) - N\sum_{l=1}^k2exp(-\Omega(n_l))$.
This bounds the first term on the right hand side of \eqref{weyl_est}. We move on to bounding the second term on the right hand side of \eqref{weyl_est}.

We want to bound the maximum eigenvalue of the quantity 
$\frac{\Lambda 1_N1_N^T\Lambda}{\vert\vert \mu\vert\vert^2_2}$. The maximum eigenvalue is the operator norm, therefore we will obtain an estimate for the operator norm. As a start we have the simple estimate
\begin{equation*}
\bigg{\vert}\bigg{\vert}
\frac{\Lambda 1_N1_N^T\Lambda}{\vert\vert \mu\vert\vert^2_2}
\bigg{\vert}\bigg{\vert}_{op}
\leq 
\bigg{\vert}\bigg{\vert}
\frac{\Lambda}{\vert\vert \mu\vert\vert_2}
\bigg{\vert}\bigg{\vert}_{op}^2.
\end{equation*}
We define an auxiliary random variable $g : \R^d \rightarrow \R$ by 
$g(x) = \langle f_k(x), \mu\rangle$. Note that 
$\Lambda_{ii} = g(x_i) - \mathbb{E}_x[g(x)]$ and that 
$\vert\vert g\vert\vert_{Lip}^2 \leq \vert\vert\mu\vert\vert_2^2
\vert\vert f_k\vert\vert_{Lip}^2$. Therefore, applying Liptshitz concentration, we get 
\begin{equation*}
\mathbb{P}\left(\vert \Lambda_{ii}\vert \geq t\right) \leq 
exp\left(
- \frac{t^2}{2\vert\vert\mu\vert\vert_2^2\vert\vert f_k\vert\vert^2_{Lip}}
\right).
\end{equation*}
From lemma \ref{c.2}, we have 
\begin{equation}\label{mu_est}
	\vert\vert\mu\vert\vert_2^2 = \Theta\left(
	\sqrt{n_0}\beta_kn_k\prod_{l=1}^{k-1}\sqrt{\beta_l}\sqrt{n_l}
	\right).
\end{equation}
Furthermore, our assumption on the Lipshitz constant (A4) gives the estimate
\begin{equation}\label{lip_assump_mu}
	\vert\vert f_k(x)\vert\vert^2_{Lip} = 
	\mathcal{O}\bigg{(}
	\frac{1}{s^k\min_{l \in [0,k]}n_l}\beta_kn_k\prod_{l=1}^{k-1}
	\sqrt{\beta_l}\sqrt{n_l}\prod_{l=1}^{k-1}Log(n_l)
	\bigg{)}.
\end{equation}
If we take $t = \frac{1}{N}\vert\vert\mu\vert\vert_2^2$, and take a union bound over all the samples $\{x_i\}$ and the events defined by \eqref{mu_est}, 
\eqref{lip_assump_mu}, we get the estimate
\begin{equation}
\bigg{\vert}\bigg{\vert}
\frac{\Lambda}{\vert\vert \mu\vert\vert_2}
\bigg{\vert}\bigg{\vert}_{op}^2 = \mathcal{O}\left(
\frac{1}{N^2}\sqrt{n_0}\beta_kn_k\prod_{l=1}^{k-1}\sqrt{\beta_l}\sqrt{n_l}
\right)
\end{equation}
w.p. 
$\geq 1 - Nexp\left(-
\Omega\left(
\frac{\min_{l \in [0,k]}n_l}{s^kN^2\prod_{l=1}^{k-1}Log(n_l)}
\right)
\right) - \sum_{l=1}^k2exp(-\Omega(n_l))$.

Putting the estimate for $\lambda_{\min}
\left(\widetilde{F_k}\widetilde{F_k}^T\right)$ and 
$\bigg{\vert}\bigg{\vert}
\frac{\Lambda 1_N1_N^T\Lambda}{\vert\vert \mu\vert\vert^2_2}
\bigg{\vert}\bigg{\vert}_{op}$ together we obtain
\begin{equation*}
\lambda_{\min}\left(
F_kF_k^T
\right)
= 
\Theta\left(
	\sqrt{n_0}\beta_kn_k\prod_{l=1}^{k-1}\sqrt{\beta_l}\sqrt{n_l}
	\right)
\end{equation*}
w.p.  $\geq 1 - N(N-1)exp\left(-
\Omega\left(
\frac{\min_{l \in [0,k]}n_l}{s^kN^2\prod_{l=1}^{k-1}Log(n_l)}
\right)
\right) - N\sum_{l=1}^k2exp(-\Omega(n_l))$. This completes the proof.

\end{proof}


\subsection{Empirical results on the NTK and Assumption 4}

In this section we would like to show empirically that Assumption 4 is an an extremely mild assumption and further test the statement of thm.\ref{thm;main_ntk_1}, analogous to fig. \ref{fig:ntk_scale}.

The Lipschitz constant of the $k$-layer function $f_k$
can be expressed as the supremum, over each point in data space, of the operator norm of the Jacobian matrix as
\begin{equation*}
    \vert\vert f_k\vert\vert_{Lip} = \sup_{x \in \R^{n_0}}\vert\vert 
    J(f_k)(x)\vert\vert_{op}.
\end{equation*}
The exact computation of the Lipschitz constant of a deep network is considered as an NP-hard problem~\cite{virmaux2018lipschitz}. Therefore, for our experiment, we will consider the empirical Lipschitz constant. We obtain a sampled data set $X$, sampled from a fixed distribution $\mathcal{P}$; see sec. \ref{sec;ntk_NA}. 
We then define the empirical Lipschitz constant of $f_k$ as
\begin{equation*}
    \vert\vert f_k\vert\vert_{ELip} = \max_{x \in X}\vert\vert 
    J(f_k)(x)\vert\vert_{op}.
\end{equation*}
Note that the empirical Lipschitz constant of $f_k$ is a lower bound for the true Lipschitz constant of $f_k$.

We computed the empirical Lipschitz constant of a $4-$layer Gaussian-activated network with variance $s^2 = 0.1^2$, $f_3$ over $1000$ data points, drawn from a Gaussian distribution
$\mathcal{N}(0, 1)$. 
The widths of the layers were fixed as $n_1 = n_2 = 64$, $n_4 = 1$, and $n_3$ was varied from $64$ to $2048$. We considered two different data dimensions, namely 
$n_0 = 200$ and $n_0 = 400$.
Fig.~\ref{fig:empirical_lipshitz} clearly shows that the empirical Lipschitz constant grows much slower with width than a term that grows $\mathcal{O}(n_3^{1/2})$. 
This empirically supports the assumption A4 and demonstrates that the bound given in assumption A4 is an extremely loose bound.

\begin{figure}[t]
    \centering
    \includegraphics[width=8.5cm, height=7cm]
    {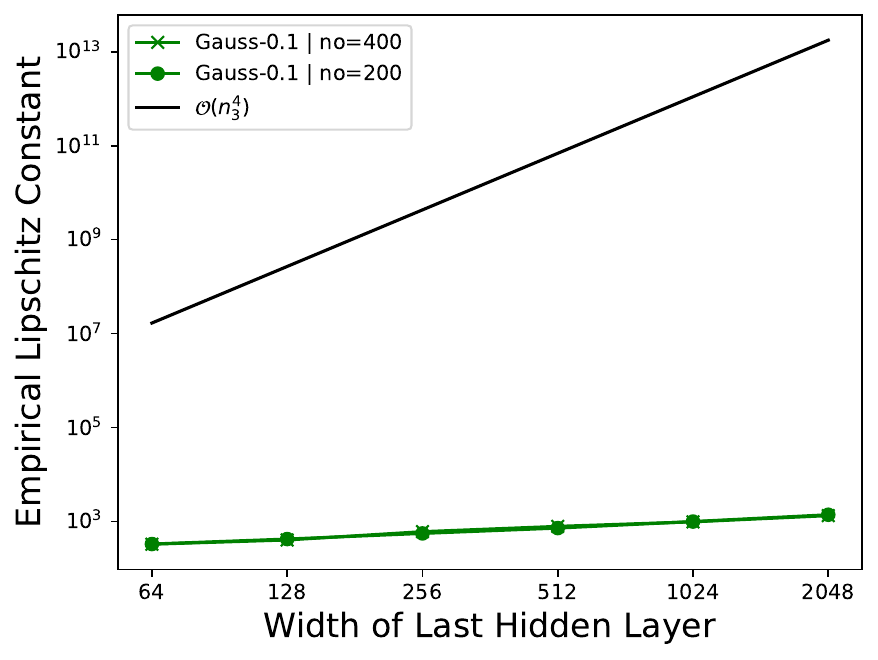}
    \caption{The empirical Lipshitz constant of a Gaussian-activated network over 1000 data points, where $n_1 = n_2 = 64$, $n_4 = 1$ and $n_3$ varying from 64 to 2048,  when $n_0 = 200$ and $n_0 =400$. This plot empirically confirms the assumption A4.}
    \label{fig:empirical_lipshitz}
\end{figure}

\begin{figure}[t]
    \centering
    \includegraphics[width=8.5cm, height=7cm]
    {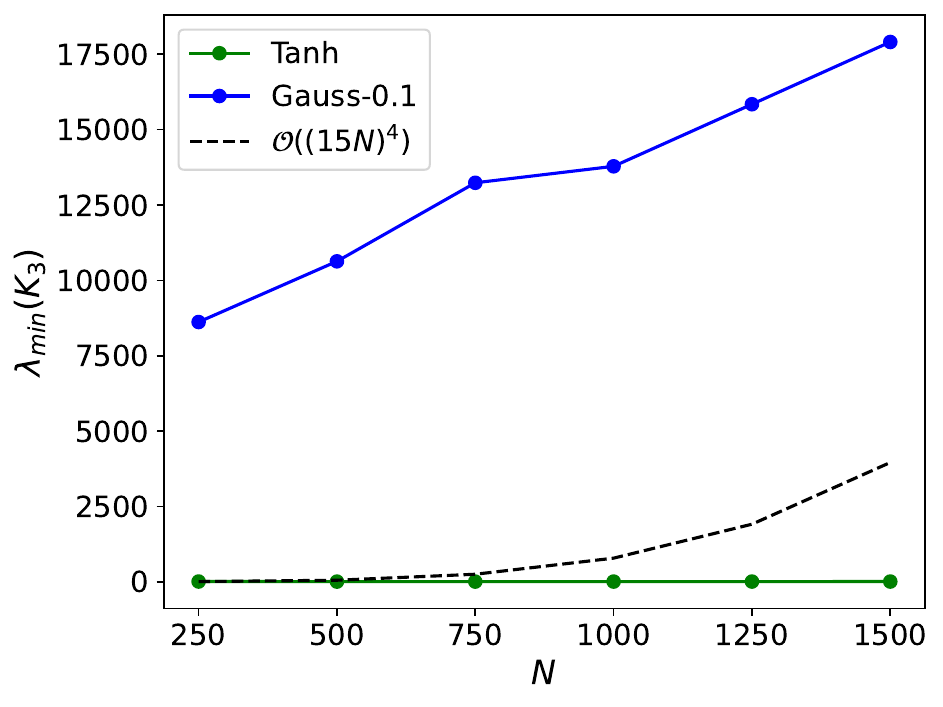}
    \caption{The minimum eigenvalue of the empirical NTK, where $n_0=400$, $n_{1} = 15N$, 
 and $n_2 = 400$. As predicted by Thm. \ref{thm;main_ntk_1}, $\lambda_{min}(K_L)$ for a Gaussian-activated network grows much faster than a Tanh-activated network and a quartic complexity of the width.}
    \label{fig:ntk_scale2}
\end{figure}


\subsection{Equilibrated preconditioning}\label{subsec;EM}

Preconditioning is a fundamental topic in numerical solutions for linear systems of equations, typically represented as $Ax=b$. It involves the adjustment of the matrix $A$ to reduce its condition number. Linear systems with lower condition numbers are not only solved more accurately but also more efficiently. The conventional approach to preconditioning transforms the original system into a simpler one, denoted as $PAx=Pb$. The key challenge in preconditioning lies in selecting the appropriate multiplier $P$ that effectively reduces the initially high condition number, $cond(A)$, to a significantly smaller value, $cond(PA)$.

For this work we will primarily focus on diagonal preconditioners. These are preconditoners $P$ which are diagonal matrices. We will primarily use the

\begin{itemize}
    \item[1.] Jacobi preconditioners; For a given matrix $A$ the Jacobi preconditioner of $A$ is the matrix $P = diag(A)^{-1}$.

    \item[2.] Row Equilibrated preconditioner; For a given matrix $A$ the Row equilibrated preconditioner associated to $A$ is the diagonal matrix $P = diag(\vert\vert A_{i;}\vert\vert)^{-1}$ whose diagonal terms are the inverse of the norms of the rows of $A$.
\end{itemize}
Analogous to row equilibration is column equilibration which generally performs the same. In this paper, we will primarily focus on row equilibration and often simply call it equilibration.

We recall that the main reason for focusing on the diagonal preconditioners given by row equilibration comes from Van Der Sluis' theorem.
\begin{theorem}[\cite{van1969condition}]\label{thm;van2}
    Let $A$ be a $n\times n$ matrix, $P$ an arbitrary diagonal matrix and $E$ the row equilibrated matrix built from $A$. Then $\kappa(EA) \leq \kappa(PA)$.
\end{theorem}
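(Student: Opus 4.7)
The plan is to follow Van Der Sluis' classical argument by reducing to the equilibrated case and then comparing row/column norms directly. Since $\kappa(PA) = \infty$ unless $P$ is nonsingular, I would assume $P$ is invertible; $E$ is automatically invertible because a nonsingular $A$ has no zero rows. Then I would factor
\[
PA \;=\; (PE^{-1})(EA) \;=\; D B, \qquad D := PE^{-1}, \quad B := EA,
\]
where $D$ is diagonal and $B$ has unit-norm rows. The claim $\kappa(EA) \leq \kappa(PA)$ is thus equivalent to proving $\kappa(B) \leq \kappa(DB)$ for every nonsingular diagonal $D$ whenever $B$ is row-equilibrated.

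Next, I would exploit the fact that the $i$th row of $DB$ is $d_i \, B_{i;}$ and the $j$th column of $(DB)^{-1} = B^{-1}D^{-1}$ is $d_j^{-1}(B^{-1})_{;j}$. Using the elementary inequalities that an induced operator norm dominates each row norm of the matrix and each column norm of its inverse, one gets
\[
\|DB\| \;\geq\; \max_i |d_i| \,\|B_{i;}\| \;=\; \max_i |d_i|,
\qquad
\|(DB)^{-1}\| \;\geq\; \max_j |d_j|^{-1}\, \|(B^{-1})_{;j}\|.
\]
Multiplying the two lower bounds gives
\[
\kappa(DB) \;\geq\; \frac{\max_i |d_i|}{\min_j |d_j|}\cdot \max_j\|(B^{-1})_{;j}\|,
\]
and since the first factor is $\geq 1$, a parallel upper bound of the form $\|B\| = \max_i\|B_{i;}\|$ and $\|B^{-1}\| = \max_j\|(B^{-1})_{;j}\|$ would close the argument, yielding $\kappa(B) \leq \kappa(DB)$.

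The hard point — and the only real subtlety — is the choice of norm. For the induced $\|\cdot\|_\infty$ operator norm the identities $\|M\|_\infty = \max_i \|M_{i;}\|_1$ and the dual statement for $\|\cdot\|_1$ make the row/column dominance bounds tight, so the calculation above closes cleanly and one recovers exactly $\kappa_\infty(EA) \leq \kappa_\infty(PA)$ with no hidden constant. For the spectral norm $\|\cdot\|_2$, however, the passage from a row (or column) norm to the operator norm costs at worst a factor of $\sqrt{n}$, and Van Der Sluis' original result picks up this factor; the theorem as stated in the excerpt should therefore be read in the former convention, or with the $\sqrt{n}$ constant absorbed. Aside from this, the argument is essentially a one-line manipulation of the definition of $\kappa$ together with the row-equilibration normalization of $B$, so no deeper input is required beyond the factoring trick in the first paragraph.
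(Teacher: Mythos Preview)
The paper does not prove this theorem at all: both occurrences (Theorems~\ref{thm;van} and~\ref{thm;van2}) are stated with a bare citation to \cite{van1969condition} and no argument is given. There is therefore nothing in the paper to compare your proposal against.

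On its own merits, your outline is the classical Van Der Sluis reduction and is correct in spirit: factoring $PA = D(EA)$ with $D$ diagonal and $EA$ row-normalized is exactly the right move, and you correctly flag the key subtlety that the constant-free inequality $\kappa(EA)\le\kappa(PA)$ holds only for appropriately matched norms (e.g.\ $\ell_\infty$ operator norm with $\ell_1$ row equilibration), whereas for the spectral norm one picks up a $\sqrt{n}$ factor. That caveat is genuine: the paper's statement, read with the $\ell_2$ row equilibration it defines and the spectral condition number it uses elsewhere, is slightly imprecise without that constant. One small sloppiness in your write-up: the displayed product bound
\[
\kappa(DB)\;\ge\;\frac{\max_i|d_i|}{\min_j|d_j|}\cdot\max_j\|(B^{-1})_{;j}\|
\]
does not follow from the two preceding lower bounds as written, since $\max_j\bigl(|d_j|^{-1}\|(B^{-1})_{;j}\|\bigr)$ is not in general $\bigl(\max_j|d_j|^{-1}\bigr)\cdot\bigl(\max_j\|(B^{-1})_{;j}\|\bigr)$. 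In the $\ell_\infty$ setting the argument is repaired by using the exact identity $\|DB\|_\infty=\max_i|d_i|$ together with $\|B^{-1}D^{-1}\|_\infty\ge\|B^{-1}\|_\infty/\max_j|d_j|$ (or by the direct row-sum computation), so the conclusion still goes through; just tighten that step before presenting it as a proof.
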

Thm. \ref{thm;van} shows that out of the diagonal preconditioners, row equilibration is the optimal one for condition number reduction. A similar result also holds for column equilibration, see \cite{van1969condition}.

We now prove some results that shows why applying row equilibration is a suitable choice for reducing the condition number of a matrix. All proofs can be found in the appendix.

In \cite{guggenheimer1995simple} the following estimate on the condition number of a non-singular $n \times n$ matrix $A$ is proven:
\begin{equation}\label{eqn;guggen_est}
    \kappa(A) \leq \frac{2}{\vert det(A)\vert}
    \bigg{(}\frac{\vert\vert A\vert\vert_F^2}{n}\bigg{)}^
    {\frac{n}{2}} := U(A)
\end{equation}

The following lemma shows that equilibrating a non-singular matrix reduces the upper bound $U(A)$ by a precise factor that depends on $A$.

\begin{lemma}\label{lem;cond_reduction}
    Let $A$ be a non-singular matrix. Then equilibrating $A$ reduces the upper bound $U(A)$ by a factor of 
    \begin{equation}\label{eqn;cond_reduction}
    \frac{ \vert\vert A_{1;}\vert\vert \cdots 
    \vert\vert A_{n;}\vert\vert  }
{\bigg{(}\frac{\vert\vert A\vert\vert_F^2}{n}\bigg{)}
^{n/2}
}.
\end{equation}
\end{lemma}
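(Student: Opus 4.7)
The plan is a direct calculation using the multiplicativity of the determinant together with the effect of the row equilibration preconditioner on the Frobenius norm. Recall that the row equilibrated preconditioner is $E = \mathrm{diag}(\|A_{1;}\|,\ldots,\|A_{n;}\|)^{-1}$, so the $i$-th row of $EA$ is $A_{i;}/\|A_{i;}\|$ and hence has unit norm.

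First I would compute the two ingredients that enter the bound $U(\cdot)$ from \eqref{eqn;guggen_est} for the matrix $EA$. For the Frobenius norm, since every row of $EA$ is unit, we immediately get $\|EA\|_F^2 = n$, so the factor $\bigl(\|EA\|_F^2/n\bigr)^{n/2}$ collapses to $1$. For the determinant, the multiplicativity $\det(EA)=\det(E)\det(A)$ combined with the fact that $E$ is diagonal gives $|\det(EA)| = |\det(A)|/\bigl(\|A_{1;}\|\cdots\|A_{n;}\|\bigr)$.

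Plugging these into the definition of $U$ yields
\begin{equation*}
U(EA) \;=\; \frac{2}{|\det(EA)|} \;=\; \frac{2\,\|A_{1;}\|\cdots\|A_{n;}\|}{|\det(A)|}.
\end{equation*}
Dividing by $U(A) = \tfrac{2}{|\det(A)|}\bigl(\|A\|_F^2/n\bigr)^{n/2}$, the $|\det(A)|$ cancels and I obtain
\begin{equation*}
\frac{U(EA)}{U(A)} \;=\; \frac{\|A_{1;}\|\cdots\|A_{n;}\|}{\bigl(\|A\|_F^2/n\bigr)^{n/2}},
\end{equation*}
which is precisely the stated factor.

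There is no real obstacle here since the argument is a two-line calculation; the only thing worth flagging for the reader is that this factor is genuinely $\le 1$, i.e.\ equilibration really reduces the upper bound. This follows from AM--GM applied to the positive numbers $\|A_{i;}\|^2$: writing $\|A\|_F^2 = \sum_i \|A_{i;}\|^2$, AM--GM gives $\tfrac{1}{n}\sum_i \|A_{i;}\|^2 \ge \bigl(\prod_i \|A_{i;}\|^2\bigr)^{1/n}$, which after raising to the $n/2$ power is exactly $\bigl(\|A\|_F^2/n\bigr)^{n/2} \ge \prod_i \|A_{i;}\|$. Thus the displayed ratio lies in $(0,1]$, with equality only when all row norms of $A$ are equal (in which case $A$ is already row-equilibrated up to a scalar and preconditioning has nothing to do).
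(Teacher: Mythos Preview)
Your proof is correct and follows essentially the same approach as the paper: compute $\|EA\|_F^2=n$ from the unit rows, use multiplicativity of the determinant to get $|\det(EA)|$, and then invoke AM--GM to show the resulting factor is at most $1$. The only cosmetic difference is that you apply AM--GM directly to the squares $\|A_{i;}\|^2$, whereas the paper applies it to the norms themselves and then uses the Cauchy--Schwarz-type bound $(\sum_i \|A_{i;}\|)^2 \le n\sum_i \|A_{i;}\|^2$; your route is slightly more direct.
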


We note that the quantity in \eqref{eqn;cond_reduction} is less than $1$. While lem. \ref{lem;cond_reduction} shows that equilibrating a matrix can reduce an upper bound of its condition number it doesn't imply that the condition number actually decreases. Therefore, it is reasonable to study lower bounds.
The following lemma provides a lower bound on the condition number.

\begin{lemma}\label{lem;cond_lower_bnd_1}
    Let $A \in \R^{2xn}$ be a matrix with two rows and let the rows be denoted by $X_1$ and $X_2$. Let the angle between the two vectors $X_1$ and $X_2$ in $\R^n$ be denoted by 
    $\theta(X_1, X_2)$. Suppose that
     $1 - cos(\theta(X_1, X_2)) = \epsilon$ for some 
     $\epsilon \in (0,1)$. Then
     \begin{equation}
         \kappa(A) \geq 
         \Bigg{(}
         \frac{1}{4\epsilon(2-\epsilon)}
         \bigg{(}
\frac{\vert\vert X_1\vert\vert}{\vert\vert X_2\vert\vert} + 
\frac{\vert\vert X_2\vert\vert}{\vert\vert X_1\vert\vert} + 2 
         \bigg{)}\Bigg{)}^{\frac{1}{2}}
     \end{equation}
\end{lemma}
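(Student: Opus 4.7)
The plan is to reduce the condition number of $A$ to the spectrum of the $2\times 2$ Gram matrix $AA^T$, whose eigenvalues can be read off directly. Since $\kappa(A)^2 = \lambda_{\max}(AA^T)/\lambda_{\min}(AA^T)$ and
$$AA^T = \begin{pmatrix} \|X_1\|^2 & \langle X_1,X_2\rangle \\ \langle X_1,X_2\rangle & \|X_2\|^2 \end{pmatrix},$$
the two eigenvalues $\lambda_\pm$ satisfy
\begin{align*}
\lambda_+ + \lambda_- &= \|X_1\|^2 + \|X_2\|^2, \\
\lambda_+\lambda_- &= \det(AA^T) = \|X_1\|^2\|X_2\|^2(1-\cos^2\theta) = \|X_1\|^2\|X_2\|^2\,\epsilon(2-\epsilon),
\end{align*}
where I have used $\sin^2\theta = (1-\cos\theta)(1+\cos\theta) = \epsilon(2-\epsilon)$.

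The central estimate is the trivial bound $\lambda_+ \geq \tfrac{1}{2}(\lambda_+ + \lambda_-)$. Rewriting $\kappa(A)^2 = \lambda_+^2/(\lambda_+\lambda_-)$ and substituting the two identities above yields
$$\kappa(A)^2 \;\geq\; \frac{(\|X_1\|^2+\|X_2\|^2)^2}{4\,\|X_1\|^2\|X_2\|^2\,\epsilon(2-\epsilon)}.$$

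To match the stated form, I would then invoke the elementary inequality $(u^2+v^2)^2 \geq uv(u+v)^2$ for positive $u,v$, which rearranges to $(u-v)^2(u^2+uv+v^2) \geq 0$ and so always holds. Applied with $u=\|X_1\|$, $v=\|X_2\|$ and divided through by $\|X_1\|^2\|X_2\|^2$, this gives
$$\frac{(\|X_1\|^2+\|X_2\|^2)^2}{\|X_1\|^2\|X_2\|^2} \;\geq\; \frac{(\|X_1\|+\|X_2\|)^2}{\|X_1\|\|X_2\|} \;=\; \frac{\|X_1\|}{\|X_2\|}+\frac{\|X_2\|}{\|X_1\|}+2.$$
Combining this with the previous display and taking square roots gives the claim.

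I do not expect a serious obstacle; everything is a direct computation once one passes to $AA^T$. The only subtlety is in the final step: the chain naturally produces a slightly stronger bound expressed in the squared norms $\|X_i\|^2$, and one must deliberately weaken it via $(u^2+v^2)^2 \geq uv(u+v)^2$ so the result is in the exact form stated. The assumption $\epsilon \in (0,1)$ plays only the role of guaranteeing $\sin^2\theta = \epsilon(2-\epsilon) \in (0,1)$, so the denominator is nonzero and $A$ indeed has full row rank.
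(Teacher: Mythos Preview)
Your proof is correct and follows essentially the same route as the paper: both pass to the $2\times 2$ Gram matrix $AA^T$, use $\lambda_+\lambda_- = \det(AA^T) = \|X_1\|^2\|X_2\|^2\sin^2\theta$ together with the bound $\lambda_+ \ge \tfrac12\operatorname{tr}(AA^T)$ (the paper obtains this by dropping the discriminant in the explicit quadratic formula), and arrive at the same intermediate inequality $\kappa(A)^2 \ge (\|X_1\|^2+\|X_2\|^2)^2/\bigl(4\|X_1\|^2\|X_2\|^2\,\epsilon(2-\epsilon)\bigr)$. Your final weakening step via $(u^2+v^2)^2 \ge uv(u+v)^2$ is a nice touch that the paper omits---the paper actually stops at the stronger squared-ratio bound and declares the lemma proved, so your version is if anything slightly more complete in matching the stated form.
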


Lem. \ref{lem;cond_lower_bnd_1} implies that if an $n \times n$ matrix $A$ has two rows that have angle 
$cos(\theta) = 1 - \epsilon$ for $\epsilon \in (0,1)$. Then the condition number of $A$ is bounded below by $C\epsilon^{-1/2}$, for a constant $C$ that depends on the norms of the two rows.

\begin{lemma}\label{lem;cond_equib_reduction}
    Let $A$ be a $n \times n$ matrix and assume $A$ has two rows
    $X_i$ and $X_j$ such that 
    $cos(\theta(X_i, X_j)) 
    = 1 - \epsilon$ for $\epsilon \in (0,1)$. Let the lower bound on $\kappa(A)$ from lem. \ref{lem;cond_lower_bnd_1} be denoted by $C_A\epsilon^{-1/2}$. Let $PA$ denote the row equilibration of $A$ and let the lower bound on 
    $\kappa(PA)$,
    given by lem. \ref{lem;cond_lower_bnd_1}, be denoted by 
    $C_{PA}\epsilon^{-1/2}$. Then $C_{PA} \leq C_A$.
\end{lemma}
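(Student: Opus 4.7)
The plan is to use the explicit form of the lower bound given by Lemma \ref{lem;cond_lower_bnd_1}, together with the key observation that row equilibration preserves the angle between rows while equalizing their norms.

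First I would note that since $P$ is diagonal with positive entries (being the row-equilibrated preconditioner), the $k$-th row of $PA$ is $A_{k;}/\|A_{k;}\|$. In particular, the two distinguished rows $X_i, X_j$ of $A$ become $X_i/\|X_i\|$ and $X_j/\|X_j\|$ in $PA$; these are unit vectors, and the cosine of the angle between them equals $\cos(\theta(X_i,X_j)) = 1-\epsilon$, since cosine is invariant under positive rescaling. Thus the hypotheses of Lemma \ref{lem;cond_lower_bnd_1} also apply to the corresponding pair of rows of $PA$, with the same $\epsilon$.

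Next I would apply Lemma \ref{lem;cond_lower_bnd_1} to both $A$ and $PA$. Extracting the constants in front of $\epsilon^{-1/2}$ from the stated bound gives
\begin{equation*}
C_A = \left(\frac{1}{4(2-\epsilon)}\left(\frac{\|X_i\|}{\|X_j\|} + \frac{\|X_j\|}{\|X_i\|} + 2\right)\right)^{1/2},
\qquad
C_{PA} = \left(\frac{1}{4(2-\epsilon)}\big(1+1+2\big)\right)^{1/2} = \left(\frac{1}{2-\epsilon}\right)^{1/2},
\end{equation*}
where for $C_{PA}$ I used that the two distinguished rows both have norm $1$.

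Finally, the desired inequality $C_{PA} \leq C_A$ reduces to showing
\begin{equation*}
\frac{\|X_i\|}{\|X_j\|} + \frac{\|X_j\|}{\|X_i\|} \;\geq\; 2,
\end{equation*}
which is immediate from the AM--GM inequality applied to the positive reals $\|X_i\|/\|X_j\|$ and $\|X_j\|/\|X_i\|$, with equality precisely when $\|X_i\|=\|X_j\|$. There is essentially no obstacle here: the only subtlety worth flagging is the remark that row equilibration preserves angles and reduces the row-norm ratio to $1$, which is exactly the configuration that minimizes the constant appearing in Lemma \ref{lem;cond_lower_bnd_1}. This is what makes row equilibration the natural procedure for shrinking the lower bound on $\kappa$.
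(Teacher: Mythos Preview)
Your proposal is correct and takes essentially the same approach as the paper: both reduce the claim to the elementary fact that $\|X_i\|/\|X_j\| + \|X_j\|/\|X_i\| + 2$ attains its minimum value $4$ when the two row norms coincide, which is exactly the situation produced by row equilibration. The only cosmetic difference is that the paper establishes this minimization via a gradient computation on $f(X,Y)=\|X\|/\|Y\|+\|Y\|/\|X\|+2$ (packaged inside an auxiliary result, Theorem~\ref{lem;cond_lower_bnd_4}), whereas you invoke AM--GM directly; your route is slightly more economical but the underlying idea is identical.
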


Lem. \ref{lem;cond_reduction}-\ref{lem;cond_equib_reduction} demonstrates that the process of equilibrating a matrix effectively lowers both the upper and lower bounds on its condition number. This compelling insight provides a strong rationale for considering equilibration as an effective strategy for lowering the condition number of a matrix.

\begin{proof}[\textbf{Proof of lem. \ref{lem;cond_reduction}}]
    Let $PA$ denote the equilibration of $A$. By \eqref{eqn;guggen_est}, we have 
    \begin{equation}
        \kappa(PA) \leq \frac{2}{\vert det(PA)\vert}
        \bigg{(}
        \frac{\vert\vert PA\vert\vert_F^2}{n}
        \bigg{)}^{\frac{n}{2}}.
    \end{equation}
    Observe that the norm of each row of $PA$ is $1$, yielding that $\vert\vert PA\vert\vert_F = n$. Hence 
    $\frac{\vert\vert PA\vert\vert_F^2}{n} = 1$. Furthermore, by definition of $P$ we have that 
\begin{equation}
    det(P^{-1}) = \vert\vert A_{1;}\vert\vert \cdots 
    \vert\vert A_{n;}\vert\vert 
\end{equation}
where $A_{i;}$ denotes the ith-row of $A$. Applying the 
arithmetic-geometric mean inequality we find
\begin{equation}
    det(P^{-1}) \leq 
\bigg{(}\frac{(\vert\vert A_{1;}\vert\vert + \cdots + 
\vert\vert A_{n;}\vert\vert)^2}{n^2}\bigg{)}^{\frac{n}{2}}.
\end{equation}
Using the fact that 
$(\vert\vert A_{1;}\vert\vert + \cdots + 
\vert\vert A_{n;}\vert\vert)^2 \leq 
n\cdot (\vert\vert A_{1;}\vert\vert^2 + \cdots + 
\vert\vert A_{n;}\vert\vert^2)$, we find that
\begin{equation}
    det(P^{-1}) \leq \bigg{(}
\frac{\vert\vert A_{1;}\vert\vert^2 + \cdots + 
\vert\vert A_{n'}\vert\vert^2}{n}
\bigg{)}^{\frac{n}{2}}.
\end{equation}
Then using the fact that $det(PA) = det(P)det(A)$ the result follows.
\end{proof}

\begin{proof}[\textbf{Proof of lem. \ref{lem;cond_lower_bnd_1}}]
    We consider the matrix $B = A^TA$ and note that the singular values of $A$ are precisely the square roots of the  eigenvalues of $B$. Since $B$ is a $2 \times 2$ matrix a closed form expression for the eigenvalues
    $\lambda_1 \geq \lambda_2$
    can be given by
\begin{align*}
    \lambda_1 &= \frac{X_1X_1^T + X_2X_2^T + 
    ((X_1X_1^T + X_2X_2^T)^2 + (4X_1X_2^T)^2)^{1/2}}{2} \\ 
    \lambda_2 &= 
    \frac{X_1X_1^T + X_2X_2^T + 
    ((X_1X_1^T + X_2X_2^T)^2 - (4X_1X_2^T)^2)^{1/2}}{2}.
\end{align*}
It follows that 
\begin{equation}
    \frac{\lambda_1}{\lambda_2} = 
    \frac{[X_1X_1^T + X_2X_2^T + 
    ((X_1X_1^T + X_2X_2^T)^2 + (4X_1X_2^T)^2)^{1/2})]^2}
    {4(X_1X_1^TX_2X_2^T - (X_1X_2^T)^2)}.
\end{equation}
By assumption we have that $cos(\theta(X_1, X)2)) = 1 - \epsilon$, giving $X_1X_2^T 
= (1-\epsilon)^2X_1X_1^TX_2X_2^T$. This implies 
\begin{equation}
  \frac{\lambda_1}{\lambda_2} \geq 
  \frac{(X_1X_1^T + X_2X_2^T)^2}
  {4((1- (1-\epsilon)^2)X_1X_1^TX_2X_2^T}
\end{equation}
which proves the result since $\kappa(A)$ is given by the square root of $\frac{\lambda_1}{\lambda_2}$.
\end{proof}

We now given the proof of lem. \ref{lem;cond_equib_reduction}. To do so we will need a string of lemmas.

\begin{lemma}\label{lem;cond_lower_bnd_2}
    Given an $n \times n$ matrix $A$ if $A$ has two rows $X_i$ and $X_j$ satisfying the assumptions of lem. \ref{lem;cond_lower_bnd_1}. Then 
    \begin{equation}
        \kappa(A) \geq 
        \Bigg{(}
         \frac{1}{4\epsilon(2-\epsilon)}
         \bigg{(}
\frac{\vert\vert X_i\vert\vert}{\vert\vert X_j\vert\vert} + 
\frac{\vert\vert X_i\vert\vert}{\vert\vert X_j\vert\vert} + 2 
         \bigg{)}\Bigg{)}^{\frac{1}{2}} := L(i,j).
    \end{equation}
\end{lemma}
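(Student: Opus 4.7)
The plan is to reduce the $n\times n$ case to the $2\times n$ case already handled by lem.~\ref{lem;cond_lower_bnd_1}. Let $B \in \R^{2\times n}$ be the submatrix of $A$ whose two rows are exactly $X_i$ and $X_j$. Since the hypothesis $1 - \cos\theta(X_i,X_j) = \epsilon$ is a statement purely about these two rows, lem.~\ref{lem;cond_lower_bnd_1} applies directly to $B$ and yields $\kappa(B) \geq L(i,j)$. The entire task is therefore to show $\kappa(A) \geq \kappa(B)$.

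For this comparison step I would invoke singular-value interlacing for row submatrices, which is a direct consequence of the Courant--Fischer min-max characterization: if $B$ is obtained from an $n\times n$ matrix $A$ by deleting $n-2$ rows, then
\begin{equation*}
\sigma_k(A) \;\geq\; \sigma_k(B) \;\geq\; \sigma_{k+(n-2)}(A), \qquad k=1,2.
\end{equation*}
Taking $k=1$ gives $\sigma_1(B) \leq \sigma_1(A)$, and taking $k=2$ gives $\sigma_2(B) \geq \sigma_n(A)$. The smallest singular value of the thin matrix $B$ is $\sigma_2(B)$, so these two inequalities combine to
\begin{equation*}
\kappa(A) \;=\; \frac{\sigma_1(A)}{\sigma_n(A)} \;\geq\; \frac{\sigma_1(B)}{\sigma_2(B)} \;=\; \kappa(B) \;\geq\; L(i,j),
\end{equation*}
which is exactly the claimed lower bound.

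The only subtle step is the interlacing inequality; everything else is a direct appeal to the previous lemma. I would justify the interlacing by writing $\sigma_k(A) = \min_{\dim V = n-k+1} \max_{v \in V, \|v\|=1}\|Av\|$ and noting that $\|Bv\|^2 = \sum_{\ell \in \{i,j\}} \langle X_\ell, v\rangle^2 \leq \sum_{\ell=1}^n \langle X_\ell, v\rangle^2 = \|Av\|^2$, from which one extracts the upper bound $\sigma_k(B) \leq \sigma_k(A)$; the companion inequality $\sigma_2(B) \geq \sigma_n(A)$ follows similarly (or can be quoted as the standard Cauchy interlacing theorem for singular values under row deletion). This is the only point at which care is required, since the rest reduces cleanly to an already-proven result.
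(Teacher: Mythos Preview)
Your argument is correct. The paper's own proof of this lemma is a single sentence---``The proof of this lemma follows easily from lem.~\ref{lem;cond_lower_bnd_1}''---so you have supplied exactly the missing step: observing that $BB^T$ is a $2\times 2$ principal submatrix of $AA^T$ and invoking Cauchy interlacing to get $\sigma_1(A)\geq\sigma_1(B)$ and $\sigma_2(B)\geq\sigma_n(A)$, hence $\kappa(A)\geq\kappa(B)\geq L(i,j)$. This is the natural justification and almost certainly what the authors had in mind.
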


\begin{proof}
    The proof of this lemma follows easily from lem. \ref{lem;cond_lower_bnd_1}.
\end{proof}

\begin{lemma}\label{lem;cond_lower_bnd_3}
    Let $A$ be a non-singular matrix $n \times n$, with $n > 0$,  and let $\Theta$ denote the matrix whose entries are given by 
    $cos(\theta_{ij})$, where $\theta_{ij}$ denotes the angle between the ith and jth row of $A$. Assume for $i \neq j$ that $\Theta_{ij} = 1 -\epsilon_{ij}$ 
    for $\epsilon_{ij} \in (0,1)$. Let 
    $\epsilon = \min_{i\neq j}\Theta_{ij} = cos(\theta_{IJ})$, where $I$ and $J$ denote the Ith and Jth rows of $A$, which we denote by $X_I$ and $X_J$ respectively. Then
    \begin{equation}
        \kappa(A) \geq 
        \frac{2}{n(n-1)}\cdot 
        \Bigg{(}
         \frac{1}{4\epsilon(2-\epsilon)}
         \bigg{(}
\frac{\vert\vert X_I\vert\vert}{\vert\vert X_J\vert\vert} + 
\frac{\vert\vert X_I\vert\vert}{\vert\vert X_J\vert\vert} + 2 
         \bigg{)}\Bigg{)}^{\frac{1}{2}} = L(I,J) := L(A).
    \end{equation}
\end{lemma}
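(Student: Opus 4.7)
The strategy is to reduce the $n \times n$ case to the $2 \times n$ case already treated by lem. \ref{lem;cond_lower_bnd_1}. For each pair of indices $i \neq j$, let $B_{ij} \in \R^{2 \times n}$ denote the submatrix of $A$ formed by selecting the rows $X_i$ and $X_j$, written as $B_{ij} = R_{ij} A$ with $R_{ij} = [e_i\ e_j]^T \in \R^{2 \times n}$. Since $R_{ij}$ has orthonormal rows ($R_{ij}R_{ij}^T = I_2$), the plan is to first establish the submatrix comparison $\kappa(A) \geq \kappa(B_{ij})$, and then invoke lem. \ref{lem;cond_lower_bnd_1} to conclude.

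For the upper singular value, the variational identity
\begin{equation*}
\vert\vert B_{ij} v\vert\vert^2 = (X_i v)^2 + (X_j v)^2 \leq \sum_{k=1}^n (X_k v)^2 = \vert\vert A v\vert\vert^2
\end{equation*}
holds for every $v \in \R^n$, which directly gives $\sigma_{\max}(B_{ij}) \leq \sigma_{\max}(A)$. For the lower singular value, I claim $\sigma_{\min}(A) \leq \sigma_{\min}(B_{ij})$. Setting $w = R_{ij}^T u$ for $u \in \R^2$, one has $\vert\vert w\vert\vert = \vert\vert u\vert\vert$ (by orthonormality of the rows of $R_{ij}$) and $u^T B_{ij} = w^T A$, so $\sigma_{\min}(B_{ij})$ equals the minimum of $\vert\vert w^T A\vert\vert/\vert\vert w\vert\vert$ over the two-dimensional coordinate subspace spanned by $\{e_i, e_j\} \subseteq \R^n$. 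This restricted minimum is at least the unrestricted minimum over all of $\R^n$, which equals $\sigma_{\min}(A^T) = \sigma_{\min}(A)$. Combining the two singular value inequalities yields $\kappa(A) \geq \kappa(B_{ij})$ for every pair $(i,j)$.

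Applying lem. \ref{lem;cond_lower_bnd_1} to $B_{ij}$ with $\epsilon_{ij} = 1 - \cos\theta_{ij}$ playing the role of $\epsilon$ yields $\kappa(B_{ij}) \geq L(i,j)$, hence $\kappa(A) \geq L(i,j)$ for every pair. Choosing the pair $(I,J)$ that maximizes $L(i,j)$ — the pair of rows with the smallest mutual angle, equivalently largest $\cos\theta_{ij}$ — delivers $\kappa(A) \geq L(I,J)$. The statement then follows, since $2/(n(n-1)) \leq 1$ means $\kappa(A) \geq L(I,J) \geq \frac{2}{n(n-1)} L(I,J) = L(A)$; indeed the sharper bound $\kappa(A) \geq L(I,J)$ holds, with the factor $2/(n(n-1)) = 1/\binom{n}{2}$ in the statement readable as a conservative averaging across the $\binom{n}{2}$ row pairs. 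The main obstacle is the submatrix singular value comparison $\sigma_{\min}(A) \leq \sigma_{\min}(B_{ij})$, which hinges on the orthonormality of the rows of $R_{ij}$ so that restricting the variational problem for $\sigma_{\min}$ to a coordinate plane can only increase the optimum — a direct instance of the interlacing phenomenon for singular values under row selection.
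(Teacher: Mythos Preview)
Your proof is correct and follows the same route as the paper: reduce to the two-row bound via lem.~\ref{lem;cond_lower_bnd_1} (packaged in the paper as lem.~\ref{lem;cond_lower_bnd_2}) to get $\kappa(A)\ge L(i,j)$ for every pair, then specialize to $(I,J)$. The only differences are cosmetic: you supply the explicit interlacing argument for $\kappa(A)\ge\kappa(B_{ij})$ that the paper leaves as ``follows easily'', and you pick the single pair directly rather than summing over all $\binom{n}{2}$ pairs and dividing as the paper does---correctly noting that the factor $2/(n(n-1))$ only weakens the bound.
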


\begin{proof}
    The proof of this lemma follows by observing that by lem.
    \ref{lem;cond_lower_bnd_2}, $\kappa(A)$ can be lower bounded 
    by each $L(i,j)$ (see lem. \ref{lem;cond_lower_bnd_2} for 
    the notation $L(i,j)$) for $i \neq j$. There are in total 
    $\frac{(n-1)n}{2}$ of the $L(i,j)$. It therefore follows that
    \begin{equation}
        \frac{(n-1)n}{2}\kappa(A) \geq \sum_{i\neq j}L(i,j)
    \end{equation}
    which implies 
    \begin{equation}
        \kappa(A) \geq  \frac{2}{n(n-1)} L(I,J).
    \end{equation}
\end{proof}

\begin{theorem}\label{lem;cond_lower_bnd_4}
    Given a non-singular $n \times n$ matrix $A$ with $n > 0$. We have that the equilibrated matrix $PA$ has 
    $L(PA) \leq L(A)$. In particular, equilibrating a matrix lowers the upper bound $U(A)$ on $\kappa(A)$ and lowers the lower bound $L(A)$ on $\kappa(A)$.
\end{theorem}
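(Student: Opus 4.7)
The plan is to exploit two simple invariance and minimization properties of the quantities entering the definition of $L(\cdot)$.

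First, row equilibration by $P = \mathrm{diag}(\|A_{i;}\|)^{-1}$ multiplies each row of $A$ by a strictly positive scalar, so it preserves the angle between any two rows: for every $i \neq j$, the cosine of the angle between the $i$th and $j$th rows of $PA$ equals the cosine of the corresponding angle for $A$. Therefore the full matrix $\Theta$ of pairwise row cosines is identical for $A$ and $PA$, which means the minimum value $\epsilon = \min_{i\neq j}\Theta_{ij}$ and the index pair $(I,J)$ achieving it are literally unchanged. In particular, the factor $\tfrac{1}{4\epsilon(2-\epsilon)}$ appearing in $L(\cdot)$, as well as the prefactor $\tfrac{2}{n(n-1)}$, are the same for $A$ and for $PA$.

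Second, after equilibration each row of $PA$ has unit norm, so $\|(PA)_{I;}\|/\|(PA)_{J;}\| = 1$, and the bracketed quantity $\tfrac{\|X_I\|}{\|X_J\|} + \tfrac{\|X_J\|}{\|X_I\|} + 2$ evaluates to exactly $4$ at $PA$. For the original matrix $A$, the AM--GM inequality gives $\tfrac{\|X_I\|}{\|X_J\|} + \tfrac{\|X_J\|}{\|X_I\|} \geq 2$, so the bracketed quantity is at least $4$, with equality iff $\|X_I\| = \|X_J\|$. Combining the two observations and taking square roots yields $L(PA) \leq L(A)$, as required.

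The ``in particular'' clause then follows immediately: the reduction of the upper bound $U(A)$ is exactly the content of Lemma~\ref{lem;cond_reduction} (the reduction factor there is at most $1$ by AM--GM), and the reduction of the lower bound $L(A)$ is what we just established. I don't foresee a serious obstacle; the only step that must be stated carefully is that the pair $(I,J)$ realizing the minimum cosine is \emph{identically} the same for $A$ and $PA$ (because angles are preserved exactly, not merely bounded), so we are comparing the two values of $L$ evaluated at the same $\epsilon$ rather than at possibly distinct minimizing pairs.
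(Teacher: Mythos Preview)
Your proposal is correct and follows essentially the same approach as the paper's own proof: both arguments reduce to the observation that the bracketed quantity $\tfrac{\|X_I\|}{\|X_J\|} + \tfrac{\|X_J\|}{\|X_I\|} + 2$ is minimized at the value $4$, and that equilibration (making all row norms equal to $1$) realizes this minimum. The paper obtains the minimum by computing the gradient of $f(X,Y)=\tfrac{\|X\|}{\|Y\|}+\tfrac{\|Y\|}{\|X\|}+2$, whereas you invoke AM--GM directly; and you make explicit the angle-preservation step (hence the invariance of $\epsilon$ and of the minimizing pair $(I,J)$), which the paper's proof uses only implicitly when it passes from the per-pair inequality $L(i,j)_{PA}\le L(i,j)_A$ to the conclusion.
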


\begin{proof}
Consider the function $f : \R^n \times \R^n - \{(0,0)\} \rightarrow \R$ defined by 
\begin{equation}
f(X, Y) = 
\frac{\vert\vert X\vert\vert}{\vert\vert Y\vert\vert} + 
\frac{\vert\vert Y\vert\vert}{\vert\vert X\vert\vert} + 2.
\end{equation}
By computing the gradient of this function and solving for minima we see that the minimum of $f$ occurs at the points given by $\{X = Y\}$ and $\{X = -Y\}$. In particular the minimum of $f$ occurs on the points $(X,Y)$ such that 
$\vert\vert X\vert\vert = \vert\vert Y\vert\vert$ and the minimum value is exactly $4$. It follows that
$L(i, j)_{PA} \leq L(i, j)_A$ for any $i \neq j$. The result follows.
\end{proof}

The proof of lem. \ref{lem;cond_equib_reduction} then immediately follows from lem. \ref{lem;cond_lower_bnd_4}.

\subsection{Experiments}\label{app;exps}

\subsubsection{Hardware and software}
All experiments were run on a Nvidia RTX A6000 GPU. Furthermore, all the exerpiments were coded in PyTorch version 2.0.1.

\paragraph{Experimental trials:} All experiments were each run 5 times and the average of the train, test and times are what is recorded.

\paragraph{Hyperparameters:} The Gaussian and sine activation both have a hyperparamter given by the variance $s^2$ for the Gaussian and $f$, the frequency for sine. In general there is no principled way to choose these hyperparameters. Therefore, we ran several trials and found that for a Gaussian, $s = 0.1-0.4$ worked best. For a sine we found that a frequency of $1-10$ worked best. 

\subsection{Burgers' equation}\label{app;burgers}

All networks used to test on this equation were 3 hidden layers with 128 neurons. We used synthetic data from: 

\url{https://github.com/jdtoscano94/Learning-Python-Physics- \\
Informed-Machine-Learning-PINNs-DeepONets}

In total we used 100 boundary and intial points which were randomly sampled using a uniform distribution from a boundary training set of 456 points. We used 10000 sampled PDE data points using a latin hypercube sampling strategy. The Gaussian networked used a variance of $s^2 = 0.1^2$ and the sine network used a frequency of $f = 10$.

All networks were trained with Adam on full batch and used the standard configuration in the adam paper \cite{kingma2014adam}, with a learning rate of 1e-4. We note that some practitioners use much smaller learning rates but we found for Tanh networks these were proving unstable and leading to NaN results for the loss. In order to keep each experiment fair, we adopted 1e-4 as it worked well with all networks. All networks were trained for 15000 epochs on full batches.

\subsection{Navier-Stokes}

For the Navier-Stokes example from sec. \ref{sec;Navier-Stokes}, we used a 3 hidden layer network with 128 neurons in each layer for each network. The data was synthetic data obtained from the github of the original paper \cite{raissi2019physics}:
\url{https://github.com/maziarraissi/PINNs}

We used 5000 uniformly sampled points from a total training set of 1000000 points. All networks were trained with Adam with a learning rate of 1e-4 for 30000 epochs on full batches. For the Gaussian network we found that a variance of $s^2 = 0.4^2$ worked best and for the sine network we found a frequency of $2$ worked best.

The training curves of all the loss functions are shown in fig. \ref{fig:Navier-stokes_train_error_all}. The Gaussian-activated PINN performs the best overall, showing faster convergence than all other 3. Note that in \cite{raissi2019physics} a Tanh network was used to reconstruct the velocity and pressure of the Navier-Stokes equations and in that case convergence only occurred at 80000 epochs.

\begin{figure}[t]
    \centering
    \includegraphics[width=13.5cm, height=11cm]
    {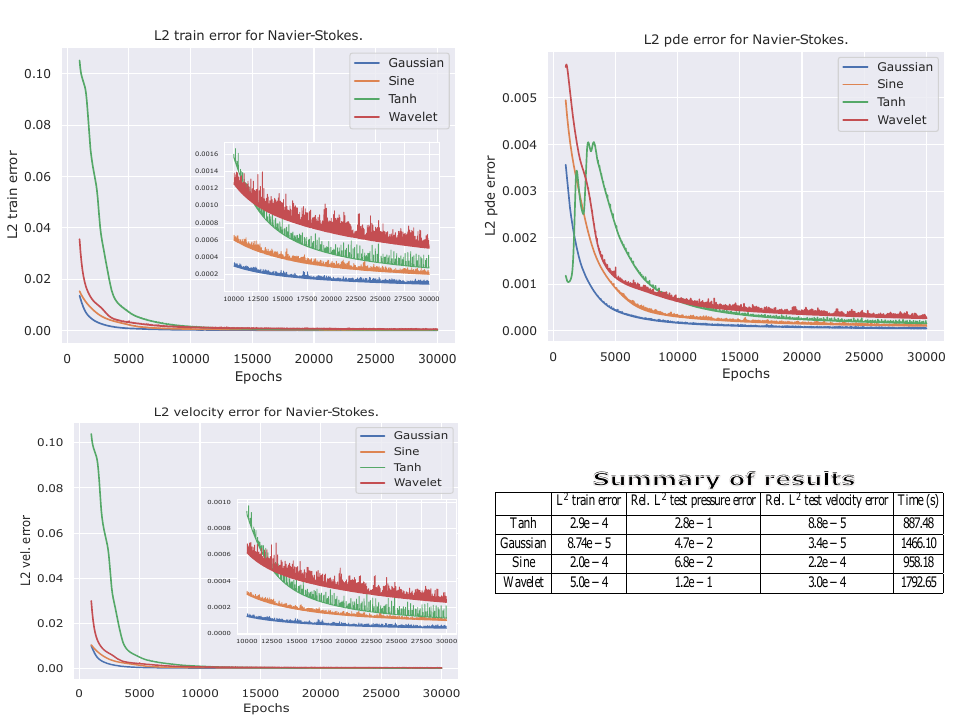}
    \caption{Training error of all loss functions. The Gaussian PINN converges much faster than all the other 3.}
    \label{fig:Navier-stokes_train_error_all}
\end{figure}

\subsection{Diffusion Equation}\label{app;diff_eqn}

We trained 6 different networks to fit the diffusion equation, see sec. \ref{sec;HFD}. The networks were, a standard Tanh-PINN \cite{raissi2019physics}, Locally adaptive activation PINN (L-LAAF-PINN), with a Gaussian activation, \cite{jagtap2020locally}, Random Fourier Feature PINN (RFF-PINN) \cite{wang2021eigenvector}, A PINNsformer \cite{zhao2023pinnsformer}, a Gaussian-activated PINN (G-PINN) and an Equilibrated Gaussian-activated PINN (EG-PINN). 

G-PINN and EG-PINN employed a Gaussian with variance $s^2 = 0.2^2$.

Sec. \ref{sec;HFD} gave the training/testing results showing that EG-PINN is superior over all other networks.

\begin{figure}[t]
    \centering
    \includegraphics[width=13.5cm, height=11cm]
    {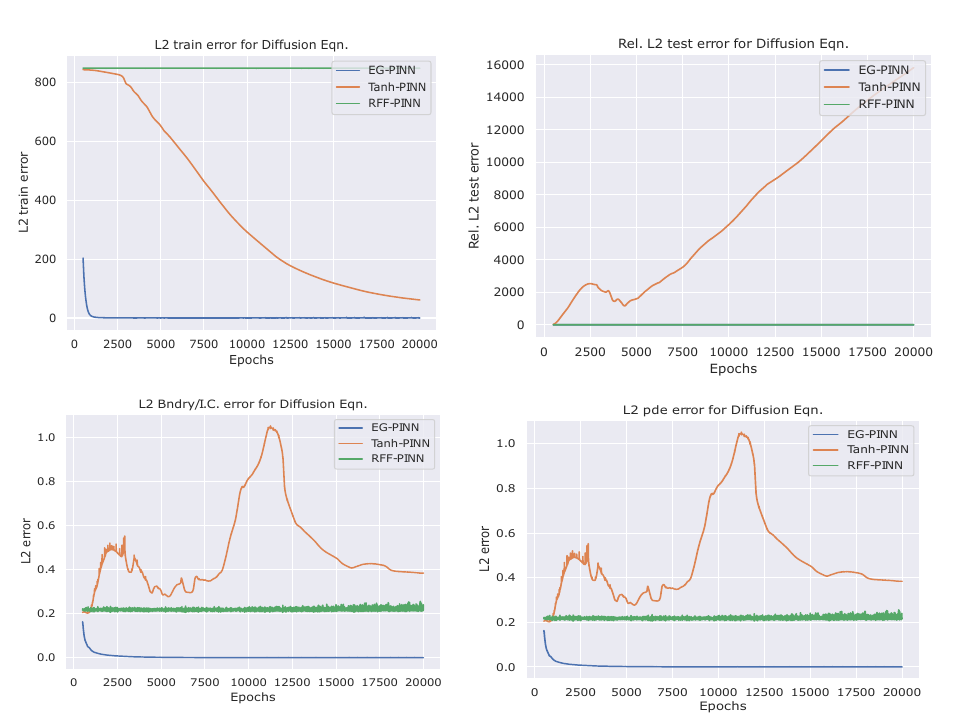}
    \caption{Train/test error on the Diffusion equation. EG-PINN converges extremely well while the other two struggle with all losses.}
    \label{fig:diffusion;other_rff}
\end{figure}

\begin{figure}[t]
    \centering
    \includegraphics[width=13.5cm, height=11cm]
    {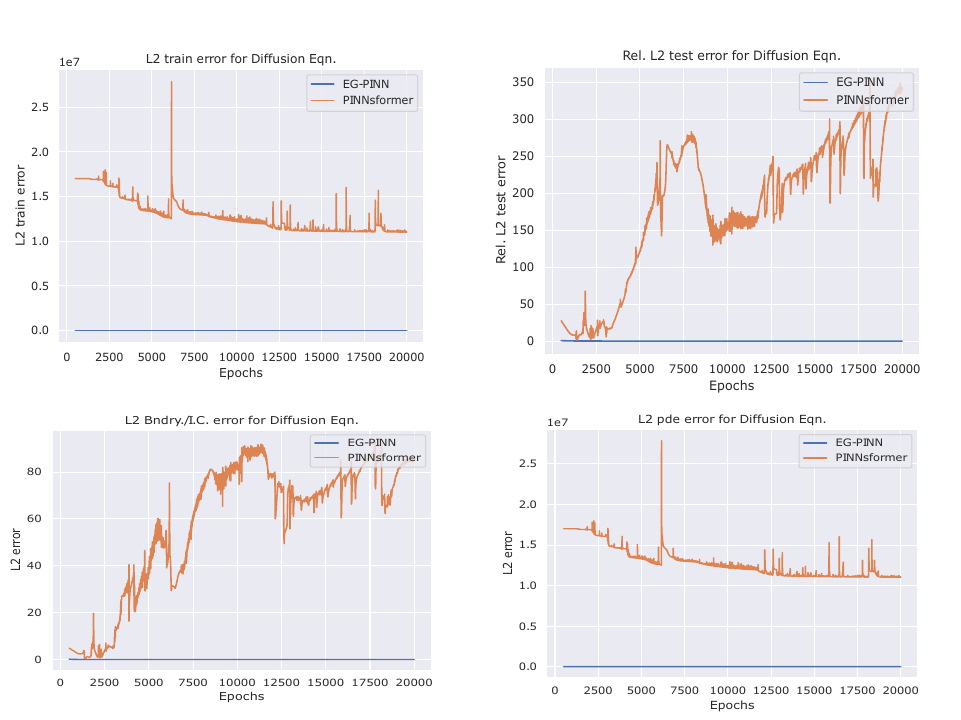}
    \caption{Train/test error on the Diffusion equation for an EG-PINN and a PINNsformer.}
    \label{fig:diffusion;pinnsformer_comp}
\end{figure}

\begin{figure}[t]
    \centering
    \includegraphics[width=13.5cm, height=11cm]
    {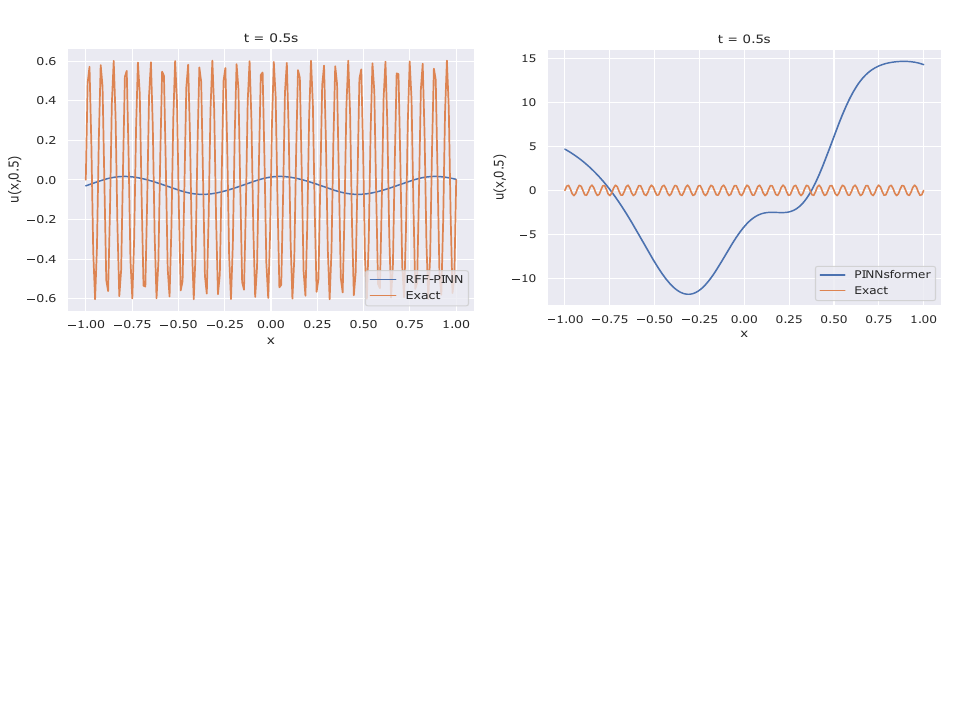}
    \vspace{-16em}
    \caption{Reconstruction of an RFF-PINN and a PINNsformer on the diffusion equation. Both networks are unable to find the right solution.}
    \label{fig:diffusion;rff_pinsform_recon}
\end{figure}

Fig. \ref{fig:diffusion;other_rff} shows the training/testing errors of EG-PINN against a RFF-PINN and a stock standard Tanh-PINN. While the Tanh-PINN seems to start to converge around 17000 epochs in, the RFF-PINN struggles. EG-PINN on the other hand starts to immediately converge soon after a few thousand epochs.

Fig. \ref{fig:diffusion;pinnsformer_comp} shows the training/testing errors of EG-PINN against a PINNsformer. The training/testing erros for the PINNsformer are extremely noisy and it was not showing any signs of convergence.

Fig. \ref{fig:diffusion;rff_pinsform_recon} shows the reconstruction of the RFF and PINNsformer networks. Both networks have struggled to find a solution to the PDE.

\subsection{Poisson's Equation}

We consider Poisson's Equation given by
\begin{align}
    -\Delta u &= 25\pi^2sin(5\pi) \text{ for } x \in [-1, 1] \\
    u(-1) &= u(1) = 0.
\end{align}
This PDE has a closed form solution given by $u(x) = sin(5\pi x)$.

We trained the same network frameworks from sec. \ref{sec;HFD}, however all networks had two hidden layers with 128 neurons, except for PINNsformer which had 9 hidden layers and 128 neurons in each. We then 
how each does in reconstructing the solution to the above PDE.
The above PDE is a low frequency PDE and standard PINN architectures do not have trouble finding an accurate solution. However, we will only trained each architecture for 5000 epochs thereby comparing whether each architecture can find a solution in a short number of iterations. We used 1000 PDE sample points and trained each PINN with the Adam optimizer, learning rate of 1e-4, on full batch.

Fig. \ref{fig:poisson_train_error}, shows the train/test errors for the three Gaussian-activated PINNs, G-PINN, EG-PINN and L-LAAF-PINN. From the figure we see that EG-PINN has converged the fastest. 
Table \ref{tab;poisson_results} shows the training/testing results for each PINN. The first three PINNs employ a Gaussian activation function and achieve significantly better results than the other three. Furthermore, amongst the top three, EG-PINN performs significantly better achieving at least $10^3$ lower relative test accuracy.

Figs. \ref{fig:poisson_train_error_eg}-\ref{fig:poisson_eg_vs_former} show the train/test errors for an EG-PINN against a Tanh-PINN and a PINNsformer. IN both cases the EG-PINN is able to converge in extremely less iterations while the Tanh-PINN and the PINNsformer struggle.

\begin{figure}[t]
    \centering
    \includegraphics[width=13.5cm, height=10cm]
    {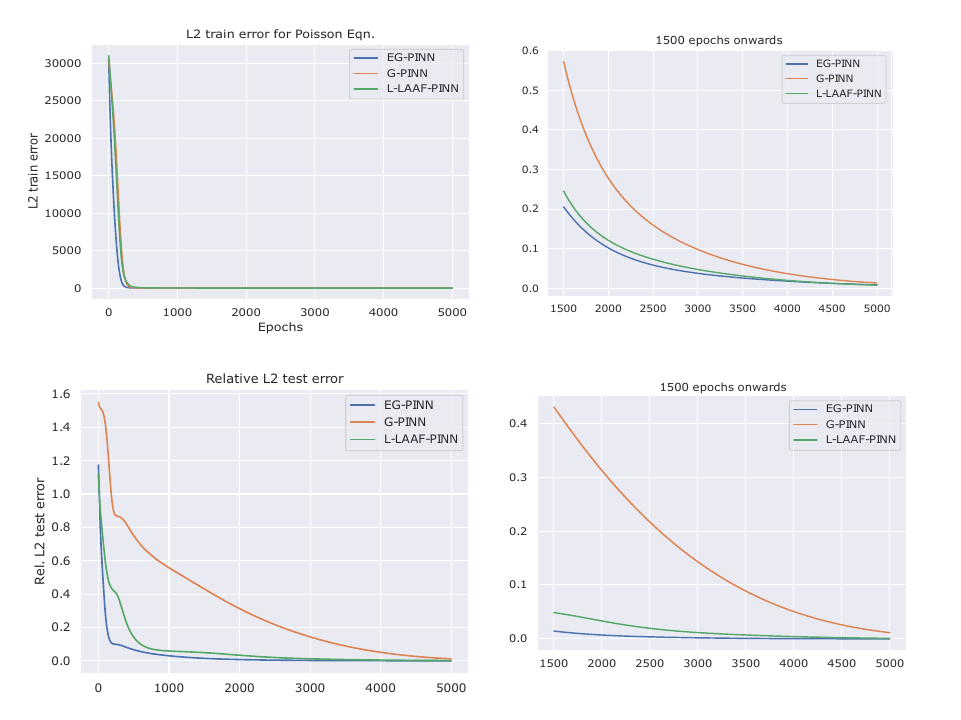}
    \caption{Train/test error for 3 Gaussian-activated networks on solving the Poisson problem. EG-PINN converges the fastest outperforming all other networks.}
    \label{fig:poisson_train_error}
\end{figure}

\begin{table}
\begin{center}
\begin{tabular}{|c |c |c |c |c |c |c |} 
 \hline
 {} & Train error & Rel. test error & Bndry condition error & Pde error & Time (s) \\ [0.5ex] 
 \hline
 G-PINN & $1.3e-2$ & $1.1e-2$ & $4.7e-3$ & $8.5e-3$ & $20.01$ \\ 
 \hline
 EG-PINN & $6.0e-4$ & $4.9e-7$ & $8.5e-7$ 
 & $6.1e-4$ & $98.95$ \\
 \hline
 L-LAAF-PINN & $7.9e-3$ & $9.0e-4$ & $1.3e-3$ 
 & $6.6e-3$ & $20.22$ \\
 \hline
 PINNsformer & $1.4e2$ & $2.4e0$ & $6.0e-4$ 
 & $1.4e2$ & $267$ \\
 \hline
 RFF-PINN & $3.0e4$ & $1.0e0$ & $1.8e-3$ 
 & $3.1e4$ & $39.25$ \\
 \hline
 Tanh-PINN & $9.3e2$ & $4.8e1$ & $3.1e1$ 
 & $9.0e2$ & $13.78$ \\
 \hline
\end{tabular}
\caption{\label{tab;poisson_results} Training/Testing results for Poisson's equation. For each of the accuracy measures, EG-PINN achieves at least 10 times lower loss value when compared to many of the others.}
\end{center}
\end{table}

\begin{figure}[t]
    \centering
    \includegraphics[width=13.5cm, height=10cm]
    {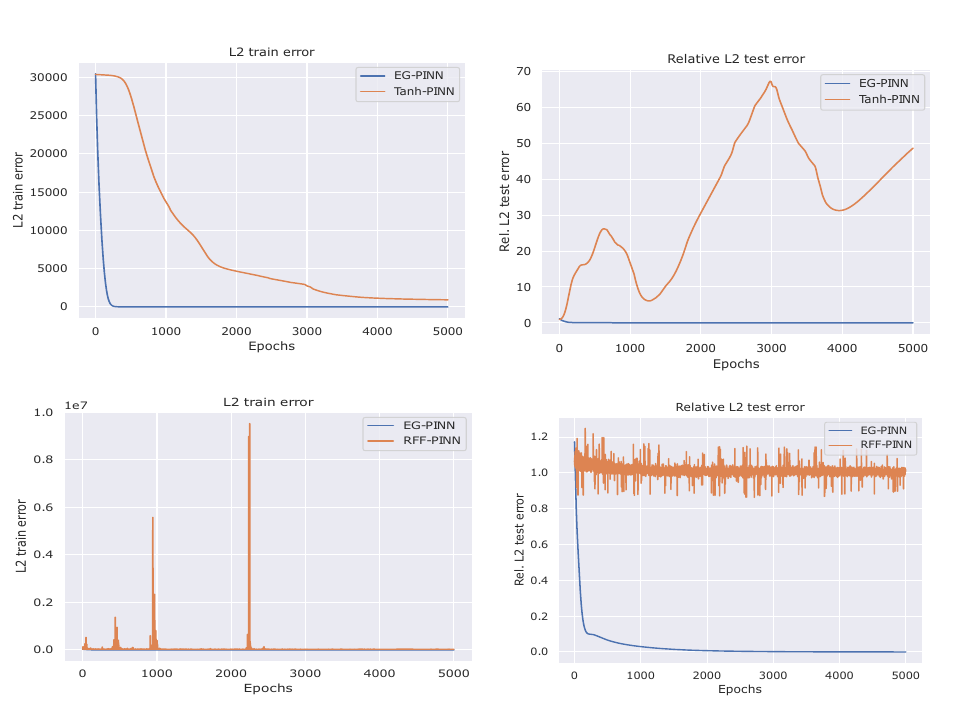}
    \caption{Train/test error for an EG-PINN vs a Tanh-PINN. The EG-PINN outperforms the Tanh-PINN dramatically.}
    \label{fig:poisson_train_error_eg}
\end{figure}

\begin{figure}[t]
    \centering
    \includegraphics[width=13.5cm, height=10cm]
    {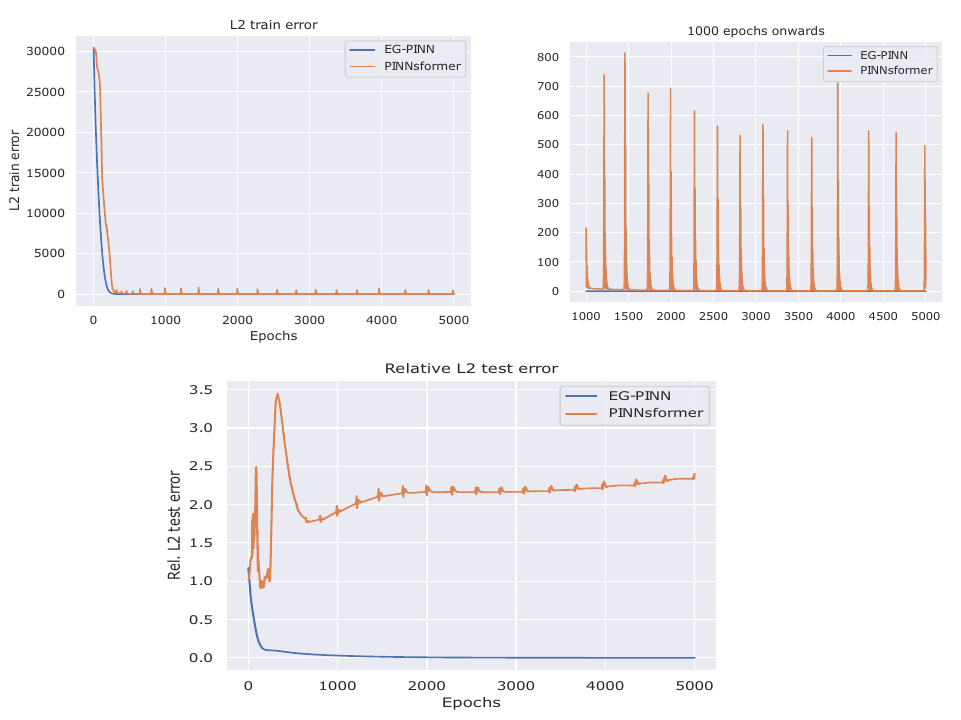}
    \caption{Train/test error for an EG-PINN vs a PINNsformer. The EG-PINN outperforms the PINNsformer dramatically.}
    \label{fig:poisson_eg_vs_former}
\end{figure}

\begin{figure}[t]
    \centering
    \includegraphics[width=13.5cm, height=10cm]
    {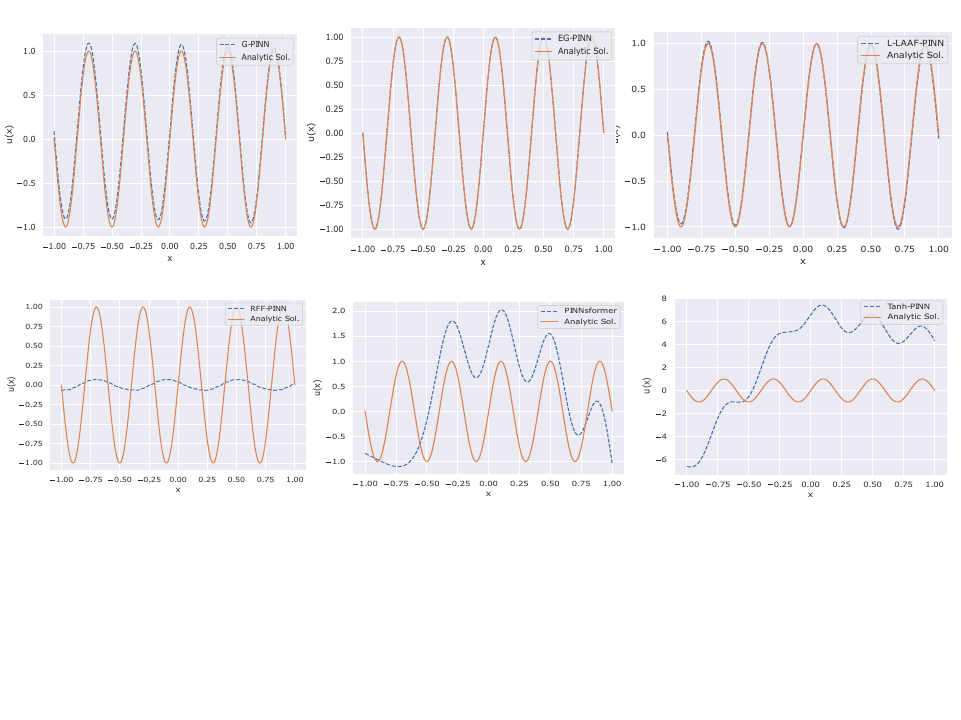}
    \vspace{-7em}
    \caption{Reconstruction of all networks after training for only 5000 epochs to solve Poisson's equation. EG-PINN has found an extremely good fit in such a small number epochs. G-PINN and L-LAAF have found a decent fit and RFF-PINN, PINNsformer and Tanh-PINN seem to have found only a low frequency solution. }
    \label{fig:poisson_recon}
\end{figure}

You may include other additional sections here.

\end{document}